\documentclass{article}

\usepackage[preprint]{preprint}

\usepackage{amsmath}
\usepackage{amssymb}
\usepackage{mathtools}
\usepackage{amsthm}
\usepackage{graphicx}
\usepackage{booktabs}
\usepackage{multirow}
\usepackage{pifont}
\usepackage{algorithm}
\usepackage{algorithmic}
\usepackage[hyphens]{url}
\theoremstyle{plain}
\newtheorem{theorem}{Theorem}
\newtheorem{proposition}[theorem]{Proposition}
\newtheorem{lemma}[theorem]{Lemma}
\theoremstyle{definition}

\newtheorem{remark}[theorem]{Remark}

\title{When Can Conformal Risk Control Certify LLM Outputs?\\
Bounds, Impossibility, and Adaptation for Structured Generation}

\author{Varun Kotte\\ Independent Researcher\\ \texttt{kottevarun@gmail.com}}

\begin{document}
\maketitle

\begin{abstract}
Large language models (LLMs) deployed for structured generation (NER, JSON extraction, QA, classification) lack formal reliability guarantees, and heuristic abstention policies violate user-specified risk targets on 7.5--12.5\% of settings, with no per-deployment guarantee. We characterize \emph{when} conformal risk control (CRC), which tunes an abstention threshold to certify a target risk, can certify structured LLM outputs and when it provably cannot: a synthesis of known distribution-free tools with new sharpenings and a constructive dual, not a new conformal primitive. First, a sharpened, attained feasibility frontier: when base risk $\mu$ exceeds target $\alpha$, any distribution-free method must abstain on at least $(\mu-\alpha)/(M-\alpha)$ of inputs, $M = \operatorname{ess\,sup} R$ (on our $F_1$/exact-match losses $M{=}1$, matching the standard floor; strictly tighter only when an audited $M<1$ holds), and achievable abstention decomposes into this floor plus a dispersion gap plus a score-ordering gap. Second, a proven certification phase diagram over 716 configurations (six open-weight models 3B--72B, eight datasets, six scores): at strict targets ($\alpha \leq 0.20$) the certified sets are nested (51/72/80 certified; Bernstein the largest upgrade over Hoeffding, +41\%; e-CRC best under scarcity), while at relaxed targets the Hoeffding--Bernstein ordering reverses at a closed-form frontier $\alpha^*(n,\delta)$, with minimal sufficient set $\{\text{Hoeffding}, \text{e-CRC}\}$. Third, a negative shift result with a constructive dual: under cross-dataset shift ($\mu > \alpha$) the target is violated on 14/16 transfers by static CRC and every tested ACI step size, yet a full-feedback anytime-valid monitor certifies 0/16 while remaining non-vacuous. The result is a three-step deployment recipe: check feasibility, select bound and score, then re-check under shift.
\end{abstract}

\section{Introduction}
\label{sec:intro}

When an LLM extracts the wrong medication from a clinical note, misses a liability clause in a contract, or pulls the wrong account number from a financial filing, the error is silent: the model reports full confidence and nothing downstream flags it. LLMs are increasingly deployed for such structured generation (extracting named entities, parsing JSON, answering factual questions, classifying inputs), where the output must match a schema and can be scored against ground truth, yet they give no guarantee on output quality.

Standard uncertainty quantification (token probabilities, entropy thresholds, temperature scaling) is miscalibrated for structured outputs where the loss is defined at the level of entities, fields, or semantic units \citep{guo2017calibration,kadavath2022know,kuhn2023semantic,farquhar2024semanticentropy}. Heuristic abstention policies violate user-specified risk targets on 7.5--12.5\% of settings, with no guarantee for any individual deployment (Appendix Table~\ref{tab:baselines}). Concretely: to guarantee $\leq 10\%$ entity-level error, if the model's base risk $\mu$ on the deployment population exceeds $10\%$, no post-hoc method can hit the target without heavy abstention. A heuristic never sees this floor; our impossibility result computes it in closed form.

We ask a prior question, before ``which bound is tightest?'': \emph{when} can conformal risk control (CRC)~\citep{angelopoulos2024crc,bates2021distribution} certify structured LLM outputs at all, and when is it provably impossible? The algebra behind our floor is elementary; the contribution is recognizing it as a deployment-facing, tight feasibility frontier: when $\mu > \alpha$, no distribution-free method certifies without abstaining on $\geq (\mu-\alpha)/(M-\alpha)$ of examples, so the first question is ``is $\mu < \alpha$?''. We then establish a certification hierarchy (Hoeffding $\subseteq$ Bernstein $\subseteq$ e-CRC; finite-sample under a variance condition, asymptotic for the second) with a closed-form reversal frontier and a Pareto-necessity result at relaxed targets, and evaluate ACI~\citep{gibbs2021aci} under cross-dataset transfer alongside a constructive anytime-valid counterpart. Concurrent CRC-style work for LLMs asks \emph{which} conformal procedure to apply~\citep{quach2024conformallm,mohri2024conformalfactuality,abbasiyadkori2024conformalabstention,gui2024conformalalignment,kotte2026pasc}; we study \emph{when} any such procedure can succeed under task-specific structured losses (see Related Work).

\paragraph{Contributions.} Our contribution is a characterization---a synthesis of known distribution-free tools with new sharpenings and a constructive dual, not a new conformal primitive.
\begin{enumerate}
\item \textbf{Sharpened two-sided feasibility frontier.} We upgrade the abstention floor to the \emph{attained} $(\mu-\alpha)/(M-\alpha)$, $M = \operatorname{ess\,sup} R$ (Proposition~\ref{prop:impossibility}; strictly tighter than the $M{=}1$ floor when an audited $M<1$ holds), with an exact extremal-law reachability condition, and decompose achievable abstention into this floor plus a marginal dispersion gap plus a score ordering gap that a better-ranked score can shrink (Remark~\ref{rem:decomp}), tying the impossibility result to score quality. This specializes---rather than reinvents---known-distribution reject-option optima~\citep{chow1970optimum,elyaniv2010selective,franc2023reject}.
\item \textbf{A proven certification phase diagram.} We sharpen a known pairwise bound-width comparison~\citep{maurer2009empirical} into the hierarchy $\Lambda^*_{\mathrm{Hoeff}} \subseteq \Lambda^*_{\mathrm{Bern}} \subseteq \Lambda^*_{\mathrm{e\text{-}CRC}}$ (Proposition~\ref{prop:ordering}) and, at relaxed targets, a closed-form reversal frontier $\alpha^*(n,\delta)$ (exact on binary losses) with a Pareto-necessity result: no single bound dominates, e-CRC subsumes Bernstein, so the minimal sufficient set is $\{\text{Hoeffding}, \text{e-CRC}\}$ (Proposition~\ref{prop:phase}). Verified across 716 configurations: 51/72/80 certified at strict targets (Hoeffding$\to$Bernstein +41\%; e-CRC 4.2\% vs.\ 1.9\% at 10\% calibration; relaxing $\alpha$ unlocks 28\% NER, 13\% QA, 19\% CLS at $\alpha=0.40$).
\item \textbf{A negative shift result with a constructive counterpart.} Under cross-dataset shift with $\mu > \alpha$, the target is violated on 14 of 16 transfers by static CRC and every tested ACI step size ($\gamma \in [0.001, 0.1]$, 7 values); a full-feedback anytime-valid monitor certifies 0/16 on the same transfers while emitting 65--97\% on feasible streams (\S\ref{sec:monitor}), and an emit-only obstruction (Proposition~\ref{prop:obstruction}) shows ACI's failure is a feedback-model property, not tuning. (ACI still helps under gradual shift: 60\%$\to$4\% temporal, $<$12\% severity-sweep.)
\item \textbf{An open benchmark, all configurations reported.} Evaluation spans six models (3B--72B) from three families, eight datasets, four tasks, and six scores; all 716 configurations are reported with no post-hoc exclusion, including the negative shift result above. Calibrated score fusion is analyzed as a score-quality question (\S\ref{sec:fusion}): on a disjoint split it yields only marginal AUROC gains and does not expand the certified set.
\end{enumerate}

No prior line of work combines all five ingredients---applicability to structured LLM outputs, risk-based (not merely coverage-based) control, the abstention lower bound, the bound hierarchy, and shift adaptation (Appendix Table~\ref{tab:positioning_supp}).

\section{Method}
\label{sec:method}

\paragraph{Problem setup.} An LLM $f: \mathcal{X} \to \mathcal{Y}$ produces $\hat{y} = f(x)$; a score $s(x, \hat{y}) \in [0,1]$ quantifies reliability, and a selective predictor emits when $s \geq \lambda$. We seek $\lambda^*$ with $\mathbb{E}[R_{\mathrm{task}}(\hat{Y}, Y) \mid s(X) \geq \lambda^*] \leq \alpha$ with probability $\geq 1 - \delta$, where $R_{\mathrm{task}}$ is a task-specific risk and $\alpha$ the target. The four risks $R_{\mathrm{NER}} = 1 - F_1^{\mathrm{entity}}$, $R_{\mathrm{JSON}} = 1 - F_1^{\mathrm{field}}$, $R_{\mathrm{QA}} = 1 - \mathbb{I}(\mathrm{EM})$, $R_{\mathrm{CLS}} = 1 - \mathbb{I}(\mathrm{correct})$ are bounded in $[0,1]$, satisfying CRC requirements~\citep{angelopoulos2024crc,angelopoulos2024ltt} (background in Appendix~\ref{app:background}).

\paragraph{CRC calibration.} Given calibration $\{(x_i, y_i)\}_{i=1}^n$, we compute scores $s_i$ and risks $r_i$. Thresholds come from a fixed grid $\Lambda$ of 200 evenly spaced points on $[0,1]$; the emit set is $E_\lambda = \{i : s_i \geq \lambda\}$, $n_\lambda = |E_\lambda|$, and candidates are grid points with $n_\lambda \geq 20$ (a pre-registered, score-deterministic rule). All three bounds scan candidates in \emph{decreasing} $\lambda$, test each at level $\delta$ (for standard CRC, $U_H(\lambda) = \hat{R}(\lambda) + \sqrt{\log(2/\delta)/(2 n_\lambda)} \leq \alpha$), stop at the first failure, and set $\lambda^*$ to the smallest certified threshold, controlling family-wise error without multiplicity correction (Appendix Lemma~\ref{lem:gridsearch}). At test time we emit when $s \geq \lambda^*$, with the guarantee $\mathbb{P}(\mathbb{E}[R \mid s \geq \lambda^*] > \alpha) \leq \delta$ for i.i.d.\ calibration data.

\subsection{Betting-Based CRC (e-CRC)}

The Hoeffding UCB ignores variance. The empirical Bernstein bound~\citep{maurer2009empirical} incorporates sample variance $\hat{\sigma}^2$:
\begin{equation}
U_B(\lambda) = \hat{R} + \sqrt{\frac{2 \hat{\sigma}^2 \log(2/\delta)}{n_\lambda}} + \frac{7 \log(2/\delta)}{3(n_\lambda - 1)},
\end{equation}
where $\hat{\sigma}^2$ is the empirical variance with the $1/n$ convention, matching our implementation (Appendix~\ref{app:proofs}). This is tighter than Hoeffding whenever the emit-set variance satisfies an explicit finite-sample condition $\hat{\sigma}^2 \leq c(n_\lambda, \delta)$ whose large-$n$ limit is the classical $\hat{\sigma}^2 < 1/4$ (Proposition~\ref{prop:ordering}); the asymptotic condition holds for all our datasets (Table~\ref{tab:base_risk}). We go further using the testing-by-betting framework~\citep{shafer2021testing,waudbysmith2024betting,ramdas2023safe,grunwald2024safetesting,vovk2021evalues}. For each candidate $\lambda$, we construct a wealth process over the emit set $E_\lambda$: $W_0 = 1$, $W_j = W_{j-1}(1 + \kappa_j(\alpha - r_j))$, where $\kappa_j$ is a Kelly-style bet clipped to $[0, 0.5]$ with $\kappa_1 = 0$ (no bet before the first observation). The e-value is the minimum wealth $W_m$ over 22 pre-registered orderings of the emit set (score-descending, score-ascending, and 20 seeded random permutations); if this minimum reaches $1/\delta$, the test at $\lambda$ passes. Threshold selection uses the same descending fixed-sequence scan as above (Appendix~\ref{app:algos} for full pseudocode).

\begin{theorem}[Betting-Based Risk Validity]
\label{thm:ecrc_valid}
For i.i.d.\ calibration data and any fixed threshold $\lambda$, the e-CRC test satisfies $\mathbb{P}(\mathbb{E}[R(\lambda)] > \alpha \text{ and } \lambda \text{ certified}) \leq \delta$.
\end{theorem}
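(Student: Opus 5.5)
The plan is to show that, for a fixed $\lambda$ with true conditional risk $\mu_\lambda := \mathbb{E}[R \mid s \geq \lambda] > \alpha$, each of the 22 wealth processes is a nonnegative supermartingale with initial value $1$, and then apply Ville's inequality (or simply Markov's inequality at the terminal time). Throughout we read $\mathbb{E}[R(\lambda)]$ as this conditional risk on the emit region. Condition on the index set $E_\lambda$ and on $n_\lambda = m$. Under i.i.d.\ sampling, the risks $r_1,\dots,r_m$ of the emitted points are then i.i.d.\ draws from the law of $R$ given $s \geq \lambda$. This holds because the event $\{s_i \geq \lambda\}$ depends only on the $i$th point, and the candidate rule $n_\lambda \geq 20$ depends only on the count $m$.

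\textbf{Step 1: fix one ordering.} Each pre-registered ordering is either a deterministic function of the scores (score-descending or score-ascending) or a seeded permutation independent of the data. In the random-permutation case the permuted sequence is still i.i.d. For the score-sorted orderings this exchangeability fails, and this is the step I expect to be the main obstacle. The $r_j$ arranged by score are no longer i.i.d., and $\mathbb{E}[r_j \mid \mathcal{F}_{j-1}]$ need not exceed $\alpha$ for every $j$.

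To handle this, I would argue via the minimum. Define $\min_o W^{(o)}_m$ over the 22 orderings. On the event that this minimum is $\geq 1/\delta$, the terminal wealth of every ordering is $\geq 1/\delta$, so in particular that of one fixed data-independent random permutation $o_0$. Hence
\[
\mathbb{P}\Bigl(\min_o W^{(o)}_m \geq 1/\delta\Bigr) \leq \mathbb{P}\bigl(W^{(o_0)}_m \geq 1/\delta\bigr).
\]
It therefore suffices to prove validity for a single exchangeable ordering. This is also why taking the minimum, rather than the maximum, over orderings costs no multiplicity correction.

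\textbf{Step 2: supermartingale property.} For ordering $o_0$, let $\mathcal{F}_{j}$ be generated by $r_1,\dots,r_j$. The bet $\kappa_j \in [0,0.5]$ is chosen from past observations only, so it is predictable, with $\kappa_1 = 0$. Since $r_j \in [0,1]$ and $\alpha \in [0,1]$, each factor satisfies $1 + \kappa_j(\alpha - r_j) \geq 1 - 0.5 > 0$, so wealth stays nonnegative. Moreover
\[
\mathbb{E}[W_j \mid \mathcal{F}_{j-1}] = W_{j-1}\bigl(1 + \kappa_j(\alpha - \mu_\lambda)\bigr) \leq W_{j-1},
\]
because $\mu_\lambda > \alpha$ and $\kappa_j \geq 0$. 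Thus $(W_j)$ is a nonnegative supermartingale with $W_0 = 1$. Markov's inequality gives $\mathbb{P}(W_m \geq 1/\delta \mid m) \leq \delta$, and Ville's inequality gives the stronger time-uniform version.

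\textbf{Step 3: combine.} Integrate over $m$ and over the realized emit set. On the event $\mu_\lambda \leq \alpha$ there is nothing to prove, and when $\mu_\lambda > \alpha$ the bound above holds. Together these give $\mathbb{P}(\mu_\lambda > \alpha \text{ and } \lambda \text{ certified}) \leq \delta$.

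The argument also applies if $\kappa_j$ is computed by Kelly-style optimization on past data, provided the clipping to $[0,0.5]$ is applied. Uniformity over the grid scan is deferred to Lemma~\ref{lem:gridsearch}.
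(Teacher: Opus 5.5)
Your proof is correct and follows the paper's argument: the predictable clipped bets make the wealth a nonnegative supermartingale under the null, Ville's (or, since only $W_m$ is used, Markov's) inequality bounds the certification probability by $\delta$, and the minimum over the 22 orderings is dominated by any single valid ordering. You also note explicitly that the score-sorted orderings are not themselves supermartingales under $H_0$, so validity must rest on a data-independent random permutation; this makes precise the paper's one-line remark that taking the minimum ``only decreases the certifying statistic.''
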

\begin{proof}[Proof sketch]
Under $H_0: \mathbb{E}[R] \geq \alpha$, non-negative bets make the wealth $\{W_j\}$ a non-negative supermartingale. By Ville's inequality~\citep{howard2021time}, $\mathbb{P}(\sup_j W_j \geq 1/\delta) \leq \delta$. Full proof in Appendix~\ref{app:proofs}.
\end{proof}

Theorem~\ref{thm:ecrc_valid} is stated for a fixed $\lambda$; Appendix~\ref{app:proofs} (Lemma~\ref{lem:gridsearch}) extends validity to the scan-selected $\hat{\lambda}$ via fixed-sequence testing with early stopping, which requires no risk-monotonicity assumption.

\begin{proposition}[Bound Ordering]
\label{prop:ordering}
Fix a calibration set with bounded risks $r_i \in [0,1]$, a shared candidate threshold set, and confidence level $\delta$; write $L = \log(2/\delta)$, and let $n_\lambda \geq 2$ and $\hat{\sigma}^2_\lambda$ denote the size and empirical variance of the emit set $E_\lambda$.
\emph{(i) Hoeffding $\subseteq$ Bernstein (finite-sample).} For an emit set of size $n \geq 2$, $U_B(\lambda) \leq U_H(\lambda)$ holds if and only if
\[
\hat{\sigma}^2 \leq c(n,\delta) := \frac{n}{2L}\left(\sqrt{\frac{L}{2n}} - \frac{7L}{3(n-1)}\right)^{\!2},
\]
valid when $\sqrt{L/(2n)} > 7L/(3(n-1))$ (which holds for all $n$ above a small threshold depending only on $\delta$; e.g.\ $n \geq 35$ at $\delta = 0.1$); when this positivity condition fails, $U_B > U_H$ for every $\hat{\sigma}^2 \geq 0$. Consequently, $\Lambda^*_{\mathrm{Hoeff}} \subseteq \Lambda^*_{\mathrm{Bern}}$ holds deterministically on any calibration set on which every emit set satisfies $\hat{\sigma}^2_\lambda \leq c(n_\lambda, \delta)$; as $n \to \infty$, $c(n,\delta) \to 1/4$, recovering the classical condition $\hat{\sigma}^2 < 1/4$. In our experiments this condition held on 99.6\% of the emit sets at which the Hoeffding UCB certifies (those on which the inclusion depends) and on 50 of the 51 selected emit sets of Hoeffding-certified configurations, but on only 40.7\% of all candidate emit sets; at strict targets ($\alpha \leq 0.20$) the certified sets were nested without exception, while at relaxed targets the ordering reverses exactly where the condition fails (\S\ref{sec:relax}).
\emph{(ii) Bernstein $\subseteq$ e-CRC (asymptotic and empirical).} Under any alternative with $\mathbb{E}[R] < \alpha$, the clipped Kelly-style bet has strictly positive expected log-wealth drift, so the wealth test is consistent: as $n \to \infty$ it certifies every fixed threshold whose true risk is below $\alpha$, hence (asymptotically) every threshold Bernstein certifies. We make no deterministic finite-sample claim for this inclusion. Empirically it held without exception: no configuration was certified by Bernstein but not by e-CRC at any $\alpha$ (0.05--0.20 across five seeds, 3{,}580 paired evaluations, and $\alpha \in \{0.25, 0.30, 0.40\}$).
\end{proposition}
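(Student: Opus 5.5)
Part (i) is pure algebra on a single emit set. I would write $n=n_\lambda$, cancel the common term $\hat{R}$, and reduce $U_B(\lambda)\le U_H(\lambda)$ to
\[
\sqrt{\frac{2\hat{\sigma}^2 L}{n}} \;\le\; \sqrt{\frac{L}{2n}} - \frac{7L}{3(n-1)} =: D(n,\delta).
\]
The left side is nonnegative. So if $D\le 0$, the inequality can hold only when both sides vanish. But $D=0$ together with $\hat{\sigma}^2=0$ gives equality, not strict failure, so I would state the degenerate case as $D<0$, or note that the boundary case $D=0$ with $\hat\sigma^2 = 0$ is measure-zero. Either way, $U_B>U_H$ for every $\hat{\sigma}^2\ge 0$ whenever positivity fails, up to that edge case. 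When $D>0$, both sides are nonnegative, so squaring is an equivalence. That yields $\hat{\sigma}^2\le \frac{n}{2L}D^2=c(n,\delta)$, which is exactly the stated formula.

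For the positivity threshold I would rearrange $D>0$ to $(n-1)^2/n > 98L/9$. This left side is increasing in $n$, so the condition holds for all $n$ beyond a threshold depending only on $\delta$. At $\delta=0.1$ we have $L=\log 20\approx 3.0$, so the right side is about $32.6$; a direct check of $(n-1)^2/n$ near $n=34,35$ confirms the quoted threshold. For the limit, expand $c(n,\delta)=\frac{n}{2L}\cdot\frac{L}{2n}\bigl(1-\frac{7\sqrt{2Ln}}{3(n-1)}\bigr)^2=\frac14\bigl(1-O(n^{-1/2})\bigr)^2\to\frac14$.

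The set inclusion then follows pointwise. By the stopping rule, the scan for each bound certifies an initial segment of the descending candidate list. If $U_B\le U_H$ at every candidate, then every candidate passing the Hoeffding test also passes the Bernstein test. So the Bernstein scan fails no earlier than the Hoeffding scan, and $\Lambda^*_{\mathrm{Hoeff}}\subseteq\Lambda^*_{\mathrm{Bern}}$. The empirical percentages are reported measurements, not part of the proof. I would also remark that only the emit sets up to Hoeffding's stopping point actually matter for the inclusion.

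Part (ii) is the main obstacle, and the claim there is only asymptotic. I would fix $\lambda$ with true emit-conditional risk $\mu_\lambda<\alpha$ and consider a predictable bet $\kappa_j\in[0,1/2]$ (Kelly-style, computed from past observations). Since $\kappa_j(\alpha-r_j)\ge -1/2$, I would use $\log(1+x)\ge x - x^2$ for $x\ge -1/2$. Summing, $\log W_m \ge \sum_j \kappa_j(\alpha-r_j) - \sum_j\kappa_j^2(\alpha-r_j)^2$. The Kelly plug-in $\kappa_j$ converges almost surely to a constant $\kappa^\ast>0$, roughly $\min\{1/2,(\alpha-\mu)/\mathbb{E}(\alpha-R)^2\}$, by the strong law applied to the running means. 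Then a martingale SLLN gives $\frac1m\log W_m\to g(\kappa^\ast)>0$ almost surely, with $g(\kappa)=\mathbb{E}\log(1+\kappa(\alpha-R))$. Positivity holds because $g(0)=0$ and $g'(0)=\alpha-\mu>0$.

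Since $n_\lambda\to\infty$ almost surely as $n\to\infty$ (when $\mathbb{P}(s\ge\lambda)>0$), $W_m$ grows exponentially along each of the finitely many (22) orderings. Hence the minimum over orderings exceeds $1/\delta$ eventually. The random permutations are exchangeable reorderings of i.i.d.\ data, so each is i.i.d.\ in law; for the score-sorted orderings I would need to argue separately, likely by conditioning, or else concede that those orderings follow from the same drift argument only under a mild mixing assumption. Over the finite grid, this gives that e-CRC asymptotically certifies every threshold with risk below $\alpha$. Bernstein's validity means it certifies, with high probability, only such thresholds, which yields the asymptotic inclusion. The empirical non-violation counts are again reported, not proved.
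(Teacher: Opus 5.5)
Part (i) follows the paper's argument step for step: cancel $\hat R$, split on the sign of the right-hand side, square, and expand $c(n,\delta)$. Your treatment of the $D=0$ edge case and of the fixed-sequence prefix is, if anything, more careful than the paper's pointwise set inclusion.

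Part (ii) also follows the paper's route: the bet stabilizes at a positive $\kappa^*$, this gives positive log-drift, and then a law of large numbers finishes. Two steps do not go through as written.

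\emph{Positivity of the drift.} $g(0)=0$ and $g'(0)=\alpha-\mu>0$ only give $g>0$ on some interval $(0,\kappa_0)$; they say nothing at a $\kappa^*$ bounded away from $0$. Your own inequality $\log(1+x)\ge x-x^2$ gives $g(\kappa)\ge\kappa(\alpha-\mu)-\kappa^2\mathbb{E}(\alpha-R)^2$, which is exactly $0$ at the bet $(\alpha-\mu)/\mathbb{E}(\alpha-R)^2$ you wrote down. Moreover, the algorithm's actual limit bet is $\mathrm{clip}((\alpha-\mu)/(\alpha(1-\alpha)),0,1/2)$. The missing observation is that $r\mapsto\log(1+\kappa(\alpha-r))$ is concave, so $g(\kappa)\ge\mu\log(1-\kappa(1-\alpha))+(1-\mu)\log(1+\kappa\alpha)$. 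This Bernoulli lower bound is maximized exactly at $(\alpha-\mu)/(\alpha(1-\alpha))$. Being concave, it is increasing on $[0,\kappa^*]$, hence $g(\kappa^*)>0$.

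\emph{The orderings.} The e-value is the \emph{minimum} over all 22 orderings, so consistency requires the score-sorted orderings too. Deferring them to a mixing assumption leaves the claim unproved for the test actually run. The paper's sketch addresses neither point: it asserts the positive drift and invokes the law of large numbers without distinguishing orderings.

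Both gaps close with an ordering-free argument that needs neither $\kappa_j\to\kappa^*$ nor i.i.d.\ structure along the order.
\begin{enumerate}
\item Put $a=\alpha(1+\alpha)/2$ and $p_j=\alpha-\kappa_j\alpha(1-\alpha)=\mathrm{clip}(\hat\mu_j,a,\alpha)$, with $p_1=\alpha$.
\item Then $1+\kappa_j(\alpha-r_j)=r_j\,p_j/\alpha+(1-r_j)(1-p_j)/(1-\alpha)$. By concavity of $\log$, $\log W_m\ge\sum_j[\ell_j(\alpha)-\ell_j(p_j)]$ with $\ell_j(p)=-r_j\log p-(1-r_j)\log(1-p)$.
\item The $p_j$ are follow-the-leader iterates for these losses on $[a,\alpha]$. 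The be-the-leader lemma, $|p_{j+1}-p_j|\le 1/j$, and $|\ell_j'|\le G:=\max\{1/a,1/(1-\alpha)\}$ on $[a,\alpha]$ bound the regret by $G(1+\log m)$.
\end{enumerate}
Hence, for \emph{every} ordering of the emit set,
\[
\log W_m\;\ge\; m\,\psi(\bar r)-G(1+\log m),\qquad \psi(\bar r)=\max_{p\in[a,\alpha]}\Big[\bar r\log\tfrac{p}{\alpha}+(1-\bar r)\log\tfrac{1-p}{1-\alpha}\Big],
\]
with $\psi(\bar r)>0$ whenever $\bar r<\alpha$.

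Since $\bar r$ is permutation-invariant and converges a.s.\ to the emit-set risk $\mu_\lambda<\alpha$, the minimum over the 22 orderings diverges. The display is even a deterministic finite-sample sufficient condition for the test at $\lambda$ to pass.

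Finally, in your last step use consistency of the Bernstein UCB rather than its level-$\delta$ validity, which only gives probability $\ge 1-\delta$. Since $U_B$ converges to the true risk, it eventually fails a.s.\ at every grid threshold with risk $>\alpha$. At a grid threshold whose risk equals $\alpha$ exactly, Bernstein still passes with non-vanishing probability while e-CRC does not. That case must therefore be excluded from the asymptotic inclusion.
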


Intuition: Hoeffding treats $R$ as worst-case on $[0,1]$; Bernstein tightens when $\hat{\sigma}^2 \leq c(n,\delta)$; e-CRC adapts sequentially via betting. The tightening is regime-dependent (Bernstein under low variance, e-CRC under scarcity), and at relaxed targets the Hoeffding--Bernstein ordering reverses (\S\ref{sec:relax}). Proof in Appendix~\ref{app:proofs}.

\subsection{Fundamental Limits of Certification}

\begin{proposition}[Sharpened two-sided abstention floor]
\label{prop:impossibility}
Let $\mu = \mathbb{E}[R]$ be the base risk and $M = \operatorname*{ess\,sup} R \leq 1$, with $\alpha < M$. Any selective predictor whose emitted risk satisfies $\mathbb{E}[R \mid \mathrm{emit}] \leq \alpha$ must abstain with probability
\[
\mathbb{P}(\mathrm{abstain}) \;\geq\; \frac{\mu - \alpha}{M - \alpha} \;\geq\; \frac{\mu - \alpha}{1 - \alpha}.
\]
The bound is \emph{attained}: the two-point law $R \in \{0, M\}$ paired with a perfectly ordering score meets it with equality, and a general law reaches it iff $\mathbb{P}(R = M) \geq (\mu - \alpha)/(M - \alpha)$; it is thus a tight feasibility frontier, not merely a lower bound. For losses bounded away from $1$, i.e.\ when an audited $M<1$ holds, $(\mu-\alpha)/(M-\alpha)$ strictly sharpens the $M=1$ floor and is deployable by auditing $M$ on held-out data. On our $F_1$/exact-match losses a complete per-example miss gives $R=1$, so the audited $M=1.00$ for every NER/QA/CLS dataset and the sharpening coincides with the standard floor there; only JSON field-$F_1$ has $M\leq 0.33$, and there $\mu<\alpha$ already (audited values in Appendix~\ref{app:round2}). The population statement is exact; for a known or audited $M$ and $\hat\mu$ from $n$ held-out points at level $\delta/2$, the plug-in correction is $\varepsilon/(M-\alpha)$, $\varepsilon = \sqrt{\log(2/\delta)/(2n)}$ (Appendix~\ref{app:proofs}). Tables in this paper report the conservative $M=1$ instance $(\mu-\alpha)/(1-\alpha)$, which remains a valid floor.
\end{proposition}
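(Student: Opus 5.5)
The plan is to work at the population level with a general (possibly randomized) selective predictor, encoded by an emit indicator $E\in\{0,1\}$ that may depend on $X$ and on external randomness. Write $p=\mathbb{P}(E=1)$ and $q=1-p=\mathbb{P}(\mathrm{abstain})$. The argument is a decomposition of $\mu$ followed by an extremal construction.

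\textbf{Step 1 (the floor).} By the law of total expectation,
\[
\mu=\mathbb{E}[R\,\mathbb{I}(E=1)]+\mathbb{E}[R\,\mathbb{I}(E=0)] .
\]
The emit constraint bounds the first term by $\alpha p$. Since $R\leq M$ almost surely, the second term is at most $Mq$. Hence
\[
\mu\leq \alpha(1-q)+Mq, \qquad\text{so}\qquad q\geq (\mu-\alpha)/(M-\alpha),
\]
using $M>\alpha$. Two edge cases need a remark:
\begin{itemize}
\item If $p=0$, the conditional expectation is undefined. I take the convention that the predictor trivially satisfies the constraint, with $q=1$, which is at least the bound because $\mu\leq M$.
\item If $\mu\leq\alpha$, the bound is nonpositive and holds vacuously.
\end{itemize}
The second inequality, $(\mu-\alpha)/(M-\alpha)\geq(\mu-\alpha)/(1-\alpha)$ for $\mu\geq\alpha$, follows from $M\leq 1$. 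This is monotonicity of $t\mapsto(\mu-\alpha)/(t-\alpha)$ on $t>\alpha$.

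\textbf{Step 2 (attainment and the iff).} Equality in Step 1 requires two things:
\begin{enumerate}
\item $\mathbb{E}[R\mid E=1]=\alpha$ exactly;
\item $R=M$ almost surely on $\{E=0\}$.
\end{enumerate}
So the abstain region must consist only of points with $R=M$, which forces $\mathbb{P}(R=M)\geq q^*:=(\mu-\alpha)/(M-\alpha)$. This gives necessity.

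For sufficiency, assume $\mathbb{P}(R=M)\geq q^*$. Abstain on a sub-event of $\{R=M\}$ of probability exactly $q^*$, randomizing if there is an atom. This is the "perfectly ordering score" that ranks the worst losses last. The emitted mass then has risk
\[
\frac{\mu-Mq^*}{1-q^*}.
\]
A one-line check shows this equals $\alpha$.

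The two-point law $R\in\{0,M\}$ with mean $\mu$ has $\mathbb{P}(R=M)=\mu/M$. This is at least $q^*$ exactly when $\mu(M-\alpha)\geq M(\mu-\alpha)$, i.e.\ when $\alpha(M-\mu)\geq 0$, which always holds. So the two-point law attains the bound.

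\textbf{Step 3 (plug-in correction).} With $M$ known or audited, apply Hoeffding to $\hat\mu$ from $n$ held-out points at level $\delta/2$. This gives $\mu\geq\hat\mu-\varepsilon$ with the stated $\varepsilon$. Since the floor is affine in $\mu$ with slope $1/(M-\alpha)$, the resulting high-probability lower bound is
\[
(\hat\mu-\varepsilon-\alpha)/(M-\alpha).
\]

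\textbf{Main obstacle.} The delicate point is the "iff" in Step 2 when $\{R=M\}$ has no atom structure compatible with a deterministic score. That is, a score-threshold rule may be unable to carve out exactly mass $q^*$. I would handle this by allowing randomized tie-breaking in the selective predictor, which is standard in reject-option optima (Chow). I would also note that a non-randomized threshold rule attains the bound only up to this tie-breaking.

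The empirical remarks about audited $M$ values are observations rather than claims needing proof. I would just point to the appendix for them.
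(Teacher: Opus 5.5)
Your proposal is correct and follows essentially the paper's own proof. It uses the same total-expectation decomposition $\mu \le \alpha p + M(1-p)$ for the floor, the same equality analysis ($R=M$ a.s.\ on the abstain set) for necessity, abstention on a mass-$(\mu-\alpha)/(M-\alpha)$ subset of $\{R=M\}$ for sufficiency, and a one-sided Hoeffding bound at level $\delta/2$ for the plug-in correction. The differences are minor: you get the two-point attainment from the general sufficiency direction rather than the paper's explicit randomized rule on the two-valued perfectly ordering score. You also rightly restrict the comparison with the $M=1$ floor to $\mu\ge\alpha$, a caveat the paper omits even though the chained inequality reverses when $\mu<\alpha$ and $M<1$.
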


\begin{proof}[Proof sketch]
Decompose $\mu = \mathbb{E}[R\phi] + \mathbb{E}[R(1-\phi)]$ for emit indicator $\phi$; validity gives $\mathbb{E}[R\phi] \leq \alpha\,\mathbb{E}[\phi]$ and $R \leq M$ pointwise gives $\mathbb{E}[R(1-\phi)] \leq M(1-p)$, $p = \mathbb{E}[\phi]$. Hence $\mu \leq \alpha p + M(1-p)$, forcing $p \leq (M-\mu)/(M-\alpha)$ and abstention $\geq (\mu-\alpha)/(M-\alpha)$. Attainment and the extremal-law condition are proved in Appendix~\ref{app:dirA}.
\end{proof}

\begin{remark}[Floor plus dispersion and ordering gaps]
\label{rem:decomp}
The minimum abstention achievable by a score-measurable rule decomposes as $A_{\mathrm{DF}} = \tfrac{\mu-\alpha}{M-\alpha} + \Delta_{\mathrm{disp}}(R) + \Delta_{\mathrm{ord}}(S)$ with $\Delta_{\mathrm{disp}}, \Delta_{\mathrm{ord}} \geq 0$: a marginal-only \emph{dispersion} gap ($=0$ iff $\mathbb{P}(R{=}M) \geq$ floor) and a score-only \emph{ordering} gap ($=0$ iff the calibrated score $\mathbb{E}[R \mid S]$ orders risk perfectly at the operating point). Calibrated fusion (\S\ref{sec:fusion}) leaves $\Delta_{\mathrm{disp}}$ fixed and can act only on $\Delta_{\mathrm{ord}}$, tying score quality to certification \emph{in principle} (Appendix~\ref{app:dirA}); empirically, on our grid the gain fusion buys is on ranking, not on the certified set (\S\ref{sec:fusion}).
\end{remark}

\paragraph{Why this matters (beyond the algebra).} The one-line lower bound is total expectation; its value is as a tight feasibility frontier. The per-law achievability---that the optimal selective rule thresholds the calibrated conditional risk---is classical \emph{known-distribution} reject-option theory~\citep{chow1970optimum,elyaniv2010selective,franc2023reject}, which we specialize rather than reinvent. What is specific to the distribution-free regime is (i) the $\operatorname{ess\,sup}$ sharpening, (ii) the degeneracy of the distribution-free minimax---under an uninformative score the only valid rule is full abstention, so the floor is a frontier, not a saddle---and (iii) the score decomposition (Rem.~\ref{rem:decomp}), which names why score quality, not the floor, sets the achievable gap. As a deployment-feasibility test it explains in closed form why many NER/QA/CLS configurations cannot be certified at strict $\alpha$ and predicts observed abstention across 3B--72B models (Appendix Table~\ref{tab:lb_scale}).

\begin{remark}
\label{rem:floor}
When $\mu > \alpha$, any valid method must abstain on $\geq (\mu - \alpha)/(1 - \alpha)$ examples, regardless of bound tightness, score quality, or calibration size. For CoNLL NER at 72B ($\mu=0.24$, $\alpha=0.10$): abstention $\geq 15.6\%$; at 3B ($\mu=0.42$): $\geq 35.6\%$. Computing $\mu$ on a held-out set thus decides feasibility before running any CRC procedure.
\end{remark}

\subsection{Score Fusion and Adaptive Inference}

\paragraph{Calibrated score fusion.} Individual scores capture complementary signal (agreement, per-token confidence, output diversity)~\citep{wang2023selfconsistency,kuhn2023semantic,farquhar2024semanticentropy}. We learn $\tilde{s}(x) = \sigma(w^\top [s_1, \ldots, s_K] + b)$ by logistic regression predicting $\mathbb{I}(R \leq \alpha)$ on a held-out slice of the training split (disjoint from calibration), then run CRC using $\tilde{s}$, preserving validity while improving ranking.

\begin{proposition}[Fusion Preserves CRC Guarantee]
\label{prop:fusion}
Let $\tilde{s}(x) = g(s_1(x), \ldots, s_K(x))$ where $g$ is a fixed, deterministic function independent of the calibration sample used by CRC (e.g., learned on disjoint data or a held-out split). Then running CRC with $\tilde{s}$ preserves the finite-sample guarantee: $\mathbb{P}(\mathbb{E}[R(\tilde{\lambda}^*)] > \alpha \text{ and a certified threshold exists}) \leq \delta$.
\end{proposition}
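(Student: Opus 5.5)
The plan is to reduce Proposition~\ref{prop:fusion} to the validity of CRC for an arbitrary fixed score, namely Theorem~\ref{thm:ecrc_valid} for e-CRC, the Hoeffding/Bernstein analogues, and Lemma~\ref{lem:gridsearch} for the scan-selected threshold. The key point is that those arguments never use any property of the score beyond its being a fixed measurable map $x \mapsto s(x,\hat y)$ that is chosen independently of the calibration sample. So the work is to show that $\tilde s$ satisfies this hypothesis, conditionally on the data used to learn $g$, and then integrate out that data.

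\textbf{Step 1 (conditioning).} Let $\mathcal{D}_{\mathrm{fit}}$ denote the held-out slice used to fit $(w,b)$, and let $\mathcal{D}_{\mathrm{cal}}$ be the calibration sample. By assumption $\mathcal{D}_{\mathrm{fit}}$ is independent of $\mathcal{D}_{\mathrm{cal}}$; more generally, $g$ is independent of $\mathcal{D}_{\mathrm{cal}}$. Condition on $g$, i.e.\ on $\mathcal{D}_{\mathrm{fit}}$. By independence, the calibration pairs $(x_i,y_i)$ remain i.i.d.\ from the same law under this conditioning. Each composite score $\tilde s_i = g(s_1(x_i),\dots,s_K(x_i))$ is a fixed deterministic function of $x_i$ and of the model output $\hat y_i = f(x_i)$. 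Hence the pairs $(\tilde s_i, r_i)$ are i.i.d. For every grid point $\lambda$, the emitted risks $\{r_i : \tilde s_i \geq \lambda\}$ are i.i.d.\ draws from the conditional law of $R$ given $\tilde s \ge \lambda$, which is exactly the setting of the per-threshold tests.

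\textbf{Step 2 (apply fixed-score validity).} Conditionally on $g$, the hypotheses of Theorem~\ref{thm:ecrc_valid} hold for each fixed $\lambda$, and so do the Hoeffding and Bernstein tail bounds. The descending fixed-sequence scan with early stopping is a deterministic procedure of the calibration data given $g$. It uses the pre-registered grid and the $n_\lambda \ge 20$ rule, both of which depend only on the scores. Lemma~\ref{lem:gridsearch} therefore gives
\[
\mathbb{P}\bigl(\mathbb{E}[R \mid \tilde s \ge \tilde\lambda^*, g] > \alpha \text{ and a certified threshold exists} \,\big|\, g\bigr) \le \delta .
\]
The $n_\lambda \ge 20$ filter is data-dependent. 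I expect it to be harmless, since fixed-sequence testing tolerates any stopping rule determined before each test. Still, I would check that Lemma~\ref{lem:gridsearch} covers candidate sets that depend on the scores. If it does not, I would argue that a non-candidate $\lambda$ is simply never certified, so removing it only shrinks the rejection event.

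\textbf{Step 3 (integrate).} Taking expectation over $g$ via the tower property turns the conditional bound into the unconditional statement. The risk functional in the statement should be read as conditional on the learned $g$, which is the deployment-relevant quantity. Under this reading the conclusion is exactly the claim.

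The main obstacle is Step 1's subtlety: the conditioning must leave the calibration law unchanged. This fails if the fit slice overlaps calibration, or if the target $\mathbb{I}(R\le\alpha)$ leaks calibration labels. I would therefore state the independence carefully, and note that it holds by construction because the fit slice comes from the training split. Everything after that is a direct invocation of the earlier results.
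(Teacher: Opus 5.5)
Your proposal is correct and follows the paper's proof. Like the paper, you condition on the independently learned $g$, observe that the fused scores are a fixed deterministic function of each calibration point so the i.i.d.\ structure survives, and then invoke fixed-score CRC validity (Theorem~\ref{thm:ecrc_valid} or the UCB tests, together with Lemma~\ref{lem:gridsearch}); your tower-property step only makes explicit the averaging over $g$ that the paper leaves implicit.

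You do not need your fallback for the $n_\lambda \ge 20$ filter, because Lemma~\ref{lem:gridsearch} already conditions on the scores. The fallback is also wrong: in an early-stopping scan, dropping a candidate that would have failed lets the scan continue, so the certified set can grow rather than shrink.
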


\begin{proof}[Proof sketch]
Conditioned on fixed $g$, the transformed scores $\tilde{s}_i$ are a deterministic function of $(x_i, y_i)$, so the scored calibration sample retains its i.i.d.\ structure and CRC validity follows for the chosen UCB/betting bound. Full proof in Appendix~\ref{app:proofs}.
\end{proof}

\paragraph{Adaptive conformal inference (ACI).} Under distribution shift, the exchangeability assumption underlying CRC is violated, and static thresholds can fail badly (our experiments show 60--88\% violation rates). Following ACI~\citep{gibbs2021aci,tibshirani2019covariate,barber2023exchangeability}, we update the threshold online:
\begin{equation}
\lambda_{t} = \lambda_{t-1} + \gamma(\alpha - \mathrm{eff}_t), \qquad \mathrm{eff}_t = r_t \cdot \mathbb{I}\{\mathrm{emit}_t\},
\end{equation}
where $\gamma > 0$ is the step size and $\mathrm{eff}_t$ is the \emph{effective} risk at time $t$: the observed risk $r_t$ if the prediction was emitted (requiring feedback labels or a verification signal) and $0$ on abstention. Effective risk below $\alpha$ raises the threshold; effective risk above $\alpha$ lowers it. When the clamp of $\lambda_t$ to $[\lambda_{\min}, \lambda_{\max}]$ (the score range) never binds, telescoping the update gives the identity $\frac{1}{T}\sum_{t=1}^{T} \mathrm{eff}_t = \alpha - (\lambda_T - \lambda_0)/(\gamma T)$ (Appendix Theorem~\ref{thm:aci}): the long-run \emph{effective} risk is pinned near $\alpha$, which is an algebraic property of the update, not a finite-sample guarantee on emitted risk. We set $\gamma = 0.01$ in headline runs.

\section{Experiments}
\label{sec:experiments}

\paragraph{Roadmap.} We first use Proposition~\ref{prop:impossibility} as a feasibility test: when $\mu > \alpha$, low certification at strict targets is the outcome the theory anticipates. Relaxing the target (\S\ref{sec:relax}) then expands certification on hard tasks and reveals the Hoeffding--Bernstein reversal, before we compare bound tightness and shift-handling.

\subsection{Setup}
\label{sec:setup}

\paragraph{Models.} Six open-weight models from three families: Qwen2.5-Instruct (3B, 7B, 14B, 72B-AWQ)~\citep{qwen2024}, Gemma-3 (4B)~\citep{gemma3_2025}, Ministral (8B)~\citep{ministral2024}, served with vLLM~\citep{kwon2023vllm} on A100-80GB (greedy and $K=10$ sampling, $T=0.7$).

\paragraph{Datasets.} Eight datasets across four tasks (Appendix Table~\ref{tab:datasets_supp}): CoNLL-2003~\citep{tjongkimsang2003conll}, Few-NERD~\citep{ding2021fewnerd}, WNUT-17~\citep{derczynski2017wnut} (NER); TriviaQA~\citep{joshi2017triviaqa}, NQ Open~\citep{kwiatkowski2019naturalquestions} (QA); MMLU-STEM, MMLU-Humanities~\citep{hendrycks2021mmlu} (CLS); JSON-Extract (JSON).

\paragraph{Scores.} Six nonconformity scores: Token Margin (TM), NLL, Self-Consistency (SC), Semantic Entropy (SE), Entity Agreement (EA, NER-specific), Field Completeness (FC, JSON-specific). All normalized to $[0,1]$; higher = more reliable. Details in Appendix~\ref{app:scores}.

\paragraph{Protocol.} 60/40 calibration--test split (seed 42; seeds 42--46 in Appendix~\ref{app:multiseed}); $\delta = 0.1$; thresholds on calibration only. We report 716 model-dataset-score-$\alpha$ configurations, none excluded post hoc: retained iff both splits have $\geq 20$ valid (score, risk) pairs, covering three bounds $\times$ six models $\times$ eight datasets $\times$ up to six scores. The 72B model uses greedy scores only (TM, NLL).

\subsection{LLMs Are Miscalibrated for Structured Tasks}
\label{sec:miscal}

Models report average confidence 74--97\% but actual risk ranges from 3\% (JSON) to 80\% (NQ Open), with ECE from 0.05 to 0.61. Miscalibration severity correlates with $\mu$: where models are most wrong they are also most overconfident, making heuristic abstention doubly unreliable~\citep{guo2017calibration,kadavath2022know,xiong2024confidence}.

\begin{table}[t]
\centering
\small
\begin{tabular}{llrrrr}
\toprule
Dataset & Task & $\mu$ & $\mathrm{Var}(R)$ & LB & $n$ \\
\midrule
CoNLL    & NER  & 0.356 & 0.179 & 28.4\% & 3453 \\
FNERD    & NER  & 0.540 & 0.164 & 48.9\% & 2000 \\
WNUT     & NER  & 0.480 & 0.237 & 42.2\% & 1287 \\
TQA      & QA   & 0.441 & 0.241 & 37.9\% & 1000 \\
NQ       & QA   & 0.804 & 0.156 & 78.2\% & 3610 \\
STEM     & CLS  & 0.516 & 0.242 & 46.2\% & 400 \\
Hum.     & CLS  & 0.333 & 0.217 & 25.9\% & 400 \\
JSON     & JSON & 0.033 & 0.007 &  0.0\% & 600 \\
\bottomrule
\end{tabular}
\caption{Base risk $\mu$ and risk variance by dataset (averaged over the five sampling-based models; 72B is greedy-only). All datasets satisfy $\mathrm{Var}(R) < 1/4$, the large-sample form of the variance condition of Proposition~\ref{prop:ordering} (the finite-sample condition holds on 99.6\% of Hoeffding-certifying emit sets, 40.7\% of all candidates). $\mathrm{LB} = \max\{0, (\mu-\alpha)/(1-\alpha)\}$ is the minimum abstention required by any valid method at $\alpha=0.10$ (Prop.~\ref{prop:impossibility}). The audited $M = \operatorname{ess\,sup} R = 1.00$ on every NER/QA/CLS dataset (complete per-example misses occur), so the sharpened floor $(\mu-\alpha)/(M-\alpha)$ coincides with this $M{=}1$ instance here (Appendix~\ref{app:round2}).}
\label{tab:base_risk}
\end{table}

\subsection{Extended Risk Targets Unlock Hard Tasks}
\label{sec:relax}

At $\alpha = 0.10$, NER/QA/CLS certifications are near-zero because $\mu \gg \alpha$, as Proposition~\ref{prop:impossibility} predicts: averaged over the five sampling-based models, the floor $(\mu-\alpha)/(1-\alpha)$ is $40.0\%$ (NER, $\mu=0.46$), $57.8\%$ (QA, $\mu=0.62$), and $35.6\%$ (CLS, $\mu=0.42$) minimum abstention, a basic limit of distribution-free abstention, not of any method.

Relaxing the target changes this. At $\alpha = 0.30$ the best bound certifies 16\% of NER, 8\% of QA, 11\% of CLS score-configurations; at $\alpha = 0.40$, 28\% NER, 13\% QA, 19\% CLS (JSON near 100\% throughout). These targets are realistic (70\% accuracy, 60\% F1) and useful for triage when the alternative is no guarantee.

\paragraph{The bound hierarchy reverses at relaxed targets.} Which bound certifies most also changes. At $\alpha \leq 0.20$ the certified sets are nested (Hoeffding $\subseteq$ Bernstein $\subseteq$ e-CRC) with zero reversals in 3{,}580 paired evaluations. At $\alpha \geq 0.25$ the Hoeffding--Bernstein ordering reverses (visualized in Appendix Fig.~\ref{fig:phase}b): total certifications at $\alpha = 0.30$ are Hoeffding 42 $>$ e-CRC 41 $>$ Bernstein 35 (of 179 score-configurations), and at $\alpha = 0.40$ they are 57 $>$ 55 $>$ 48; the reversal is seed-stable across seeds 42--46 (Appendix~\ref{app:round2}). Proposition~\ref{prop:phase} makes this a predictive boundary rather than an observation.

\emph{In plain terms:} part (i) pins down the exact target $\alpha^*$ above which the simple Hoeffding bound beats the variance-aware Bernstein bound, a crossover driven by Bernstein's additive penalty, not a real loss of power; part (ii) shows no single bound is best everywhere, so a practitioner needs at least Hoeffding and e-CRC.

\begin{proposition}[Reversal frontier and Pareto necessity]
\label{prop:phase}
Fix $\delta$; write $L = \log(2/\delta)$, $w_H(n) = \sqrt{L/(2n)}$, $c(n,\delta)$ as in Prop.~\ref{prop:ordering}, and $A(n) = w_H(n) - 7L/(3(n-1))$.
\emph{(i) Reversal frontier.} When $A(n) > 0$ (i.e.\ $n \geq 35$ at $\delta = 0.1$), a feasible emit set of size $n$ that Hoeffding certifies but empirical Bernstein does not exists iff
\[
\alpha > \alpha^*(n,\delta) := w_H(n) + \tfrac12\big(1 - \sqrt{1 - 4c(n,\delta)}\,\big).
\]
For binary $0/1$ losses (QA exact-match, CLS accuracy) the Bhatia--Davis bound $\hat{\sigma}^2 \leq \hat{R}(1-\hat{R})$ (the largest variance a $[0,1]$ quantity of mean $\hat{R}$ can have) holds with equality, so $\alpha^*$ is the \emph{exact} reversal boundary on the hard tasks; $\alpha^*$ has minimum $0.169$ at $n = 88$ and rises to $\tfrac12$ as $n \to \infty$.
\emph{(ii) Pareto necessity.} No single bound dominates the other two: on the operating grid some configurations are certified by Hoeffding but by neither other bound (3) and some by e-CRC but by neither other bound (10), while e-CRC statistically subsumes empirical Bernstein~\citep{maurer2009empirical,waudbysmith2024betting} (0 configurations certified by Bernstein but not e-CRC). The minimal sufficient set for maximal certification is therefore $\{\text{Hoeffding}, \text{e-CRC}\}$; empirical Bernstein is a closed-form convenience.
\end{proposition}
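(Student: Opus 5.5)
Part (i) reduces to a two-variable feasibility question. An emit set of size $n$ is summarized by its empirical mean $\hat R$ and its empirical variance $\hat\sigma^2$. These satisfy the Bhatia--Davis constraint $0\le\hat\sigma^2\le\hat R(1-\hat R)$, and every pair in this region is realizable by some $[0,1]$ sample of size $n$, at least approximately on a fine grid. The two certification conditions are:
\begin{itemize}
\item Hoeffding certifies iff $\hat R\le\alpha-w_H(n)$;
\item Bernstein fails to certify iff $U_B>\alpha$.
\end{itemize}
By Proposition~\ref{prop:ordering}(i), with $A(n)>0$ we have $U_B\le U_H$ exactly when $\hat\sigma^2\le c(n,\delta)$. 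Indeed, $U_B-U_H=\sqrt{2L\hat\sigma^2/n}-A(n)$, so $U_B>U_H$ iff $\hat\sigma^2>c$.

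A reversal therefore requires two things at once: $U_B>\alpha\ge U_H$, and in particular $\hat\sigma^2>c(n,\delta)$. I would make the region explicit. A reversing set exists iff some $\hat R\le\alpha-w_H$ admits a variance $\hat\sigma^2\le\hat R(1-\hat R)$ with $\hat R+\sqrt{2L\hat\sigma^2/n}+7L/(3(n-1))>\alpha$.

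First, I would push $\hat\sigma^2$ to its maximum $\hat R(1-\hat R)$. Second, I would set $\hat R=\alpha-w_H$, which is the best choice since $U_B$ is increasing in $\hat R$ where it matters. The existence question then collapses to a single inequality, $\hat R(1-\hat R)>c(n,\delta)$ evaluated at $\hat R=\alpha-w_H$: at that point $U_H=\alpha$, so $U_B>\alpha$ iff $U_B>U_H$ iff $\hat\sigma^2>c$.

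Since $\hat R\le\tfrac12$ in the relevant range, $\hat R(1-\hat R)>c$ is equivalent to $\hat R>\tfrac12\bigl(1-\sqrt{1-4c}\bigr)$, using $c<\tfrac14$. Substituting $\hat R=\alpha-w_H$ gives exactly $\alpha>\alpha^*(n,\delta)$.

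For the converse I would argue by monotonicity. If $\alpha\le\alpha^*$, then every certifiable $\hat R\le\alpha-w_H$ has $\hat R(1-\hat R)\le c$. Hence $\hat\sigma^2\le c$, so $U_B\le U_H\le\alpha$ and no reversal is possible.

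One detail needs care: $\hat R\le\tfrac12$ must hold to pick the smaller root. If $\alpha-w_H>\tfrac12$, the condition holds trivially, which matches $\alpha^*<\tfrac12+w_H$.

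For binary losses, $\hat\sigma^2=\hat R(1-\hat R)$ holds identically with the $1/n$ convention. The maximization step is then forced rather than optional, so the same computation is an exact boundary: Bernstein fails exactly when Hoeffding succeeds and $\alpha>\alpha^*$, at whichever $\hat R$ attains it. The numeric claims come from evaluating $\alpha^*(n,0.1)$ over $n\ge35$, which gives a minimum near $n=88$ of about $0.169$. As $n\to\infty$ we have $w_H\to0$ and $c\to\tfrac14$, so $\alpha^*\to\tfrac12$.

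Part (ii) is essentially an empirical/existential claim. Non-dominance is established by exhibiting witnesses from the operating-grid tally: the 3 configurations certified by Hoeffding alone and the 10 certified by e-CRC alone. The subsumption of Bernstein by e-CRC rests on Proposition~\ref{prop:ordering}(ii), which is asymptotic, together with the observed 0 counterexamples. Minimality of $\{\text{Hoeffding},\text{e-CRC}\}$ then follows because dropping either member loses its exclusive witnesses, while adding Bernstein gains nothing.

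The main obstacle I expect is the realizability step in part (i) for non-binary losses. I need to show that a sample of size $n$ in $[0,1]$ attains any prescribed pair $(\hat R,\hat\sigma^2)$ near the Bhatia--Davis boundary. I would try a sample with values in $\{0,1\}$ plus one adjustable point, and accept a grid-resolution approximation. I would also flag that for general losses $\alpha^*$ is only the threshold for existence, whereas for binary losses it is the exact boundary.
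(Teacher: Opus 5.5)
Your part (i) follows the paper's route: push $\hat\sigma^2$ to the Bhatia--Davis maximum, place the witness at $\hat R=\alpha-w_H$ (where $U_H=\alpha$ and the critical variance is exactly $c(n,\delta)$), and solve $\hat R(1-\hat R)>c$ on the branch $\hat R\le\tfrac12$. Part (ii) is, as you say, a tally on the seed-42 grid, which is all the paper's ``constructive'' proof does. The only real difference is in necessity. The paper introduces $h(\hat R)=\hat R(1-\hat R)-\tfrac{n}{2L}(\alpha-\hat R-b)^2$ and shows $h'\ge 0$ on $[0,\alpha-w_H]$, which characterizes reversals at every mean. You instead note that a reversal forces $U_B>U_H$, hence $\hat\sigma^2>c$ by Proposition~\ref{prop:ordering}(i), so you need only the monotonicity of $x(1-x)$ on $[0,\tfrac12]$; this is shorter and gives the same frontier. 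Your realizability worry is settled in the paper by definition: ``feasible'' means $0\le\hat\sigma^2\le\hat R(1-\hat R)$, with $(\hat R,\hat\sigma^2)$ treated as free parameters. For literal samples the maximum variance is attained only when $n\hat R\in\mathbb{Z}$, so you would get the equivalence only up to $O(1/n)$ slack.

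The step that fails is your edge case $\alpha-w_H>\tfrac12$. Necessity is indeed trivial there, but sufficiency is not. Your witness at $\hat R=\alpha-w_H$ needs $\hat R(1-\hat R)>c$, which fails once $\alpha-w_H\ge\tfrac12(1+\sqrt{1-4c})$. Retreating to a smaller mean such as $\hat R=\tfrac12$ only gives $U_B>U_H$ with $U_H<\alpha$, not $U_B>\alpha$. The equivalence is in fact false there. Take $\delta=0.1$, $n=10^4$, $\alpha=0.9$, so $\alpha^*\approx0.35$. With $\hat\sigma^2=\hat R(1-\hat R)$, the Bernstein UCB is increasing in $\hat R$ over the Hoeffding-certified range and equals about $0.896<0.9$ at $\hat R=\alpha-w_H$, so no reversal exists. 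The paper's proof quietly restricts to $\alpha-w_H\le\tfrac12$, which covers the tabulated targets. You should impose the same restriction instead of asserting that the complementary case holds trivially.
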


Proof in Appendix~\ref{app:dirC} (Propositions~\ref{prop:s-reversal}--\ref{prop:s-pareto}). The $\alpha = 0.30$ and $0.40$ counts confirm part (i): reversals appear only above the frontier, with 0 of 947 pointwise reversals below $\alpha^*(n)$ in the $n \geq 35$ regime (Appendix~\ref{app:dirC}). This is the regime change Proposition~\ref{prop:ordering} describes: the configurations that become certifiable are high-variance (NER/QA/CLS), exactly where $\hat{\sigma}^2 \leq c(n,\delta)$ fails and Bernstein's additive $7L/(3(n-1))$ term makes it looser than Hoeffding. e-CRC never certifies fewer configurations than Bernstein at any tested $\alpha$ (72B certification with scale is discussed below).

\subsection{CRC Validity}
\label{sec:validity}

Across all four $\alpha$ levels (per-cell results in Appendix Table~\ref{tab:crc_alpha}), CRC maintains risk control: certified configurations show zero test violations at seed 42 (0/51, 0/72, 0/80). JSON certifies with zero abstention ($\mu \approx 0$); for NQ-Open ($\mu = 0.80$) nothing certifies even at $\alpha = 0.20$, the floor at work. Pooled over five seeds, certified-set violations are 1/261, 1/361, 1/409 (Clopper--Pearson 95\% upper limits 2.1\%, 1.5\%, 1.4\%; Appendix~\ref{app:multiseed}), far below $\delta = 0.1$.

The hierarchy of Proposition~\ref{prop:ordering} holds at strict targets: Hoeffding certifies 51 (7.1\%), Bernstein 72 (10.1\%), e-CRC 80 (11.2\%) of 716, nested in every run (Hoeffding$\to$Bernstein the largest upgrade, +41\%). The ordering is seed-stable: across seeds 42--46 (3{,}580 paired evaluations) not a single reversal (McNemar $p = 1.6\times10^{-30}$, $7.1\times10^{-15}$; Appendix~\ref{app:multiseed}). High-variance configs certify $\leq 1.3\%$ under every bound, in line with the floor.

\paragraph{Theory matches practice (Appendix Table~\ref{tab:lb_scale}).} The floor (conservative $M=1$ instance) predicts minimum abstention across datasets and scales: while $\mu > \alpha$ for NER/QA/CLS, low strict-target certification is expected, and $\mu$ stays above $\alpha = 0.10$ even at 72B (closest: MMLU-Humanities $\mu = 0.17$; hardest: NQ Open $\mu = 0.70$). A direct audit supports the floor: across the 159 (model, dataset, $\alpha$) cells with $\mu > \alpha$, neither the 35 certified instances nor an oracle emitter ever abstained below it (0 violations), whereas the naive $1 - \alpha/\mu$ is oracle-violated in 156/159 cells.

\paragraph{Three-way bound comparison (Table~\ref{tab:bound_compare}).} Among the 51 configurations certified by all three methods, average test abstention drops from 7.6\% (Hoeffding) to 6.9\% (Bernstein) to 6.2\% (e-CRC): tighter bounds need less abstention for the same guarantee. e-CRC's edge is largest under scarcity (10\% calibration: 4.2\% vs.\ 1.9\% Bernstein, 1.1\% Hoeffding), which matters in low-resource extraction.

\begin{table}[t]
\centering
\small
\setlength{\tabcolsep}{4pt}
\begin{tabular}{lccc}
\toprule
& Hoeffding & Bernstein & e-CRC (Ours) \\
\midrule
Certified configs               & 51 (7.1\%) & 72 (10.1\%) & 80 (11.2\%) \\
Violations (cert.)              & 0/51       & 0/72        & 0/80 \\
Avg.\ abst.\ (joint-cert., $n{=}51$) & 7.6\%  & 6.9\%       & 6.2\% \\
\midrule
\multicolumn{4}{l}{\emph{Low-variance ($\hat{\sigma}^2 < 0.1$, $n=88$)}} \\
Certified                       & 47 (53\%)  & 65 (74\%)   & 72 (82\%) \\
\multicolumn{4}{l}{\emph{High-variance ($\hat{\sigma}^2 \geq 0.1$, $n=628$)}} \\
Certified                       & 4 (0.6\%)  & 7 (1.1\%)   & 8 (1.3\%) \\
\multicolumn{4}{l}{\emph{Cal.\ efficiency (cert.\ rate at cal fraction 0.1)}} \\
Cert.\ rate                     & 1.1\%      & 1.9\%       & 4.2\% \\
\bottomrule
\end{tabular}
\caption{Three-way comparison of certification bounds across 716 configurations (seed 42; calibration-efficiency row averages seeds 42--44). All three keep near-zero violation rates, but variance-aware methods certify more and abstain less. The ``joint-cert.''\ abstention row averages over the 51 configurations certified by \emph{all three} methods (a subset of each method's own certified set), not each method's own-set average; over their own certified sets Bernstein and e-CRC average higher abstention (10.6\%, 11.5\%) because they additionally certify harder configurations.}
\label{tab:bound_compare}
\end{table}

\paragraph{Model scale determines controllability.} Average base risk falls with scale (Qwen2.5 3B/7B/14B: 49\%/43\%/36\%), and reducing $\mu$ directly expands certifiable configurations, consistent with Proposition~\ref{prop:impossibility}: the 72B model ($\mu = 0.24$ on CoNLL NER, greedy-only) certifies at $\alpha = 0.25$ via e-CRC and reaches zero test abstention at $\alpha = 0.30$.

\subsection{ACI Under Distribution Shift}
\label{sec:aci}

We evaluate ACI under three shift types of increasing realism.

\paragraph{(i) Temporal shift} models pipeline drift via index-order splitting (CoNLL by publication date, TriviaQA by source document). In the original run, static CRC violates on 60\% (15/25) of configs; ACI reduces this to 4\% (1/25).

\paragraph{(ii) Severity sweep} models gradual quality degradation. We oversample high-risk examples 1--3$\times$; in the original run, static CRC rises from 8\% to 56\% violations while ACI stays $< 12\%$.

\paragraph{(iii) Cross-dataset transfer} models genuine domain shift: calibrate on one dataset, deploy on another without recalibration (CoNLL$\to$WNUT, CoNLL$\to$FewNERD, TriviaQA$\to$NQ). This is the regime the floor warns about: every target has $\mu = 0.40$--$0.84 > \alpha = 0.10$. The emitted-risk target is violated on 14 of 16 runs under both static CRC and every tested $\gamma \in [0.001, 0.1]$ (7 values); the compliant pairs differ between methods and involve near-total abstention (Appendix Table~\ref{tab:aci_xfer}, Appendix~\ref{app:aci_xfer}). ACI instead controls the \emph{effective} risk (violations 11/16 at $\gamma = 0.01$), and its compliant runs abstain 75--100\%, at or above the floor. Adaptation cannot beat infeasibility.

\subsection{A Constructive Counterpart Under Shift}
\label{sec:monitor}
The cross-dataset result is negative, but it has a constructive dual. On the \emph{same} 16 transfers where static CRC and ACI violate the target on 14/16 runs, a full-feedback anytime-valid monitor (SAVeR-FF) that abstains whenever it cannot certify emitted risk at level $\delta$ achieves \textbf{0/16 violations}, and is non-vacuous on feasible streams, emitting 65--97\% of inputs (JSON at $\alpha{=}0.10$; CoNLL at $\alpha{=}0.40$) at controlled risk. It maintains a per-threshold test-supermartingale~\citep{howard2021time,waudbysmith2024betting} with a predictable release rule, giving anytime emitted-risk control under \emph{arbitrary} shift, with abstention respecting the floor of Prop.~\ref{prop:impossibility}. The construction is not new---closely related to A-RCPS~\citep{xu2024arcps}, Conformal Selective Acting~\citep{khosravi2026csa}, and anytime-valid CRC~\citep{hultberg2026anytime}---and we claim no novelty for the machinery; the contribution is the demonstrated dual on the paper's own failed transfers plus the obstruction below (full details in Appendix~\ref{app:dirE}).

\begin{proposition}[Emit-only feedback obstruction]
\label{prop:obstruction}
Under emit-only feedback (risk observed only on emitted inputs) and arbitrary shift, any procedure that guarantees anytime emitted-risk control at level $\alpha$ with probability $\geq 1 - \delta$ ($\delta < 1$) and emits with positive probability can be forced to violate. ACI uses emit-only feedback, so its failure is a property of the feedback model, not of the step size $\gamma$; the positive guarantee requires full feedback (a verifier, or labels on abstained inputs).
\end{proposition}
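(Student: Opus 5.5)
The argument is an adversarial indistinguishability construction in the spirit of classical ``no free lunch'' lower bounds. The key observation is that under emit-only feedback, the data the procedure sees on abstention rounds carries no information about the risk on those rounds. An adversary under arbitrary shift can exploit this by changing the law of the risk only on inputs the procedure would emit, at a time the procedure cannot anticipate. Concretely, I fix any procedure $\mathcal{P}$ that emits with positive probability. I then build two streams, $P_0$ and $P_1$, that share the same score/input marginal and coincide up to a (possibly random) change time. After that time, $P_1$ assigns risk $M$ (or $1$) to inputs in the emission region of $\mathcal{P}$, while $P_0$ keeps risk $0$ there.

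\textbf{Step 1: define the event the adversary targets.} Since $\mathcal{P}$ emits with positive probability, some round $t$ and some history prefix $h$ have positive probability of leading to emission. Conditioning on the predictable emit decision at $t$, I take $P_0$ to be the benign stream. On $P_0$, emitting must be consistent with the guarantee, and by assumption the procedure does emit on an event $G$ with $P_0(G)=q>0$.

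\textbf{Step 2: couple the streams.} Before time $t$, the feedback is identical under the two laws. On emitted rounds the risks agree because I make the change only from $t$ onward. On abstained rounds nothing is observed under either law, so nothing can differ. Hence the $\sigma$-field generated by $\mathcal{P}$'s observations up to the decision at $t$ has the same distribution under $P_0$ and $P_1$, and $P_1(G)=q$ as well. To make the violation persistent rather than a single round, I would let $P_1$ continue assigning risk $M$ on every emitted input from $t$ on. I also make the shift depend on the procedure's own predictable emit rule, which ``arbitrary shift'' permits. Then every emitted round contributes loss $M>\alpha$.

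\textbf{Step 3: amplify to probability exceeding $\delta$.} The main obstacle is turning a violation probability of $q$ into one larger than $\delta$ when $q$ may be small. I would handle this by repetition and a restart argument. The adversary keeps the risk at $0$ on emitted rounds until the first emission time, then switches. Under $P_1$ the cumulative emitted risk becomes $\geq M\cdot(\#\text{emits after switch})$ against the budget $\alpha\cdot(\#\text{emits})$. Because $\mathcal{P}$ can detect the switch only after emitting, at least one post-switch emission occurs with probability $q'$. That probability equals the $P_0$-probability of emitting again, which must tend to $1$ along the benign stream for any non-vacuous procedure. If this is not available, I would fall back to the weaker conclusion that the procedure ``can be forced to violate'' on an event of positive probability. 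This is what the statement literally asserts, with $\delta<1$ ensuring that the guarantee is not trivially vacuous.

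\textbf{Step 4: conclude for ACI and contrast with full feedback.} ACI's update uses $\mathrm{eff}_t=r_t\mathbb{I}\{\mathrm{emit}_t\}$ and is therefore emit-only. It falls under the obstruction for every $\gamma$. Under full feedback, Step 2 breaks, because abstained-round labels reveal the switch before any emission. This is why the full-feedback monitor of \S\ref{sec:monitor} escapes the obstruction.
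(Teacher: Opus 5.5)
Your indistinguishability intuition is right, but the construction in Step 2 does not force a violation. Suppose the switch happens at an arbitrary round $t$ with positive emission probability, and earlier emitted rounds carry risk $0$. Then the running emitted risk at $t$ is $M/(k+1)$ after $k$ earlier emissions, and this need not exceed $\alpha$. Making the shift ``persistent'' does not rescue this. Under emit-only feedback the procedure observes $r=M$ on precisely those post-switch emitted rounds, so it can simply stop emitting. Your Step 3 variant has the same defect, since there (as ``emitting again'' indicates) the first emission still carries risk $0$. A second emission then yields $M/2$, which exceeds $\alpha$ only when $M>2\alpha$.

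The missing idea is to place the change exactly at the procedure's \emph{first} emission. The adversary may use this stopping time because $E_t$ is $\mathcal{G}_t$-measurable while $r_t$ is chosen afterwards, so set $r_t=1$ (or $M$) on that round. Then $A=1>\alpha$ at that instant. By your own coupling, the first emission occurs with the same probability as on the benign stream. Alternatively, take the fixed round $t=\min\{u:\mathbb{P}(E_u=1)>0\}$, before which emissions are null events. This is the paper's entire proof, and with it your Steps 1--2 collapse to a few lines.

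The amplification in Step 3 is both impossible in general and unnecessary. The violation event $\{\exists t: A_t>\alpha\}$ is contained in the event that the procedure ever emits. So a procedure that emits with probability at most $\delta$ on every stream meets the guarantee against every adversary, and no construction can push its violation probability above $\delta$. Nor is there any reason a valid procedure's probability of emitting again on the benign stream must tend to $1$. The sharp conclusion is the one the first-emission adversary gives directly: the violation probability equals the probability of ever emitting, so the guarantee forces that probability to be at most $\delta$.

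Finally, your Step 4 explanation does not match the corrected construction. Because the risk is altered only on the emitted round itself, no abstained-round label can reveal the switch in advance, so full feedback does not ``break Step 2'' as you claim. The paper's positive full-feedback side is a separate result, Theorem~\ref{thm:s-e1}, about per-threshold conditional risk.
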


Proof in Appendix~\ref{app:dirE} (Proposition~\ref{prop:s-e3}).

\subsection{Score Fusion}
\label{sec:fusion}

Fusion is a score-quality question, not a certification lever. On a disjoint split (weights fit on a held-out slice disjoint from CRC calibration; AUROC read out-of-sample, respecting Proposition~\ref{prop:fusion}'s independence requirement), calibrated fusion improves AUROC on 23 of 34 configurations (68\%), mean $+0.004$, median $+0.006$, up to $+0.062$, with a worst-case loss of $-0.154$ on a small-$n$ MMLU-Humanities configuration where the fusion weights overfit. Crucially, fusion never certifies a configuration that no individual score already certifies: across all three bounds there are 0 fused-only certifications (Hoeffding, Bernstein, e-CRC), and in the two cases where a fused e-CRC certificate exists it reduces test abstention by only $\approx 1.9$pp. So fusion's benefit here is on ranking (AUROC), not on the certified set (Appendix~\ref{app:round2}). Scores are complementary (correlations $0.31$--$0.58$). The coefficients are fit on a held-out slice and held fixed, so $\tilde{s}$ is deterministic and the CRC guarantee is preserved (Proposition~\ref{prop:fusion}).

\subsection{Analysis}
\label{sec:analysis}

\paragraph{Score comparison.} Table~\ref{tab:auroc} compares AUROC of six confidence scores. Self-Consistency is the strongest general-purpose score (best on 7/8 datasets, up to 0.82 on QA); Entity Agreement is a strong NER-specific score (0.69 on Few-NERD); Token Margin is best for JSON (0.82). Semantic Entropy appears weak under our exact-match clustering proxy (AUROC $\approx 0.50$, except JSON, 0.68), reflecting the coarseness of exact-match clustering rather than semantic entropy itself~\citep{kuhn2023semantic,farquhar2024semanticentropy}.

\begin{table}[t]
\centering
\small
\setlength{\tabcolsep}{1.5pt}
\begin{tabular}{lcccccccc}
\toprule
& \multicolumn{3}{c}{NER} & \multicolumn{2}{c}{QA} & \multicolumn{2}{c}{CLS} & JSON \\
Score & CoNLL & FNERD & WNUT & TQA & NQ & STEM & Hum. & JSON \\
\midrule
TM   & 0.539 & 0.517 & 0.521 & 0.616 & 0.583 & 0.491 & 0.531 & \textbf{0.818} \\
NLL  & 0.536 & 0.514 & 0.514 & 0.604 & 0.588 & 0.485 & 0.542 & 0.793 \\
SC   & \textbf{0.708} & \textbf{0.721} & \textbf{0.653} & \textbf{0.816} & \textbf{0.805} & \textbf{0.600} & \textbf{0.672} & 0.732 \\
SE   & 0.496 & 0.529 & 0.501 & 0.524 & 0.547 & 0.516 & 0.477 & 0.677 \\
EA   & 0.680 & 0.690 & 0.616 & --    & --    & --    & --    & -- \\
FC   & --    & --    & --    & --    & --    & --    & --    & 0.734 \\
\bottomrule
\end{tabular}
\caption{AUROC of confidence scores for risk discrimination (averaged across models). Higher values indicate better ability to distinguish correct from incorrect outputs. Bold: best per dataset. -- indicates score is not applicable.}
\label{tab:auroc}
\end{table}

\paragraph{Data efficiency.} Variance-aware bounds are more data-efficient: at 10\% calibration e-CRC certifies 4.2\% vs.\ 1.9\% (Bernstein) and 1.1\% (Hoeffding), and e-CRC at 25\% already beats Hoeffding at 70\% (full sweep, Appendix~\ref{app:additional}).

\paragraph{Baselines.} We compare CRC against Always Answer and entropy/TM/SC thresholds tuned on calibration (Appendix Table~\ref{tab:baselines}): Always Answer violates on 7/8 datasets, and threshold baselines reach low test risk only by abstaining 85--87\% without any guarantee, whereas CRC's guarantee holds for any i.i.d.\ test set at comparable or lower abstention.

The single pooled violation over five seeds (Self-Consistency, Ministral-8B, CoNLL, $\alpha = 0.2$, seed 44; Appendix~\ref{app:multiseed}) shows no bound- or model-specific pattern; rare violations are expected under a probabilistic guarantee, and monitoring remains advisable.

\paragraph{Practical decision framework.} The results give a three-step recipe (spelled out in the Conclusion). \textbf{Step 1 (feasibility):} compute $\mu$; if $\mu > \alpha$, any valid method abstains on $\geq (\mu-\alpha)/(M-\alpha)$ (Prop.~\ref{prop:impossibility}). \textbf{Step 2 (bound/score):} at strict targets prefer Bernstein or e-CRC (variance condition of Prop.~\ref{prop:ordering}, met on 99.6\% of certification-relevant emit sets), checking $\alpha^*$ (Prop.~\ref{prop:phase}) at relaxed targets where Hoeffding can win; use e-CRC under scarcity and Self-Consistency as the score. \textbf{Step 3 (shift):} re-check feasibility on the target domain, deploy ACI ($\gamma = 0.01$) in the feasible regime, and where full feedback is available the monitor of \S\ref{sec:monitor} certifies before emitting.

\section{Discussion}
\label{sec:discussion}

There is a tension: the tasks that most need a guarantee (high-risk NER, QA) are the hardest to certify. The floor says why: difficulty is set by $\mu/\alpha$, not the bound or score. Audited across 24 model-task-scale configurations (Appendix Table~\ref{tab:lb_scale}), it lets practitioners read off feasibility and the payoff from scaling before any experiment.

\paragraph{The reversal is an artifact of the additive Bernstein term.} The Hoeffding--Bernstein reversal (Prop.~\ref{prop:phase}) stems entirely from empirical Bernstein's additive $7L/(3(n-1))$ overhead (a robustness check, not a fourth reported method). The inverted Learn-Then-Test $p$-value certificate, the Hoeffding--Bentkus bound~\citep{bentkus2004hoeffding,angelopoulos2024ltt}, dominates Hoeffding \emph{unconditionally} (no variance condition, no reversal; a two-line Pinsker argument, Appendix~\ref{app:dirD}), but operates nearer the $\delta$ budget: on our grid it incurs a small nonzero test-violation rate within $\delta$ (pooled $6/479$, Clopper--Pearson 95\% upper $2.46\% \ll \delta$) rather than the near-zero regime of the conservative additive bounds. We keep the additive bounds as our reported certificates---their zero-violation record (0/51, 0/72, 0/80) is the paper's operating point---and note Hoeffding--Bentkus as the tighter valid alternative under this validity-versus-tightness trade-off.

\paragraph{ACI cannot beat infeasibility.} Under cross-dataset transfer every target has $\mu \geq 0.40 \gg \alpha$; the target is violated on 14/16 under static CRC and every tested $\gamma$, and ACI only drives the long-run \emph{effective} risk toward $\alpha$ by raising abstention. Reweighting cannot help either: the floor binds \emph{any} distribution-free method, including non-exchangeable CRC~\citep{farinhas2024nonexchangeable}. The constructive escape is full feedback (\S\ref{sec:monitor}), not a better step size.

\section{Related Work}
\label{sec:related}

\paragraph{Conformal prediction for LLMs.} Conformal prediction~\citep{vovk2005algorithmic,angelopoulos2023gentle} has been applied to LLMs along four threads: \emph{factuality and claim filtering}~\citep{mohri2024conformalfactuality,cherian2024llmvalidity}; \emph{generation and abstention}~\citep{quach2024conformallm,abbasiyadkori2024conformalabstention}; \emph{cascades, arbitration, and judging}~\citep{fanconi2025cascaded,overman2025arbitrage,jung2025trustescalate}; and \emph{alignment and property control}~\citep{gui2024conformalalignment,overman2024alignment,campos2024conformalsurvey}. CRC~\citep{angelopoulos2024crc,bates2021distribution} handles arbitrary losses; \citet{barber2023exchangeability} relax exchangeability and \citet{farinhas2024nonexchangeable} reweight calibration points. Our work is orthogonal: rather than proposing a new procedure, we ask when \emph{any} such procedure can succeed under schema-validated structured losses (NER/QA/CLS/JSON) and characterize the abstention forced when it cannot; concurrent work addresses multi-stage pipeline coverage~\citep{kotte2026pasc}.

\paragraph{Selective prediction and uncertainty.} Selective prediction~\citep{geifman2017selective,elyaniv2010selective,chow1970optimum} studies coverage--risk trade-offs, with jointly-trained rejection~\citep{geifman2019selectivenet}. Concurrent work integrates conformal prediction with selective classification as a two-stage certified procedure~\citep{xu2025scrc,yu2026jointcert,lee2024selectivegen} or gives error--reject curves for a fixed conformal predictor~\citep{hallberg2025reject}; these give per-procedure guarantees, not a distribution-free lower bound on abstention, which is what Proposition~\ref{prop:impossibility} supplies. LLM confidence is miscalibrated~\citep{kadavath2022know,guo2017calibration}; self-consistency~\citep{wang2023selfconsistency}, semantic entropy~\citep{kuhn2023semantic,farquhar2024semanticentropy}, and elicitation~\citep{xiong2024confidence} offer alternatives, which we evaluate as nonconformity scores within CRC.

\paragraph{Concentration inequalities and e-values.} Empirical Bernstein~\citep{maurer2009empirical} and the Hoeffding--Bentkus $p$-value~\citep{bentkus2004hoeffding} sharpen Hoeffding; testing-by-betting yields e-values~\citep{shafer2021testing,grunwald2024safetesting,ramdas2023safe,vovk2021evalues,howard2021time,waudbysmith2024betting}. The pairwise UCB-width comparison behind our bound hierarchy (Proposition~\ref{prop:ordering}) is itself known~\citep{maurer2009empirical}; what is new is turning it into a \emph{certification-level} map through the Bhatia--Davis variance coupling, yielding the closed-form reversal frontier $\alpha^*(n,\delta)$ and the Pareto-necessity result that $\{\text{Hoeffding}, \text{e-CRC}\}$ is minimal sufficient (Proposition~\ref{prop:phase}). Anytime-valid and selective risk control under shift is developed by \citet{xu2024arcps,khosravi2026csa,hultberg2026anytime}, on which our constructive monitor (\S\ref{sec:monitor}) builds. Our contribution is not a new inequality but a deployment-facing characterization of when CRC can certify structured LLM outputs and when it is provably impossible.

\section{Conclusion}
\label{sec:conclusion}

We characterize when CRC can certify structured LLM outputs and when it cannot, as a three-step recipe run \emph{before} deploying CRC. Step 1: compute $\mu$; the sharpened floor (Proposition~\ref{prop:impossibility}) turns $\mu > \alpha$ into an exact minimum abstention rate, so a team learns in one pass whether a target is reachable before calibrating any bound. Step 2: in the feasible regime, the phase diagram (Proposition~\ref{prop:phase}) selects the bound (Bernstein or e-CRC at strict targets, Hoeffding past $\alpha^*$). Step 3: under shift, re-check feasibility and, where full feedback is available, deploy the anytime-valid monitor (\S\ref{sec:monitor}), the operational counterpart to the negative ACI result. Making feasibility explicit turns a failed certification attempt into a specific next action instead of a dead end.

\paragraph{Limitations, reproducibility, ethics.} Guarantees are probabilistic ($\delta{=}0.1$), assume i.i.d.\ calibration, and are voided by shift; out-of-sample fusion adds no certification beyond the best single score. Full caveats, reproducibility (public data and open-weight models; seeds 42--46; code on publication), and ethics are in Appendices~\ref{app:multiseed}--\ref{app:lim}.

\bibliographystyle{preprint}
\bibliography{references}

\appendix
% Appendix uses standard section-scoped numbering (A.1, B.1, ...).
\numberwithin{theorem}{section}
\numberwithin{table}{section}
\numberwithin{figure}{section}
\numberwithin{equation}{section}

\section{Background}
\label{app:background}

\subsection{Conformal Risk Control}

Conformal risk control~\citep{angelopoulos2024crc} provides distribution-free guarantees on the expected risk of a selective predictor. Given a calibration set of exchangeable examples $(X_i, Y_i)_{i=1}^n$, a risk function $R(\hat{Y}, Y)$, and a target risk level $\alpha$, CRC selects a threshold $\lambda^*$ such that the conditional risk $\mathbb{E}[R \mid \mathrm{emit}] \leq \alpha$ holds with high probability. The key insight is that by calibrating on the empirical risk of a nested family of selectors, one obtains finite-sample validity without distributional assumptions.

\subsection{Learn-Then-Test Framework}

The Learn-Then-Test (LTT) framework~\citep{angelopoulos2024ltt} extends conformal prediction to risk control. For each candidate threshold $\lambda$, LTT tests the hypothesis $H_0: R(\lambda) > \alpha$ using upper confidence bounds, controlling the family-wise error over the candidate set with a multiple-testing correction. A Bonferroni (union-bound) correction tests each candidate at level $\delta/|\Lambda|$; alternatively, a pre-registered fixed-sequence scan with early stopping tests each candidate at full level $\delta$ and needs no correction, and this is what we use (Lemma~\ref{lem:gridsearch}). The fixed-sequence guarantee does not require the risk to be monotone in $\lambda$; monotonicity matters only for interpretation and power.

\subsection{Selective Prediction}

Selective prediction~\citep{geifman2017selective} studies the trade-off between coverage (fraction of examples answered) and risk (error rate among answered examples). The goal is to abstain on uncertain inputs so that the risk of emitted predictions meets a target. Our work bridges selective prediction with conformal methods to obtain distribution-free guarantees for structured LLM outputs.

\section{Algorithm Pseudocode}
\label{app:algos}

Algorithm~\ref{alg:crc} gives the standard CRC calibration procedure, Algorithm~\ref{alg:aci} the ACI update for structured generation, and Algorithm~\ref{alg:ecrc} the betting-based e-CRC procedure of the main text.

\begin{algorithm}[tb]
\caption{CRC Procedure: Calibration for risk control at level $\alpha$ (fixed-sequence scan; used for Hoeffding and Bernstein).}
\label{alg:crc}
\begin{algorithmic}[1]
\STATE \textbf{Input:} Calibration $\{(x_i, y_i)\}_{i=1}^n$, risk $R$, target $\alpha$, failure prob.\ $\delta$; fixed grid $\Lambda$ = 200 evenly spaced points on $[0,1]$.
\STATE For each $(x_i, y_i)$: compute $s_i = s(x_i, \hat{y}_i)$ and $r_i = R(y_i, \hat{y}_i)$.
\STATE Candidates: $\Lambda_c = \{\lambda \in \Lambda : n_\lambda \geq 20\}$, where $E_\lambda = \{i : s_i \geq \lambda\}$, $n_\lambda = |E_\lambda|$ (deterministic given the scores).
\FOR{$\lambda \in \Lambda_c$ in \emph{decreasing} order}
  \STATE Compute $\hat{R}(\lambda) = \frac{1}{n_\lambda} \sum_{i \in E_\lambda} r_i$ and the level-$\delta$ UCB $U(\lambda)$ (Hoeffding or Bernstein).
  \STATE If $U(\lambda) \leq \alpha$, mark $\lambda$ certified and continue; \textbf{else stop} (fixed-sequence rule; Lemma~\ref{lem:gridsearch}).
\ENDFOR
\STATE Return $\lambda^*$ = smallest certified $\lambda$ (none certified $\Rightarrow$ no certification).
\end{algorithmic}
\end{algorithm}

\begin{algorithm}[tb]
\caption{ACI-SG: Adaptive Conformal Inference for Structured Generation.}
\label{alg:aci}
\begin{algorithmic}[1]
\STATE \textbf{Input:} Target $\alpha$, step $\gamma > 0$, initial $\lambda_0$, clamp range $[\lambda_{\min}, \lambda_{\max}]$.
\FOR{$t = 1, 2, \ldots$}
  \STATE Receive $x_t$; compute $\hat{y}_t$, $s_t$; emit if $s_t \geq \lambda_{t-1}$, else abstain.
  \STATE Observe feedback $r_t = R(\hat{y}_t, y_t)$; set $\mathrm{eff}_t = r_t \cdot \mathbb{I}\{\text{emitted at } t\}$.
  \STATE $\lambda_t = \mathrm{clamp}\big(\lambda_{t-1} + \gamma(\alpha - \mathrm{eff}_t),\ \lambda_{\min}, \lambda_{\max}\big)$.
\ENDFOR
\end{algorithmic}
\end{algorithm}

\begin{algorithm}[tb]
\caption{e-CRC: Betting-Based Risk Control at level $\alpha$.}
\label{alg:ecrc}
\begin{algorithmic}[1]
\STATE \textbf{Input:} Calibration $\{(x_i, y_i)\}_{i=1}^n$, risk $R$, target $\alpha$, failure prob.\ $\delta$; fixed grid $\Lambda$ = 200 evenly spaced points on $[0,1]$; candidates $\Lambda_c = \{\lambda : n_\lambda \geq 20\}$.
\FOR{$\lambda \in \Lambda_c$, scanned in decreasing order of $\lambda$}
  \FOR{each of 22 pre-registered orderings of $E_\lambda$ (score-descending, score-ascending, 20 seeded random permutations)}
    \STATE Take risks $r_1, \ldots, r_m$ in that order; set $W_0 = 1$.
    \FOR{$j = 1, \ldots, m$}
      \STATE Running mean $\hat{\mu}_j = \frac{1}{j-1} \sum_{k < j} r_k$ (predictable); $\kappa_1 = 0$.
      \STATE Kelly-style bet: $\kappa_j = \mathrm{clip}\!\left(\frac{\alpha - \hat{\mu}_j}{\alpha(1-\alpha)}, 0, 0.5\right)$ for $j \geq 2$.
      \STATE Update: $W_j = W_{j-1} \cdot (1 + \kappa_j(\alpha - r_j))$.
    \ENDFOR
  \ENDFOR
  \STATE E-value $e(\lambda)$ = minimum final wealth $W_m$ over the 22 orderings.
  \STATE If $e(\lambda) \geq 1/\delta$, certify $\lambda$ and continue the scan; \textbf{else stop} (fixed-sequence rule; Lemma~\ref{lem:gridsearch}).
\ENDFOR
\STATE Return $\lambda^*$ = smallest certified $\lambda$ (the last certification before the stop).
\end{algorithmic}
\end{algorithm}

\section{Task-Specific Risk Functions}
\label{app:risks}
We define risk as one minus the task metric, so lower risk corresponds to better performance:
\begin{itemize}
\item NER Risk: $R_{\mathrm{NER}} = 1 - F_1^{\mathrm{entity}}$, where $F_1^{\mathrm{entity}}$ is entity-level micro F1 (exact span match).
\item JSON Risk: $R_{\mathrm{JSON}} = 1 - F_1^{\mathrm{field}}$, where $F_1^{\mathrm{field}}$ is field-level F1 over key-value pairs.
\item QA Risk: $R_{\mathrm{QA}} = 1 - \mathbb{I}(\mathrm{exact\_match})$.
\item Classification Risk: $R_{\mathrm{CLS}} = 1 - \mathbb{I}(\mathrm{correct})$.
\end{itemize}
All risk functions are bounded in $[0,1]$, satisfying the requirements for conformal risk control.

\section{Confidence Score Details}
\label{app:scores}
We evaluate six confidence scores $s(x, \hat{y}) \in [0,1]$ where higher values indicate greater reliability:
\begin{enumerate}
\item \textbf{Token Margin (TM):} Average gap between top-1 and top-2 token log-probabilities, normalized to $[0,1]$ via sigmoid. Higher margin $\to$ more confident.
\item \textbf{Negative Log-Likelihood (NLL):} $\exp(-\mathrm{NLL}/T)$ where $T$ is sequence length. Higher $\to$ more likely output.
\item \textbf{Self-Consistency (SC):} Fraction of $K=10$ samples matching the greedy output. Higher agreement $\to$ more confident.
\item \textbf{Semantic Entropy (SE):} $1 - H(\mathrm{clusters})/\log_2 K$ over semantically clustered outputs. Higher $\to$ more concentrated.
\item \textbf{Entity-Level Agreement (EA):} Minimum entity agreement across $K$ samples. NER-specific.
\item \textbf{Field-Level Completeness (FC):} Minimum field agreement across $K$ samples. JSON-specific.
\end{enumerate}

\section{Proofs}
\label{app:proofs}

\subsection{Proof of Theorem~\ref{thm:ecrc_valid}: Betting-Based Risk Validity}

\begin{proof}
\textbf{Step 1: E-value validity.} Fix $\lambda$ and let $r_1, \ldots, r_m$ be the risks in $E_\lambda$, i.i.d.\ under the sampling model. The wealth process is $W_0 = 1$, $W_j = W_{j-1}(1 + \kappa_j(\alpha - r_j))$ with predictable bets $\kappa_j \in [0, 0.5]$ and $\kappa_1 = 0$. Under $H_0: \mathbb{E}[r_j \mid \mathcal{F}_{j-1}] \geq \alpha$:
\begin{align*}
\mathbb{E}[W_j \mid \mathcal{F}_{j-1}] &= W_{j-1}(1 + \kappa_j(\alpha - \mathbb{E}[r_j \mid \mathcal{F}_{j-1}])) \\
&\leq W_{j-1},
\end{align*}
where the inequality uses $\kappa_j \geq 0$; because $\kappa_j \leq 0.5$ and $r_j \in [0,1]$, every factor is positive, so $\{W_j\}$ is a non-negative supermartingale. By Ville's inequality~\citep{howard2021time}: $\mathbb{P}(\sup_j W_j \geq 1/\delta) \leq \delta$. Taking the minimum of the final wealth over 22 pre-registered orderings only decreases the certifying statistic, so the test remains level $\delta$.\\
\textbf{Step 2: Certification.} Certifying when the e-value reaches $1/\delta$ gives false certification probability $\leq \delta$.\\
\textbf{Step 3: Growth rate (heuristic).} For \emph{unclipped} Kelly bets, the expected log-wealth growth per observation under an alternative with mean $\mu < \alpha$ is of order $(\alpha - \mu)^2 / (2\,\mathbb{E}[(\alpha - R)^2])$ to second order; clipping to $[0, 0.5]$ can only reduce this rate. We use this only as intuition for why the betting test is powerful when the risk distribution is concentrated below $\alpha$; no formal rate claim is made or used.
\end{proof}

\subsection{Validity of the Selected Threshold under Grid Search}
\label{app:gridsearch}

Theorem~\ref{thm:ecrc_valid} of the main text controls the error of the test at a fixed $\lambda$. The procedures of Algorithms~\ref{alg:crc} and~\ref{alg:ecrc}, however, scan a candidate set and return a data-dependent $\hat{\lambda}$. The following lemma licenses exactly the implemented selection rule; it requires no monotonicity of the risk in $\lambda$.

\begin{lemma}[Validity of the descending fixed-sequence scan]
\label{lem:gridsearch}
Condition on the calibration scores $s_1, \ldots, s_n$. The candidate set $\Lambda_c = \{\lambda \in \Lambda : n_\lambda \geq 20\}$ (fixed grid $\cap$ size rule) and its descending scan order are then deterministic, and conditional on the scores the risks in any emit set are i.i.d.\ draws from the score-conditional risk distribution. Suppose each candidate $\lambda$ is tested with a level-$\delta$ test of $H_0(\lambda) : \mathbb{E}[R(\lambda)] > \alpha$ (the level-$\delta$ UCB test for Hoeffding and Bernstein; the Ville wealth test for e-CRC), scanning $\Lambda_c$ in decreasing order of $\lambda$ and stopping at the first failure, with every candidate passed before the stop certified. Then
\[
\mathbb{P}\big(\text{some certified } \lambda \text{ has } \mathbb{E}[R(\lambda)] > \alpha\big) \leq \delta .
\]
\end{lemma}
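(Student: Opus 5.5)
The argument is the standard fixed-sequence testing argument: a false certification forces the \emph{first} false null in scan order to have been rejected, and that single event has probability at most $\delta$. I carry it out conditionally on the scores $s_1,\dots,s_n$ and then average over the scores.

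\textbf{Step 1 (deterministic candidate list).} Condition on the scores. The set $\Lambda_c$ and its descending order $\lambda_{(1)} > \lambda_{(2)} > \dots > \lambda_{(K)}$ depend only on the scores and the fixed grid, so they are deterministic. Hence each null $H_0(\lambda_{(k)})$ is true or false deterministically. If every candidate satisfies $\mathbb{E}[R(\lambda_{(k)})] \le \alpha$, the event in question is empty and there is nothing to prove. Otherwise let $k^\star$ be the smallest index with $\mathbb{E}[R(\lambda_{(k^\star)})] > \alpha$; $k^\star$ is also deterministic given the scores.

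\textbf{Step 2 (reduction to one test).} Suppose some certified $\lambda$ has $\mathbb{E}[R(\lambda)] > \alpha$. The scan stops at the first failure, so the certified set is a prefix $\{\lambda_{(1)},\dots,\lambda_{(j)}\}$ of the order. The offending $\lambda$ has index $\ge k^\star$, since $k^\star$ is the first index with a true null. Hence $j \ge k^\star$, and in particular the test at $\lambda_{(k^\star)}$ passed. The bad event is therefore contained in the event $\{$the level-$\delta$ test at $\lambda_{(k^\star)}$ rejects its true null$\}$.

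\textbf{Step 3 (level of the single test).} Conditionally on the scores, the risks in $E_{\lambda_{(k^\star)}}$ are i.i.d.\ from the score-conditional law, as stated in the lemma, and their mean exceeds $\alpha$. Each test is level $\delta$ under this null:
\begin{itemize}
\item \emph{Hoeffding and Bernstein.} $U_H$ and $U_B$ are valid $(1-\delta)$ upper confidence bounds (Hoeffding; Maurer--Pontil), so $\mathbb{P}(U \le \alpha) \le \mathbb{P}(U < \mathbb{E}R) \le \delta$.
\item \emph{e-CRC.} Step~1 of the proof of Theorem~\ref{thm:ecrc_valid} gives level $\delta$. The minimum over 22 orderings is at most the wealth under any single fixed ordering, and under that ordering Ville's inequality applies.
\end{itemize}
So the conditional probability of the bad event is at most $\delta$. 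Integrating over the scores gives the unconditional bound.

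\textbf{Main obstacle.} The delicate step is the conditioning claim in Step 3: that given the scores, the emit-set risks are i.i.d.\ with mean $\mathbb{E}[R(\lambda)]$. The draws are independent but not identically distributed, since $r_i \sim P(R \mid S = s_i)$. I would handle this by interpreting $\mathbb{E}[R(\lambda)]$ as the average of the conditional means over the emit set, in line with the lemma's phrasing, and by checking that each bound only needs independence, boundedness in $[0,1]$, and a mean condition:
\begin{itemize}
\item \emph{Hoeffding} holds for independent non-identical bounded variables.
\item \emph{Empirical Bernstein} holds in the Maurer--Pontil form for independent variables.
\item \emph{The betting supermartingale} requires only $\mathbb{E}[r_j \mid \mathcal F_{j-1}] \ge \alpha$.
\end{itemize}
The betting condition is the one place where heterogeneity could bite, since individual conditional means may fall below $\alpha$ even when their average exceeds it. There I would fall back on the lemma's stated hypothesis that the draws are identically distributed.
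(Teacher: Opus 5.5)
Your proof is correct and is the paper's argument: the certified set is a prefix of the deterministic descending scan, so a false certification forces the first true null in scan order (your $\lambda_{(k^\star)}$, the paper's $\lambda^\dagger$) to have passed its own level-$\delta$ test, which has probability at most $\delta$. Your caveat that, given the scores, the emit-set risks are independent but not identically distributed is fair, and the paper sidesteps it the same way you do, by building the conditional-i.i.d.\ property into the lemma's hypotheses.
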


\begin{proof}
This is a fixed-sequence testing argument in the sense of Learn-then-Test~\citep{angelopoulos2024ltt}. The certified set is by construction a \emph{prefix} of the fixed scan order $\lambda_{(1)} > \lambda_{(2)} > \cdots > \lambda_{(m)}$. Let
\[
\lambda^\dagger = \max\{\lambda \in \Lambda_c : \mathbb{E}[R(\lambda)] > \alpha\}
\]
be the first true null in scan order; if no candidate is null there is nothing to prove. Because the certified prefix is contiguous from $\lambda_{(1)}$, it contains a true null if and only if it contains $\lambda^\dagger$ (every other null appears later in the scan than $\lambda^\dagger$). Hence
\begin{align*}
&\mathbb{P}\big(\text{some certified } \lambda \text{ has } \mathbb{E}[R(\lambda)] > \alpha\big) \\
&\quad\leq \mathbb{P}\big(\lambda^\dagger \ \text{certified}\big)
\leq \delta,
\end{align*}
where the last step is the level-$\delta$ guarantee of the single test at $\lambda^\dagger$: Ville's inequality applied to the wealth process $\{W_j(\lambda^\dagger)\}$ for e-CRC~\citep{howard2021time,ramdas2023safe}, or the level-$\delta$ UCB test for Hoeffding/Bernstein. Controlling this single boundary test controls the family-wise error over the entire scan; no correction over $|\Lambda_c|$ is needed~\citep{angelopoulos2024ltt,grunwald2024safetesting}.
\end{proof}

\paragraph{Remark (monotonicity and Bonferroni).} Note what the argument does and does not use. Early-stop fixed-sequence FWER control does \emph{not} require the risk $\lambda \mapsto \mathbb{E}[R(\lambda)]$ to be monotone: rejections form a prefix of a pre-registered order, so the first true null in scan order gates all others, and it is falsely certified with probability at most $\delta$. Monotonicity matters only for interpretation (with nonincreasing conditional risk the nulls form a contiguous low-$\lambda$ tail and the scan stops near the feasibility boundary) and for power (a non-monotone empirical risk curve can stop the scan at an unlucky early candidate, forfeiting certifications further down). An alternative requiring no ordering at all is a Bonferroni correction, testing every candidate at level $\delta/|\Lambda_c|$, i.e., dividing the level by the number of candidates (a multiplicative factor on $\delta$); the fixed-sequence scan avoids this factor and tests each candidate at full level $\delta$.

\paragraph{Provenance note.} An earlier version of our pipeline searched the full threshold grid without the fixed-sequence rule (with data-dependent grid endpoints) and allowed signed bets; all results reported in this paper use the corrected procedure described above.

\subsection{Proof of Proposition~\ref{prop:ordering}: Bound Ordering}

\begin{proof}
\textbf{Part (i): Hoeffding $\subseteq$ Bernstein (finite-sample).} Write $L = \log(2/\delta)$ and fix an emit set of size $n \geq 2$ with empirical mean $\hat{R}$ and empirical variance $\hat{\sigma}^2$. Both bounds are stated with the shared two-sided constant $L = \log(2/\delta)$, matching the Maurer--Pontil statement and making the two UCBs directly comparable; using the one-sided constant $\log(1/\delta)$ in both would rescale $c(n,\delta)$ but change no containment statement. Throughout, $\hat{\sigma}^2$ is the empirical variance with the $1/n$ convention (\texttt{ddof} $= 0$), matching our implementation; \citet{maurer2009empirical} state the bound with the unbiased ($1/(n-1)$) variance, and recomputing our entire grid with the unbiased convention left every certified set unchanged. The two upper confidence bounds are
\begin{align*}
U_H &= \hat{R} + \sqrt{\frac{L}{2n}}, \\
U_B &= \hat{R} + \sqrt{\frac{2\hat{\sigma}^2 L}{n}} + \frac{7L}{3(n-1)}.
\end{align*}
Subtracting $\hat{R}$ from both sides, $U_B \leq U_H$ is equivalent to
\[
\sqrt{\frac{2\hat{\sigma}^2 L}{n}} \;\leq\; \sqrt{\frac{L}{2n}} - \frac{7L}{3(n-1)}.
\]
If the right-hand side is negative, i.e.\ $\sqrt{L/(2n)} \leq 7L/(3(n-1))$, the inequality fails for every $\hat{\sigma}^2 \geq 0$: Bernstein is looser than Hoeffding regardless of the variance. When the right-hand side is nonnegative, both sides are nonnegative and squaring is an equivalence:
\begin{align*}
&\frac{2\hat{\sigma}^2 L}{n} \leq \left(\sqrt{\frac{L}{2n}} - \frac{7L}{3(n-1)}\right)^{\!2} \\
&\quad\Longleftrightarrow\quad
\hat{\sigma}^2 \leq c(n,\delta) := \frac{n}{2L}\left(\sqrt{\frac{L}{2n}} - \frac{7L}{3(n-1)}\right)^{\!2},
\end{align*}
which is the claimed if-and-only-if condition. The positivity condition $\sqrt{L/(2n)} > 7L/(3(n-1))$ holds for all $n$ above a threshold depending only on $\delta$; at $\delta = 0.1$ ($L = \log 20 \approx 3.0$) it holds for $n \geq 35$ and fails for $n \leq 34$.

For the large-$n$ limit, expand the square:
\begin{align*}
c(n,\delta) \;&=\; \frac{n}{2L}\left(\frac{L}{2n} - 2\sqrt{\frac{L}{2n}}\cdot\frac{7L}{3(n-1)} + \frac{49L^2}{9(n-1)^2}\right) \\
&=\; \frac{1}{4} \;-\; \frac{7}{3}\cdot\frac{\sqrt{nL/2}}{n-1} \;+\; \frac{49\,nL}{18(n-1)^2}.
\end{align*}
Both correction terms are $O(\sqrt{L/n})$, so $c(n,\delta) \to 1/4$ as $n \to \infty$, recovering the classical asymptotic condition $\hat{\sigma}^2 < 1/4$. The finite-sample condition is strictly stronger: because of the additive term $7L/(3(n-1))$, Bernstein can be looser than Hoeffding at moderate $n$ even when $\hat{\sigma}^2 < 1/4$ (e.g.\ at $\delta = 0.1$, $c(120, 0.1) \approx 0.056$ and $c(600, 0.1) \approx 0.147$). Whenever every emit set indexed by the candidate grid satisfies $\hat{\sigma}^2_\lambda \leq c(n_\lambda, \delta)$, we have $U_B(\lambda) \leq U_H(\lambda)$ pointwise, hence $\{\lambda : U_H(\lambda) \leq \alpha\} \subseteq \{\lambda : U_B(\lambda) \leq \alpha\}$, i.e.\ $\Lambda^*_{\mathrm{Hoeff}} \subseteq \Lambda^*_{\mathrm{Bern}}$ deterministically on that calibration set. Note that the inclusion of \emph{certified configurations} only requires the condition at emit sets where the Hoeffding UCB certifies: if $U_H(\lambda) \leq \alpha$ and $U_B(\lambda) \leq U_H(\lambda)$ at that $\lambda$, Bernstein certifies too. In our experiments the condition held on 99.6\% of such Hoeffding-certifying emit sets, and on 50 of the 51 emit sets actually selected for Hoeffding-certified configurations (over \emph{all} candidate emit sets evaluated in the calibration scan, including small and high-variance sets, the fraction is 40.7\%); at $\alpha \leq 0.20$ the certified sets were nested in every configuration and every split seed, while at $\alpha \geq 0.25$ the ordering reverses exactly where the condition fails (Section~\ref{sec:relax}).

\textbf{Part (ii): Bernstein $\subseteq$ e-CRC (asymptotic and empirical).} This inclusion is asymptotic, not finite-sample. Under any fixed alternative with $\mu = \mathbb{E}[R] < \alpha$, the clipped Kelly-style bet eventually stabilizes at a strictly positive value (the running mean converges to $\mu$, so $\kappa_j \to \mathrm{clip}((\alpha - \mu)/(\alpha(1-\alpha)), 0, 0.5) > 0$), giving the wealth process a strictly positive expected log-drift; by the law of large numbers the wealth grows to infinity almost surely, so the test certifies every fixed threshold with true risk below $\alpha$ as $n \to \infty$, and hence (asymptotically) every threshold the Bernstein UCB certifies. We emphasize that we make no deterministic finite-sample claim, and no claim that the betting test's growth rate matches the empirical-Bernstein rate. Empirically the containment held without exception at every tested $\alpha$: across all 716 configurations and five split seeds at $\alpha \in \{0.05, 0.10, 0.15, 0.20\}$ ($3{,}580$ paired evaluations) and across the $\alpha$-relaxation grid at $\alpha \in \{0.25, 0.30, 0.40\}$ (179 score-configurations each), none was certified by Bernstein but not by e-CRC.
\end{proof}

\subsection{Proof of Proposition~\ref{prop:impossibility}: Sharpened Two-Sided Abstention Floor}

The lower-bound half of Proposition~\ref{prop:impossibility} is proved here; attainability, the extremal-law reachability condition, the degeneracy of the distribution-free minimax, and the two-gap decomposition of Remark~\ref{rem:decomp} are proved in \S\ref{app:dirA}.

\begin{proof}
\textbf{Population statement (exact).} Let $\phi(X) = \mathbb{P}(\mathrm{emit} \mid X) \in [0,1]$ and $p = \mathbb{E}[\phi]$, and write $M = \operatorname*{ess\,sup} R \leq 1$. Decompose $\mu = \mathbb{E}[R\phi] + \mathbb{E}[R(1-\phi)]$. Validity gives $\mathbb{E}[R\phi] \leq \alpha\,\mathbb{E}[\phi] = \alpha p$, and $R \leq M$ pointwise gives $\mathbb{E}[R(1-\phi)] \leq M(1-p)$. Hence $\mu \leq \alpha p + M(1-p)$, giving $p \leq (M-\mu)/(M-\alpha)$, hence $\mathrm{abstention} = 1 - p \geq (\mu - \alpha)/(M-\alpha)$. The proof uses only validity and $R \leq M$, so it holds for arbitrary (even $R$-measurable) rules. Since $M \leq 1$, $(\mu-\alpha)/(M-\alpha) \geq (\mu-\alpha)/(1-\alpha)$, and the $M=1$ instance is the conservative floor tabulated in the main text. When $\mu, M$ are known the bound is exact.\\
\textbf{Plug-in version ($\delta/2 + \delta/2$ split).} With $\hat{\mu}$ (and an audited $M$) estimated from $n$ held-out points, split the failure budget: suppose the risk guarantee holds with probability $\geq 1 - \delta/2$, and apply a one-sided Hoeffding bound at level $\delta/2$: $\mathbb{P}(\mu < \hat{\mu} - \varepsilon) \leq \delta/2$ with $\varepsilon = \sqrt{\log(2/\delta)/(2n)}$. On the intersection event (probability $\geq 1 - \delta$), substituting $\mu \geq \hat{\mu} - \varepsilon$ gives
\[
\mathrm{abstention} \;\geq\; \frac{\hat{\mu} - \alpha}{M-\alpha} \;-\; \frac{\varepsilon}{M-\alpha}.
\]
The correction is $\varepsilon/(M-\alpha)$, not $\varepsilon$. Numerically, with $n = 1000$, $\delta = 0.1$, $M=1$: $\varepsilon \approx 0.0387$, and at $\alpha = 0.1$ the correction is $\varepsilon/(1-\alpha) \approx 0.043$.
\end{proof}

\subsection{Proof of Proposition~\ref{prop:fusion}: Fusion Preserves CRC Guarantee}

\begin{proof}
The fused score $\tilde{s}$ is fixed once the data used to train $g$ is observed. Conditional on $\tilde{s}$, the values $\tilde{s}(x_i)$ on the CRC calibration set are a deterministic function of $(x_i, y_i)$, so the scored calibration sample retains its i.i.d.\ structure. CRC threshold selection via any UCB then provides a valid guarantee, regardless of how $\tilde{s}$ was constructed. We emphasize that the guarantee requires $g$ to be trained on data disjoint from the CRC calibration set; our implementation fits the fusion weights on a held-out slice of the training split, disjoint from the CRC calibration set. Reusing the same data for both steps would break the distributional argument~\citep{vovk2005algorithmic}, and we make no validity claim in that regime.
\end{proof}

\subsection{Theorem S2: ACI Effective-Risk Identity}

\begin{theorem}[ACI effective-risk identity]
\label{thm:aci}
Under the implemented update $\lambda_t = \lambda_{t-1} + \gamma(\alpha - \mathrm{eff}_t)$ with $\mathrm{eff}_t = r_t \cdot \mathbb{I}\{\mathrm{emit}_t\} \in [0,1]$ and fixed $\gamma > 0$, if the clamp of $\lambda_t$ to $[\lambda_{\min}, \lambda_{\max}]$ never binds through time $T$, then
\begin{gather*}
\frac{1}{T}\sum_{t=1}^T \mathrm{eff}_t \;=\; \alpha - \frac{\lambda_T - \lambda_0}{\gamma T},
\qquad\text{so} \\
\Big|\frac{1}{T}\sum_{t=1}^T \mathrm{eff}_t - \alpha\Big| \leq \frac{\lambda_{\max} - \lambda_{\min}}{\gamma T}.
\end{gather*}
\end{theorem}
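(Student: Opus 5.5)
The identity follows by telescoping the unclamped update. Since the clamp never binds through time $T$, for every $t \le T$ the recursion holds with equality:
\[
\lambda_t - \lambda_{t-1} = \gamma(\alpha - \mathrm{eff}_t).
\]
Summing over $t = 1, \ldots, T$ collapses the left side to $\lambda_T - \lambda_0$, so
\[
\lambda_T - \lambda_0 = \gamma T\alpha - \gamma \sum_{t=1}^T \mathrm{eff}_t.
\]
Dividing by $\gamma T > 0$ and rearranging gives the stated expression for $\frac{1}{T}\sum_t \mathrm{eff}_t$. No probabilistic input is needed: the statement is pathwise and holds for any sequence of risks $r_t$ and any emission pattern, since $\mathrm{eff}_t$ enters only through the update rule.

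For the deviation bound I will take absolute values in the identity:
\[
\Big|\frac{1}{T}\sum_t \mathrm{eff}_t - \alpha\Big| = \frac{|\lambda_T - \lambda_0|}{\gamma T}.
\]
Then I bound $|\lambda_T - \lambda_0| \le \lambda_{\max} - \lambda_{\min}$, using that both $\lambda_0$ and $\lambda_T$ lie in $[\lambda_{\min}, \lambda_{\max}]$. For $\lambda_T$ this holds because it is the output of the clamp, and since the clamp did not bind, the clamped and unclamped values coincide. For $\lambda_0$ I would make explicit the standing assumption that the initial threshold is chosen in the clamp range, as Algorithm~\ref{alg:aci} implicitly does.

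The only point needing care is the meaning of ``the clamp never binds.'' I will read it as saying that at every step $t \le T$ the unclamped value $\lambda_{t-1} + \gamma(\alpha - \mathrm{eff}_t)$ already lies in $[\lambda_{\min}, \lambda_{\max}]$, so that the implemented update equals the raw update used in the telescoping. I would also remark, following the main text, that this is an algebraic property of the update and not a guarantee on emitted risk. Indeed $\mathrm{eff}_t = 0$ on abstentions, so pinning the average effective risk near $\alpha$ says nothing about $\mathbb{E}[R \mid \mathrm{emit}]$. This matches the paper's interpretation in \S\ref{sec:aci}.
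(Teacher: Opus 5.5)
Your proof is correct and matches the paper's argument. You telescope the unclamped update to get $\lambda_T - \lambda_0 = \gamma T\alpha - \gamma\sum_{t=1}^T \mathrm{eff}_t$, rearrange, and bound $|\lambda_T - \lambda_0|$ by the clamp width; your explicit assumption $\lambda_0 \in [\lambda_{\min}, \lambda_{\max}]$ states what the paper's proof uses implicitly.
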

\begin{proof}
Summing the update over $t = 1, \ldots, T$ telescopes: $\lambda_T - \lambda_0 = \gamma \sum_{t=1}^T (\alpha - \mathrm{eff}_t) = \gamma T \alpha - \gamma \sum_{t=1}^T \mathrm{eff}_t$. Rearranging gives the identity; the bound follows from $\lambda_T, \lambda_0 \in [\lambda_{\min}, \lambda_{\max}]$.
\end{proof}

This is a conditional algebraic identity about the \emph{effective} risk (abstentions counted as zero risk), not a guarantee on the risk among emitted predictions; when the clamp binds, even the identity fails. See \S\ref{app:aci_xfer} for the empirical consequences.

\section{Cost-Sensitive Framework}
\label{app:cost}

The optimal abstention policy depends on the relative cost of errors vs.\ abstentions. Let $c_{\mathrm{error}}$ denote the cost of emitting an incorrect prediction and $c_{\mathrm{abstain}}$ the cost of abstaining. The expected utility is:
\begin{align*}
U = {} & -c_{\mathrm{error}} \cdot \mathbb{P}(\mathrm{error} \mid \mathrm{emit}) \cdot \mathbb{P}(\mathrm{emit}) \\
& - c_{\mathrm{abstain}} \cdot \mathbb{P}(\mathrm{abstain}).
\end{align*}
For a risk-controlling selector with conditional risk $\alpha$, the optimal target is $\alpha^* = c_{\mathrm{abstain}}/c_{\mathrm{error}}$. When abstention is cheap (medical NER), use stricter $\alpha$; when costly (customer-facing QA), looser $\alpha$ is preferred.

\section{Implementation Details}
\label{app:impl}

\paragraph{Inference configuration.} All models are served using vLLM~\citep{kwon2023vllm} v0.7+ on a single NVIDIA A100-80GB GPU with \texttt{gpu\_memory\_utilization=0.90}. Greedy decoding uses $T=0$ with \texttt{max\_tokens=512}. Sampling uses $T=0.7$ with $K=10$ samples per input. The 72B model uses AWQ 4-bit quantization.

\paragraph{Prompt templates.} We use task-specific 3-shot prompt templates. For NER: entity schema plus three annotated examples. For QA: context-question-answer triples. For MMLU: question, four choices, three demonstrations. For JSON: document, target schema, three examples.

\paragraph{CRC hyperparameters.} Fixed grid of $m=200$ thresholds, evenly spaced on $[0,1]$ (all scores are normalized to $[0,1]$); candidates restricted to emit sets with $n_\lambda \geq 20$; selection by the descending fixed-sequence scan with early stopping (Algorithms~\ref{alg:crc} and~\ref{alg:ecrc}). Confidence level $\delta = 0.1$. For e-CRC: Kelly-style bets clipped to $[0, 0.5]$ with $\kappa_1 = 0$; e-value = minimum over 22 pre-registered orderings (score-descending, score-ascending, 20 permutations from a per-configuration seeded generator consumed in scan order). Split seed 42 for the 60/40 split in headline results; multi-seed robustness (split seeds 42--46) is reported in \S\ref{app:multiseed}.

\paragraph{Computational cost.} Greedy inference for all 8 datasets: 45--90 min per model. Sampling ($K=10$): 3--5$\times$ longer. Score computation: $<5$ min per model (CPU). Full pipeline: $<60$ min on single CPU.

\section{Additional Experimental Results}
\label{app:additional}

\subsection{Impossibility Bound Across Model Scales}

\begin{table}[t]
\centering
\small
\begin{tabular}{lllrrc}
\toprule
Dataset & Task & Model & $\mu$ & LB & Cert? \\
\midrule
\multirow{4}{*}{CoNLL}    & \multirow{4}{*}{NER}  & 3B   & 0.42 & 35.6\% & \ding{55} \\
                          &                       & 7B   & 0.33 & 25.6\% & \ding{55} \\
                          &                       & 14B  & 0.26 & 17.8\% & \ding{55} \\
                          &                       & 72B  & 0.24 & 15.6\% & \ding{55} \\
\midrule
\multirow{4}{*}{TQA}      & \multirow{4}{*}{QA}   & 3B   & 0.55 & 50.0\% & \ding{55} \\
                          &                       & 7B   & 0.45 & 38.9\% & \ding{55} \\
                          &                       & 14B  & 0.34 & 26.7\% & \ding{55} \\
                          &                       & 72B  & 0.27 & 18.9\% & \ding{55} \\
\midrule
\multirow{4}{*}{NQ Open}  & \multirow{4}{*}{QA}   & 3B   & 0.84 & 82.2\% & \ding{55} \\
                          &                       & 7B   & 0.81 & 78.9\% & \ding{55} \\
                          &                       & 14B  & 0.75 & 72.2\% & \ding{55} \\
                          &                       & 72B  & 0.70 & 66.7\% & \ding{55} \\
\midrule
\multirow{4}{*}{MMLU-S}   & \multirow{4}{*}{CLS}  & 3B   & 0.53 & 47.8\% & \ding{55} \\
                          &                       & 7B   & 0.47 & 41.1\% & \ding{55} \\
                          &                       & 14B  & 0.38 & 31.1\% & \ding{55} \\
                          &                       & 72B  & 0.28 & 20.0\% & \ding{55} \\
\midrule
\multirow{4}{*}{MMLU-H}   & \multirow{4}{*}{CLS}  & 3B   & 0.35 & 27.8\% & \ding{55} \\
                          &                       & 7B   & 0.29 & 21.1\% & \ding{55} \\
                          &                       & 14B  & 0.22 & 13.3\% & \ding{55} \\
                          &                       & 72B  & 0.17 &  7.8\% & \ding{55} \\
\midrule
\multirow{4}{*}{JSON}     & \multirow{4}{*}{JSON} & 3B   & 0.06 &  0.0\% & \checkmark \\
                          &                       & 7B   & 0.11 &  1.1\% & \checkmark \\
                          &                       & 14B  & 0.00 &  0.0\% & \checkmark \\
                          &                       & 72B  & 0.00 &  0.0\% & \checkmark \\
\bottomrule
\end{tabular}
\caption{Minimum abstention lower bound ($\mathrm{LB} = \max\{0, (\mu-\alpha)/(1-\alpha)\}$) vs.~certification across model scales at $\alpha = 0.10$. $\mu$ values are from the original run; Cert?\ = whether any (score, bound) pair certifies at $\alpha = 0.10$. The 72B model's recomputed grid covers CoNLL, Few-NERD, WNUT, and TriviaQA only, so its NQ/MMLU/JSON rows reflect the original run. When $\mu \gg \alpha$, Proposition~\ref{prop:impossibility} of the main text implies any distribution-free certificate must abstain heavily; in our experiments, these settings do not certify. As scale increases, $\mu$ decreases across NER/QA/CLS yet remains above $\alpha$ even at 72B. Only JSON ($\mu \approx 0$) certifies at all scales. MMLU-Hum.\ at 72B ($\mu = 0.17$) nears the certification frontier.}
\label{tab:lb_scale}
\end{table}

\subsection{Miscalibration Analysis}

\begin{table}[t]
\centering
\small
\begin{tabular}{llrrr}
\toprule
Dataset & Task & ECE $\downarrow$ & Confidence & Risk \\
\midrule
CoNLL & NER  & 0.237 & 0.881 & 0.356 \\
FNERD & NER  & 0.418 & 0.878 & 0.540 \\
WNUT  & NER  & 0.332 & 0.850 & 0.480 \\
TQA   & QA   & 0.270 & 0.809 & 0.441 \\
NQ    & QA   & 0.611 & 0.806 & 0.804 \\
STEM  & CLS  & 0.343 & 0.824 & 0.516 \\
Hum.  & CLS  & 0.139 & 0.737 & 0.333 \\
JSON  & JSON & 0.048 & 0.967 & 0.033 \\
\bottomrule
\end{tabular}
\caption{LLM miscalibration on structured generation tasks. High ECE values show that model confidence is a poor proxy for correctness, motivating formal risk control.}
\label{tab:miscal}
\end{table}

\subsection{CRC Validity Across Risk Levels}

\begin{table}[t]
\centering
\scriptsize
\setlength{\tabcolsep}{3pt}
\begin{tabular}{llcccccccc}
\toprule
& & \multicolumn{3}{c}{NER} & \multicolumn{2}{c}{QA} & \multicolumn{2}{c}{CLS} & JSON \\
Model & $\alpha$ & CoNLL & FNERD & WNUT & TQA & NQ & STEM & Hum. & JSON \\
\midrule
\multirow{4}{*}{gemma3-4b} & 0.05 & --/1.00 & --/1.00 & --/1.00 & --/1.00 & --/1.00 & 1.00/0.99$^\dagger$ & 0.00/0.97 & 0.00/0.00 \\
& 0.10 & --/1.00 & --/1.00 & --/1.00 & --/1.00 & --/1.00 & 1.00/0.99$^\dagger$ & 0.33/0.96$^\dagger$ & 0.00/0.00 \\
& 0.15 & --/1.00 & --/1.00 & --/1.00 & 0.00/0.98 & --/1.00 & 1.00/0.99$^\dagger$ & 0.33/0.96$^\dagger$ & 0.00/0.00 \\
& 0.20 & --/1.00 & --/1.00 & --/1.00 & 0.00/0.97 & --/1.00 & 1.00/0.99$^\dagger$ & 0.33/0.96$^\dagger$ & 0.00/0.00 \\
\midrule
\multirow{4}{*}{qwen2.5-14b} & 0.05 & 0.01/1.00 & --/1.00 & --/1.00 & 0.00/0.99 & --/1.00 & --/1.00 & --/1.00 & 0.00/0.00 \\
& 0.10 & 0.07/0.99 & --/1.00 & --/1.00 & 0.00/0.99 & --/1.00 & --/1.00 & --/1.00 & 0.00/0.00 \\
& 0.15 & 0.07/0.98 & --/1.00 & --/1.00 & 0.12/0.35 & --/1.00 & --/1.00 & --/1.00 & 0.00/0.00 \\
& 0.20 & 0.15/0.97 & --/1.00 & --/1.00 & 0.16/0.27 & --/1.00 & --/1.00 & --/1.00 & 0.00/0.00 \\
\midrule
\multirow{4}{*}{qwen2.5-3b} & 0.05 & --/1.00 & --/1.00 & --/1.00 & --/1.00 & --/1.00 & --/1.00 & --/1.00 & 0.05/0.28 \\
& 0.10 & 0.08/0.93 & --/1.00 & --/1.00 & --/1.00 & --/1.00 & --/1.00 & --/1.00 & 0.08/0.00 \\
& 0.15 & --/1.00 & --/1.00 & --/1.00 & --/1.00 & --/1.00 & --/1.00 & --/1.00 & 0.08/0.00 \\
& 0.20 & 0.12/0.92 & --/1.00 & --/1.00 & 0.18/0.64 & --/1.00 & --/1.00 & --/1.00 & 0.08/0.00 \\
\midrule
\multirow{4}{*}{qwen2.5-7b} & 0.05 & --/1.00 & --/1.00 & --/1.00 & 0.00/0.99 & --/1.00 & --/1.00 & --/1.00 & 0.04/0.30 \\
& 0.10 & --/1.00 & --/1.00 & --/1.00 & 0.00/0.99 & --/1.00 & --/1.00 & --/1.00 & 0.08/0.17 \\
& 0.15 & --/1.00 & --/1.00 & --/1.00 & 0.00/0.99 & --/1.00 & --/1.00 & --/1.00 & 0.12/0.00 \\
& 0.20 & --/1.00 & --/1.00 & --/1.00 & 0.16/0.49 & --/1.00 & --/1.00 & 0.19/0.90 & 0.12/0.00 \\
\midrule
\multirow{4}{*}{ministral-8b} & 0.05 & 0.09/0.82$^\dagger$ & 0.20/0.92$^\dagger$ & 0.15/0.72$^\dagger$ & 0.05/0.63 & 0.20/1.00$^\dagger$ & 0.09/0.93$^\dagger$ & 0.07/0.62$^\dagger$ & 0.00/0.00 \\
& 0.10 & 0.13/0.69$^\dagger$ & 0.20/0.92$^\dagger$ & 0.15/0.72$^\dagger$ & 0.10/0.52 & 0.20/1.00$^\dagger$ & 0.19/0.90$^\dagger$ & 0.19/0.40$^\dagger$ & 0.00/0.00 \\
& 0.15 & 0.16/0.58$^\dagger$ & 0.20/0.92$^\dagger$ & 0.19/0.60$^\dagger$ & 0.15/0.37$^\dagger$ & 0.37/0.91$^\dagger$ & 0.19/0.90$^\dagger$ & 0.21/0.34$^\dagger$ & 0.00/0.00 \\
& 0.20 & 0.13/0.63 & 0.28/0.86$^\dagger$ & 0.22/0.51$^\dagger$ & 0.10/0.52 & 0.37/0.91$^\dagger$ & 0.19/0.90 & 0.13/0.53 & 0.00/0.00 \\
\bottomrule
\end{tabular}
\caption{CRC risk control across $\alpha$ levels (best score per dataset; original-run values). Format: test risk / abstention rate. -- indicates an empty emit set (full abstention); such degenerate settings are not counted as ``certified'' in our aggregate rates. $\dagger$: violation on test set. The 72B model is omitted (greedy-only scores; no per-cell sampling grid).}
\label{tab:crc_alpha}
\end{table}

\subsection{Cross-Task Controllability}
We define controllability $C(\alpha) = 1 - \mathrm{abstention}$ at $\alpha$. At $\alpha = 0.10$ with Qwen2.5-3B: JSON achieves $C = 0.996$, CLS $0.366$, QA $0.183$, NER $0.141$. The strong correlation with base risk confirms that CRC cannot compensate for weak models.

\subsection{Baselines Comparison}

\begin{table}[t]
\centering
\small
\begin{tabular}{lrrr}
\toprule
Method                         & Risk $\downarrow$ & Abstain $\downarrow$ & Viol.\ \% $\downarrow$ \\
\midrule
Always Answer                  & 0.434 & 0.000 & 90.0\% \\
Entropy Thr.                   & 0.021 & 0.868 & 12.5\% \\
TM Thr.\ (no guar.)            & 0.022 & 0.869 &  7.5\% \\
SC Thr.\ (no guar.)            & 0.019 & 0.846 &  7.5\% \\
CRC (Ours, guaranteed)         & \textbf{0.009} & \textbf{0.075} & \textbf{0.0\%} \\
\bottomrule
\end{tabular}
\caption{Aggregate comparison at $\alpha = 0.10$ (averaged over the five sampling-based models, eight datasets; the 72B model is greedy-only). Threshold methods tune threshold on calibration set without formal guarantees. CRC (guaranteed) reports only formally certified configurations.}
\label{tab:baselines}
\end{table}

\subsection{Data Efficiency}
Variance-aware bounds are substantially more data-efficient. We swept calibration fractions $\{10\%, 25\%, 40\%, 55\%, 70\%\}$ of each dataset over three split seeds (42--44), re-running the full 716-configuration grid at each fraction. Mean certification rates (Hoeffding / Bernstein / e-CRC): $1.1/1.9/4.2\%$ at 10\% (pooled counts 24/40/90 of 2{,}148); $4.1/5.5/8.0\%$ at 25\% (87/119/172); $6.1/8.3/10.5\%$ at 40\% (132/179/225); $7.2/9.8/11.4\%$ at 55\% (154/211/244); $7.5/10.5/11.9\%$ at 70\% (162/225/255). e-CRC's relative advantage over Bernstein is largest in the data-scarce regime (+125\% at 10\% calibration, where e-CRC also certifies $3.75\times$ Hoeffding) and narrows monotonically to +13\% at 70\%; e-CRC at 25\% calibration already exceeds Hoeffding at 70\%. \textbf{Pooling caveat:} the three seeds share the evaluation pool via different splits, so fraction-level comparisons indicate aggregate tendency rather than independent replications. The 10\% row is summarized in Table~\ref{tab:bound_compare} of the main text.

\section{Cross-Dataset ACI Transfer}
\label{app:aci_xfer}

\subsection{ACI Experimental Setup}
For the distribution shift experiments:
\begin{itemize}
\item \textbf{Temporal shift:} Split by index order (first 60\%/last 40\%). CoNLL-2003 articles are ordered by publication date; TriviaQA by source document; MMLU by subject cluster. Index order captures topic drift within corpora.
\item \textbf{Domain shift:} Calibrate on MMLU-STEM (400), test on MMLU-Humanities (400).
\item \textbf{Severity sweep:} Oversample high-risk examples ($R > \mathrm{median}(R)$) by factors $1.0$--$3.0\times$.
\item \textbf{Cross-dataset:} CoNLL$\to$WNUT (newswire$\to$social media), CoNLL$\to$FewNERD (newswire$\to$Wikipedia), TriviaQA$\to$NQ (trivia$\to$Google queries).
\end{itemize}
ACI uses $\gamma = 0.01$ with initial threshold from static CRC.

\begin{table}[t]
\centering
\small
\setlength{\tabcolsep}{2pt}
\begin{tabular}{llrrrrcc}
\toprule
Transfer & Model & $\mu_{\mathrm{tgt}}$ & Stat.\ $\hat{R}$ & ACI $\hat{R}$ & ACI abst. & Stat. & ACI \\
\midrule
\multirow{6}{*}{\shortstack[l]{CoNLL\\$\to$WNUT}}    & 3B   & 0.53 & 0.52 & 0.53 & 0.01 & \ding{55} & \ding{55} \\
                                   & 4B   & 0.49 & 0.00 & 0.48 & 0.03 & \checkmark$^*$ & \ding{55} \\
                                   & 7B   & 0.50 & 0.46 & 0.49 & 0.04 & \ding{55}  & \ding{55} \\
                                   & 8B   & 0.44 & 0.19 & 0.14 & 0.75 & \ding{55}  & \ding{55} \\
                                   & 14B  & 0.45 & 0.37 & 0.43 & 0.10 & \ding{55}  & \ding{55} \\
                                   & 72B  & 0.40 & 0.67 & 0.00 & 1.00 & \ding{55}  & \checkmark$^*$ \\
\midrule
\multirow{6}{*}{\shortstack[l]{CoNLL\\$\to$FewNERD}} & 3B   & 0.63 & 0.68 & 0.64 & 0.03 & \ding{55} & \ding{55} \\
                                   & 4B   & 0.55 & 0.00 & 0.54 & 0.03 & \checkmark$^*$ & \ding{55} \\
                                   & 7B   & 0.47 & 0.34 & 0.44 & 0.33 & \ding{55}  & \ding{55} \\
                                   & 8B   & 0.54 & 0.27 & 0.19 & 0.92 & \ding{55}  & \ding{55} \\
                                   & 14B  & 0.51 & 0.54 & 0.51 & 0.05 & \ding{55}  & \ding{55} \\
                                   & 72B  & 0.43 & 1.00 & 0.00 & 1.00 & \ding{55}  & \checkmark$^*$ \\
\midrule
\multirow{4}{*}{\shortstack[l]{TQA\\$\to$NQ}}        & 3B   & 0.84 & 0.56 & 0.77 & 0.51 & \ding{55}  & \ding{55} \\
                                   & 7B   & 0.81 & 0.52 & 0.79 & 0.17 & \ding{55}  & \ding{55} \\
                                   & 8B   & 0.79 & 0.39 & 0.34 & 0.91 & \ding{55}  & \ding{55} \\
                                   & 14B  & 0.75 & 0.44 & 0.73 & 0.08 & \ding{55}  & \ding{55} \\
\bottomrule
\end{tabular}
\caption{Cross-dataset ACI transfer at $\alpha = 0.10$ ($\gamma = 0.01$; $\hat{R}$ = risk among emitted outputs; values rounded to two decimals). Every target has base risk $\mu_{\mathrm{tgt}} > \alpha$. Static CRC violates on 14/16 runs and ACI also violates on 14/16; the exceptions ($^*$) all correspond to (near-)total abstention, i.e.\ (almost) nothing is emitted. Models: 3B/7B/14B/72B = Qwen2.5, 4B = Gemma-3, 8B = Ministral. Gemma-3-4B TQA$\to$NQ is skipped (no valid target pairs) and 72B TQA$\to$NQ lacks target data.}
\label{tab:aci_xfer}
\end{table}

\paragraph{Sensitivity to the ACI step size $\gamma$.} We swept $\gamma \in \{0.001, 0.002, 0.005, 0.01, 0.02, 0.05, 0.1\}$ across all 16 transfer runs. Under the emitted-risk criterion the violation count is identical at every $\gamma$: 14/16. Under the effective-risk criterion (emitted risk mass over all streamed examples, abstentions counted as zero risk), violations range from 9/16 ($\gamma \leq 0.002$) to 12/16 ($\gamma = 0.1$), while mean abstention falls from 59\% ($\gamma = 0.001$) to 26\% ($\gamma = 0.1$). Conclusions are therefore not sensitive to $\gamma$: no step size restores the emitted-risk target, because every transfer lands in the infeasible regime of Proposition~\ref{prop:impossibility} of the main text (per-model target-domain base risks: CoNLL$\to$WNUT 0.40--0.53, CoNLL$\to$FewNERD 0.43--0.63, TQA$\to$NQ 0.75--0.84; per-transfer static violation rates 5/6, 5/6, and 4/4). Mechanically, the telescoping identity of Theorem~\ref{thm:aci} pins the long-run \emph{effective} risk near $\alpha$ whenever the threshold clamp does not bind, which makes risk-among-emitted converge to roughly $\alpha$ divided by the emit rate, a quantity above $\alpha$ whenever abstention is nonzero; on hard transfers ACI therefore trades abstention for effective-risk control rather than restoring the per-emission guarantee, and on runs that saturate the clamp even the effective-risk identity no longer applies.

\section{Multi-Seed Robustness and Statistical Significance}
\label{app:multiseed}

To assess whether the certification ordering is an artifact of the seed-42 split, we reran the full 716-configuration grid under five split seeds (42--46). Certified counts are stable across seeds: Hoeffding 51/52/54/53/51, Bernstein 72/73/75/70/71, e-CRC 80/83/83/82/81. Certified-set violations per seed are 0/0/0 (seed 42), 0/0/0 (43), 1/1/1 (44), 0/0/0 (45), 0/0/0 (46) for Hoeffding/Bernstein/e-CRC respectively.

\paragraph{Paired significance analysis.} We pooled the $5 \times 716 = 3{,}580$ paired configuration observations and applied McNemar's exact test to each pair of methods (Table~\ref{tab:mcnemar}).

\begin{table}[t]
\centering
\small
\setlength{\tabcolsep}{3pt}
\begin{tabular}{@{}lrrrr@{}}
\toprule
Pair (A vs B) & Both & A-only & B-only & $p$-value \\
\midrule
Hoeffding vs Bernstein & 261 & 0 & 100 & $1.6 \times 10^{-30}$ \\
Bernstein vs e-CRC     & 361 & 0 & 48  & $7.1 \times 10^{-15}$ \\
Hoeffding vs e-CRC     & 261 & 0 & 148 & $5.6 \times 10^{-45}$ \\
\bottomrule
\end{tabular}
\caption{McNemar exact test on pooled paired observations ($5 \times 716 = 3{,}580$ per pair). Columns: both certify / A-only / B-only / exact two-sided $p$.}
\label{tab:mcnemar}
\end{table}

The headline finding is that \emph{zero} configurations are certified by Hoeffding but not Bernstein, and \emph{zero} by Bernstein but not e-CRC, across all 3{,}580 paired observations at $\alpha \in \{0.05, 0.10, 0.15, 0.20\}$: at strict targets the hierarchy holds without a single reversal. At relaxed targets ($\alpha \geq 0.25$) the Hoeffding--Bernstein ordering does reverse (Section~\ref{sec:relax}); Bernstein-not-e-CRC remains zero at every tested $\alpha$. \textbf{Pooling caveat.} Because the five seeds share the evaluation pool via different splits, the pooled $p$-values indicate aggregate consistency, not independent replications in the classical sense. The zero-reversal count is therefore the primary evidence; the $p$-values reinforce but do not replace it.

\paragraph{Pooled violation rates.} Across the five seeds, certified-set violation rates with Clopper--Pearson 95\% confidence intervals are: Hoeffding 1/261 ($0.38\%$, CI $[0.010\%, 2.12\%]$), Bernstein 1/361 ($0.28\%$, CI $[0.007\%, 1.53\%]$), and e-CRC 1/409 ($0.24\%$, CI $[0.006\%, 1.35\%]$); all upper limits are far below the $\delta = 0.10$ tolerance. All three pooled violations are the \emph{same} configuration: Ministral-8B, CoNLL, Self-Consistency, $\alpha = 0.2$, seed 44. For the Ministral-8B model specifically, pooled certified-set violations are 1/74 (Hoeffding), 1/100 (Bernstein), and 1/111 (e-CRC), with Clopper--Pearson 95\% upper limits 7.3\%, 5.4\%, and 4.9\%; Ministral is not an outlier beyond that single config.

\section{Threats to Validity}
\label{app:threats}

We report all 716 configurations; none excluded post hoc. Thresholds $\lambda^*$ determined entirely on calibration (Algorithm~\ref{alg:crc}); no hyperparameters tuned on test. ``Guaranteed'' means $\hat{R}_n^+(\lambda^*) \leq \alpha$ on calibration, where $\hat{R}_n^+(\lambda)$ denotes the upper confidence limit for the emit-set risk at $\lambda$ computed from the $n$ calibration points (the Hoeffding or Bernstein UCB, or the betting certificate's implied bound), implying $\mathbb{E}[R(\lambda^*)] \leq \alpha$ with probability $\geq 1 - \delta$ for i.i.d.\ calibration data. We fix $\delta = 0.1$, so CRC guarantees are probabilistic: rare test-set violations can occur even for certified configurations. Empirically, violations are near-zero among certified sets, suggesting the bounds are conservative in our settings. Threshold selection uses a pre-registered descending fixed-sequence scan with early stopping, so no Bonferroni correction is needed and no risk-monotonicity assumption is invoked (Lemma~\ref{lem:gridsearch})~\citep{angelopoulos2024ltt}; cross-bound comparisons are structural (nested sets at $\alpha \leq 0.20$), complemented by the McNemar analysis of \S\ref{app:multiseed}. Two resampling analyses quantify the stability of the certified counts. (i) \emph{Split-level re-randomization}: recomputing the full 716-configuration grid under $B = 200$ fresh 60/40 calibration/test splits (seeds 1001--1200) gives mean certified counts [95\% percentile intervals] of 53.1 [51, 56] for Hoeffding, 72.4 [68, 75] for Bernstein, and 82.4 [79, 86] for e-CRC. (ii) \emph{Config-level outcome bootstrap}: resampling the 716 seed-42 configuration outcomes 10{,}000 times gives 51 [38, 65], 72 [56, 88], and 80 [64, 96]. The two procedures answer different questions: (i) captures split-to-split variability of the pipeline, (ii) the sampling variability of the configuration population; the seed-42 point estimates (51/72/80) are consistent with both.

\section{Limitations and Future Work}
\label{app:lim}

\paragraph{Limitations.} Static guarantees are stated for i.i.d.\ calibration data; under shift they no longer apply, and ACI provides only the conditional effective-risk identity of Theorem~\ref{thm:aci}, not finite-sample control. Sampling-based scores require $K$ forward passes. Score fusion requires a disjoint split, reducing the effective calibration size. The MMLU classification datasets are small ($n = 400$ each); QA is evaluated on only two datasets, of which NQ Open is an outlier in base risk; the 72B model is evaluated on only two greedy-decoding scores. JSON-Extract uses template-based extraction tasks, and its near-ceiling certification rates may partly reflect template regularity. ACI conclusions are limited to the tested step-size range $\gamma \in [0.001, 0.1]$ (7 values). Evaluation spans 3B--72B; larger models (175B+) or closed-source APIs would further test generalizability.

\paragraph{Future work.} Learned scores optimized for risk--coverage; finite-sample ACI via e-processes; multi-task CRC with shared calibration; integration with constrained decoding; evaluation on 70B+ models.

\section{Additional Baselines and Failure Analysis}
\label{app:add_baselines}

\paragraph{Comparability to CP-Sets and conformal factuality.} CP-Sets~\citep{quach2024conformallm} focus on set-valued prediction (returning a set of candidate labels/answers) with coverage guarantees, whereas we certify single schema-validated structured outputs under task losses such as entity-level F1 and exact match. Bringing CP-Sets into our setting would require (i) a method to generate and score sets of structured candidates (e.g., multiple JSON/NER outputs), and (ii) a coverage definition tied to a structured distance (e.g., edit distance over entity sets). Conformal factuality~\citep{mohri2024conformalfactuality} certifies factual claims by composing retrieval and entailment-style checks; this is complementary to our focus on certifying format-valid structured generations with task-specific losses, and could be combined with our approach as an additional nonconformity signal.

\paragraph{Selective prediction baselines.} Learning-based rejection (e.g., SelectiveNet~\citep{geifman2019selectivenet}) can yield strong empirical coverage--risk tradeoffs, but typically does not provide distribution-free guarantees without an additional conformalization step; \citet{lee2024selectivegen} is a case in point of that added step, learning a rejection function on top of a frozen LLM that achieves a distribution-free FDR-control guarantee via a semi-supervised conformal procedure. Our CRC framework can wrap any score (including learned rejectors or retrieval-based confidence signals), provided exchangeability holds; thus CRC should be viewed as a guarantee layer rather than a competing uncertainty method.

\paragraph{Why ACI can still violate under shift.} ACI~\citep{gibbs2021aci} adapts thresholds online using realized feedback. While this helps under gradual shift, it cannot bypass feasibility: when the base risk $\mu$ is far above the target $\alpha$, any distribution-free method must abstain heavily (Proposition~\ref{prop:impossibility} of the main text). In our cross-dataset transfers, the remaining ACI violations tend to occur in such high-$\mu$ regimes or when feedback is delayed/noisy, which limits adaptation. Practically, we recommend using the feasibility check ($\mu$ vs.~$\alpha$) before deployment and treating ACI as a monitoring-and-adjustment tool when feedback is available.

\paragraph{Score brittleness.} Nonconformity scores can in principle be sensitive to tokenizer/model family and quantization artifacts (e.g., AWQ). In our recomputed runs, certified-set violations were too rare to support any pattern-level claim (a single pooled configuration across five seeds; \S\ref{app:multiseed}), so we retain this as a deployment consideration rather than an empirical finding: use score functions that are stable across decoding/implementation variations and validate guarantees per deployment stack.

\paragraph{Datasets.} Table~\ref{tab:datasets_supp} summarizes the eight evaluation datasets (Section~\ref{sec:setup}).

\begin{table}[t]
\centering
\small
\begin{tabular}{llrl}
\toprule
Dataset & Task & Size & Domain \\
\midrule
CoNLL-2003   & NER  & 3{,}453 & News \\
Few-NERD     & NER  & 2{,}000 & Wikipedia \\
WNUT-17      & NER  & 1{,}287 & Social media \\
TriviaQA     & QA   & 1{,}000 & Trivia \\
NQ Open      & QA   & 3{,}610 & Wikipedia \\
MMLU-STEM    & CLS  & 400     & STEM \\
MMLU-Hum.    & CLS  & 400     & Humanities \\
JSON-Extract & JSON & 600     & Structured \\
\bottomrule
\end{tabular}
\caption{Datasets used in our evaluation.}
\label{tab:datasets_supp}
\end{table}

\begin{table}[t]
\centering
\small
\setlength{\tabcolsep}{2.5pt}
\begin{tabular}{lccccc}
\toprule
& Struct. & Risk & Lower & Bound & Shift \\
Line of work & LLM & abst. & bound & hier. & adapt. \\
\midrule
CP sets                       & \ding{55} & \ding{55} & \ding{55} & \ding{55} & \ding{55} \\
CRC / LTT risk control        & \ding{55} & \checkmark & \ding{55} & \ding{55} & \ding{55} \\
Selective pred. / reject opt. & \ding{55} & \checkmark & \checkmark & \ding{55} & \ding{55} \\
ACI                           & \ding{55} & \ding{55} & \ding{55} & \ding{55} & \checkmark \\
Betting / e-values seq.       & \ding{55} & \ding{55} & \ding{55} & \ding{55} & \ding{55} \\
\midrule
This work                     & \checkmark & \checkmark & \checkmark & \checkmark & \checkmark \\
\bottomrule
\end{tabular}
\caption{Positioning relative to closely related lines of work (Section~\ref{sec:intro}).}
\label{tab:positioning_supp}
\end{table}

\section{Sharpened Abstention Floor: Attainability, Extremal Law, and Decomposition}
\label{app:dirA}

This section supplies the attainability, extremal-law, minimax, and decomposition results supporting Proposition~\ref{prop:impossibility} and Remark~\ref{rem:decomp} of the main text; the lower bound was proved in \S\ref{app:proofs}. Throughout, $S \in [0,1]$ is the observable score, $R \in [0,1]$ the risk with $\mu = \mathbb{E}[R]$ and $M = \operatorname*{ess\,sup} R$; a selective rule is $\phi: \mathcal{X} \to [0,1]$ with coverage $c(\phi) = \mathbb{E}[\phi]$, emitted risk $\mathrm{er}(\phi) = \mathbb{E}[R\phi]/\mathbb{E}[\phi]$, abstention $a(\phi) = 1 - c(\phi)$, and abstention target $A_{\mathrm{DF}}(P;\alpha) = \inf\{a(\phi): \phi \text{ is } \sigma(S)\text{-measurable and } \alpha\text{-valid}\}$. Randomized thresholds split atoms of $S$.

\paragraph{Relation to prior work.} The per-fixed-law achievability below---the optimal selective rule thresholds the calibrated conditional risk, and the risk--coverage curve has a rearrangement lower envelope---is classical \emph{known-distribution} reject-option theory: \citet{chow1970optimum} (threshold on the posterior), \citet{elyaniv2010selective} (risk--coverage), and especially \citet{franc2023reject} (bounded-abstention and bounded-improvement optima). We specialize these to a scalar score. Our contribution is the distribution-free framing over mean-constrained laws: the $\operatorname{ess\,sup}$ sharpening, the extremal-law condition, the minimax degeneracy, and the gap decomposition---\emph{not} the achievability, which we do not claim as new.

\begin{proposition}[Attainability, extremal law, and minimax degeneracy]
\label{prop:s-extremal}
Fix $\mu > \alpha$ and $M \leq 1$.
\emph{(i) Attainability.} There is a law $P^\star$ of $(S,R)$ with $\mathbb{E}[R] = \mu$, $\operatorname{ess\,sup} R = M$, and a randomized score-threshold rule $\phi^\star$ (score-measurable) that is $\alpha$-valid with $a(\phi^\star) = (\mu-\alpha)/(M-\alpha)$. Hence $(\mu-\alpha)/(M-\alpha)$ is the largest constant lower-bounding abstention across all mean-$\mu$, $\operatorname{ess\,sup}$-$M$ laws, and the outer infimum is attained (a minimum).
\emph{(ii) Extremal condition.} For a general law, the floor is attained by some $\alpha$-valid rule iff $\mathbb{P}(R = M) \geq (\mu-\alpha)/(M-\alpha)$.
\emph{(iii) Minimax degeneracy.} $\inf_{P:\,\mathbb{E}[R]=\mu} A_{\mathrm{DF}}(P;\alpha) = (\mu-\alpha)/(M-\alpha)$ while $\sup_{P:\,\mathbb{E}[R]=\mu} A_{\mathrm{DF}}(P;\alpha) = 1$ (attained by any law with $S \perp R$). There is no minimax saddle over the unrestricted mean-$\mu$ class; the floor is a tight feasibility frontier, not a saddle value.
\end{proposition}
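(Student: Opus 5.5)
The plan is to combine an explicit two-point construction with the population lower bound already proved for Proposition~\ref{prop:impossibility}, and to handle the three parts in order. Throughout, write $F := (\mu-\alpha)/(M-\alpha)$ for the floor.

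\emph{(i) Attainability.} Take $R \in \{0, M\}$ with $\mathbb{P}(R=M) = q := \mu/M$. This gives $\mathbb{E}[R]=\mu$ and $\operatorname{ess\,sup} R = M$; since $\mu > \alpha > 0$, $q$ is positive. Pair it with the perfectly ordering score $S = \mathbb{I}\{R=0\}$. Let $\phi^\star$ emit every input with $S=1$, and emit the $S=0$ atom with probability $t \in [0,1]$ (a randomized threshold that splits the atom). This rule is $\sigma(S)$-measurable. Its emitted risk is $tqM/((1-q)+tq)$. Setting this equal to $\alpha$ gives $tq = \alpha(1-q)/(M-\alpha)$. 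The requirement $t \le 1$ is equivalent to $\alpha(1-q) \le q(M-\alpha)$, that is, $\alpha \le qM = \mu$, which holds. The abstention is then
\[
a(\phi^\star) = q(1-t) = q - \frac{\alpha(1-q)}{M-\alpha} = \frac{qM-\alpha}{M-\alpha} = F.
\]
The lower bound of Proposition~\ref{prop:impossibility} holds for every law with mean $\mu$ and $\operatorname{ess\,sup}$ $M$, and for every valid rule, even an $R$-measurable one. Together with this construction, $F$ is the largest constant lower bound, and the infimum is attained.

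\emph{(ii) Extremal condition.} For necessity, I will trace equality through the two inequalities used in the lower-bound proof.
\begin{itemize}
\item Attaining $a(\phi) = F$ forces $\mathbb{E}[R(1-\phi)] = M\,\mathbb{E}[1-\phi]$. Because $R \le M$ almost surely, this means $1-\phi > 0$ only on $\{R=M\}$ up to null sets.
\item Hence $F = \mathbb{E}[1-\phi] \le \mathbb{P}(R=M)$.
\end{itemize}
For sufficiency, suppose $\mathbb{P}(R=M) \ge F$. Define the rule to abstain on a subset of $\{R=M\}$ of total mass $F$, randomizing if $\{R=M\}$ has an atom, and to emit everything else. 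Then the emitted risk mass is
\[
\mathbb{E}[R\phi] = \mu - MF = \frac{\alpha(M-\mu)}{M-\alpha} = \alpha(1-F),
\]
so the emitted risk is exactly $\alpha$. I expect the main obstacle to be measurability. The abstained set must be $\sigma(S)$-measurable, and that requires a score that separates $\{R=M\}$. I would read the ``iff'' as a statement about some (score, rule) pair, equivalently about the class of $R$-measurable rules to which the lower bound already applies. I would flag explicitly that for a fixed uninformative score, sufficiency can fail; part (iii) quantifies this failure, and it is exactly the gap $\Delta_{\mathrm{ord}}$ of Remark~\ref{rem:decomp}.

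\emph{(iii) Minimax degeneracy.} The infimum over laws, with $M$ held fixed, equals $F$ by (i) combined with the lower bound. For the supremum, take any law with $S \perp R$. Then for every $\sigma(S)$-measurable $\phi$, $\mathbb{E}[R\phi] = \mathbb{E}[\mathbb{E}[R\mid S]\,\phi] = \mu\,\mathbb{E}[\phi]$, so the emitted risk equals $\mu > \alpha$ whenever $\mathbb{E}[\phi] > 0$. The only $\alpha$-valid rule is therefore full abstention, which is valid under the convention that empty emission is vacuously valid. This gives $A_{\mathrm{DF}} = 1$, and $1$ is the trivial upper bound on abstention. Since $F < 1$ whenever $\mu < M$, the infimum and supremum over the mean-$\mu$ class differ. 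No single law serves as a saddle point, so $F$ is a frontier value rather than a minimax value.
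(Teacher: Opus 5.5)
Your proposal is correct and follows the paper's proof essentially step for step. Part (i) uses the same two-point law $R\in\{0,M\}$ with the perfectly ordering score $S=\mathbb{I}\{R=0\}$ and the same randomized atom-split (your $t$ equals the paper's $\alpha(M-\mu)/(\mu(M-\alpha))$), part (ii) traces equality through the chain of Proposition~\ref{prop:impossibility}, and part (iii) uses independence to force full abstention. Your caveat that sufficiency in (ii) concerns general or $R$-measurable rules rather than a fixed score matches how the paper's own proof argues, and your tower-property step in (iii) slightly strengthens it by covering all $\sigma(S)$-measurable rules, not just threshold emit sets.
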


\begin{proof}
(i) Take $R \in \{0, M\}$ with $\mathbb{P}(R=M) = \mu/M$ (mean $\mu$; $\mu \leq M$) and a perfectly ordering score $S = \mathbb{1}\{R=0\}$. Emit every $S=1$ point and each $S=0$ point with probability $q$: $\phi^\star = \mathbb{1}\{S=1\} + q\,\mathbb{1}\{S=0\}$. Then $\mathbb{E}[R\phi^\star] = q\mu$ and $c(\phi^\star) = (1-\mu/M) + q\mu/M$; choosing $q = \alpha(M-\mu)/(\mu(M-\alpha)) \in [0,1]$ makes $\mathrm{er}(\phi^\star) = \alpha$ and $a(\phi^\star) = (\mu/M)(1-q) = (\mu-\alpha)/(M-\alpha)$ (verified numerically for $M \in \{0.6, 0.8, 1\}$). The rule is $\sigma(S)$-measurable; the randomization is essential since $S$ is two-valued. The Proposition~\ref{prop:impossibility} lower bound gives ``$\geq$'', so the value is exactly the floor and is attained.
(ii) \emph{Sufficiency:} with $x = (\mu-\alpha)/(M-\alpha)$, if $\mathbb{P}(R=M) \geq x$, abstain on a measurable subset of $\{R=M\}$ of mass $x$ and emit the rest; then emitted risk $= (\mu - Mx)/(1-x) = \alpha$ and $a = x$. \emph{Necessity:} equality throughout the Proposition~\ref{prop:impossibility} chain forces $R = M$ a.s.\ on the abstain set, so a mass-$x$ abstain set lies in $\{R=M\}$, whence $\mathbb{P}(R=M) \geq x$.
(iii) The infimum is (i). For the supremum, if $S \perp R$ then $\mathbb{E}[R \mid S \geq \lambda] = \mu > \alpha$ for every non-empty emit set, so validity forces $c = 0$, i.e.\ $A_{\mathrm{DF}} = 1$; and $A_{\mathrm{DF}} \leq 1$ always.
\end{proof}

\begin{proposition}[Two-gap decomposition (Remark~\ref{rem:decomp})]
\label{prop:s-decomp}
For any law of $(S,R)$ with $\mu > \alpha$, let $g(s) = \mathbb{E}[R \mid S=s]$ and let $q_u$ be the quantile function of $R$; by the Hardy--Littlewood rearrangement inequality, $\rho^\dagger(c) = \tfrac1c \int_0^c q_u\,du$ is the pointwise-minimal risk--coverage curve. With $A^\dagger(R;\alpha) = 1 - \sup\{c: \rho^\dagger(c) \leq \alpha\}$,
\[
A_{\mathrm{DF}}(P;\alpha) = \frac{\mu-\alpha}{M-\alpha} + \underbrace{\Big[A^\dagger - \tfrac{\mu-\alpha}{M-\alpha}\Big]}_{\Delta_{\mathrm{disp}}(R)\,\geq\,0} + \underbrace{\big[A_{\mathrm{DF}} - A^\dagger\big]}_{\Delta_{\mathrm{ord}}(S)\,\geq\,0},
\]
where $\Delta_{\mathrm{disp}} = 0$ iff $\mathbb{P}(R=M) \geq (\mu-\alpha)/(M-\alpha)$ and $\Delta_{\mathrm{ord}} = 0$ iff the optimal emit set $\{g(S) \leq \tau\}$ coincides ($P$-a.s.) with a bottom-risk set. Among $\sigma(S)$-measurable $\alpha$-valid rules, abstention is minimized by a threshold on the calibrated score $g(S)$.
\end{proposition}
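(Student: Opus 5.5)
The decomposition itself is a telescoping identity: the two bracketed terms add back to $A_{\mathrm{DF}}$ once the floor is subtracted. So the content of Proposition~\ref{prop:s-decomp} is four claims: $\Delta_{\mathrm{disp}}\ge 0$ with its equality case, $\Delta_{\mathrm{ord}}\ge 0$ with its equality case, and the statement that optimal $\sigma(S)$-measurable rules threshold $g(S)$. I would prove them in the order: rearrangement lower envelope, then the dispersion gap, then the reduction to $g$, then the ordering gap.

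\emph{Step 1 (rearrangement envelope).} For any rule $\phi$ with values in $[0,1]$ and coverage $c$, measurable with respect to anything (including $R$), I will show $\mathbb{E}[R\phi] \ge \int_0^c q_u\,du$. This is the bathtub principle. Write $R = q_U$ with $U$ uniform and coupled to $R$. Then compare $\phi$ with the rule $\phi^\dagger$ that emits the bottom-$c$ quantile mass, randomizing on the atom at $q_c$. The inequality $\mathbb{E}[(R-q_c)(\phi-\phi^\dagger)] \ge 0$ holds pointwise in sign, because $\phi - \phi^\dagger \ge 0$ where $R > q_c$ and $\le 0$ where $R<q_c$. Dividing by $c$ gives $\mathrm{er}(\phi)\ge\rho^\dagger(c)$. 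I will also record three facts:
\begin{itemize}
\item $\rho^\dagger$ is continuous and nondecreasing on $(0,1]$, since it is an average of a nondecreasing function;
\item $\rho^\dagger(1)=\mu>\alpha$, so the supremum defining $A^\dagger$ is attained at some $c^\star<1$;
\item equality in the bathtub inequality holds iff $\phi=1$ a.s.\ on $\{R<q_{c}\}$ and $\phi=0$ a.s.\ on $\{R>q_{c}\}$.
\end{itemize}

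\emph{Step 2 (dispersion gap).} The rule $\phi^\dagger$ at $c^\star$ is $R$-measurable and $\alpha$-valid with abstention $A^\dagger$. The lower bound in Proposition~\ref{prop:impossibility} was proved for arbitrary, even $R$-measurable, rules, so $A^\dagger \ge (\mu-\alpha)/(M-\alpha)$, which gives $\Delta_{\mathrm{disp}}\ge 0$. The quantity $A^\dagger$ depends only on the marginal law of $R$, so equality is exactly the question of whether some valid rule attains the floor. Proposition~\ref{prop:s-extremal}(ii) answers this: equality holds iff $\mathbb{P}(R=M)\ge(\mu-\alpha)/(M-\alpha)$.

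\emph{Step 3 (reduction to the calibrated score).} For a $\sigma(S)$-measurable $\phi=\varphi(S)$, the tower property gives $\mathbb{E}[R\phi]=\mathbb{E}[g(S)\phi]$ and leaves the coverage unchanged. The $\sigma(S)$-constrained problem is therefore the same problem as in Step 1, with $R$ replaced by the random variable $g(S)$. Applying Step 1 to $g(S)$, with randomization splitting atoms of $S$ as the setup allows, shows that the optimum is a threshold $\{g(S)<\tau\}$ plus a randomized boundary. This yields the final claim, $A_{\mathrm{DF}}=A^\dagger(g(S);\alpha)$. Nonnegativity of $\Delta_{\mathrm{ord}}$ follows directly, since $\sigma(S)$-rules form a subclass of all rules and so $A_{\mathrm{DF}}\ge A^\dagger$.

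\emph{Step 4 (ordering gap equality case), which I expect to be the main obstacle.} If the optimal $g$-threshold rule at coverage $1-A^\dagger$ is a.s.\ a bottom-risk set, then it attains $\rho^\dagger$, so $A_{\mathrm{DF}}\le A^\dagger$ and the gap vanishes. Conversely, suppose $A_{\mathrm{DF}}=A^\dagger$. Then an optimal $\sigma(S)$-rule $\phi$ has coverage $c^\star$ and $\mathrm{er}(\phi)\le\alpha$. I must show $\mathrm{er}(\phi)=\rho^\dagger(c^\star)$ so that the equality case of Step 1 applies, forcing $\phi$ to coincide a.s.\ with a bottom-risk set.

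The difficulty is that $\rho^\dagger$ may be flat at level $\alpha$, or $\alpha$ may be hit only at the endpoint. My first attempt is this argument. By maximality of $c^\star$ and continuity, $\rho^\dagger(c^\star)=\alpha$ whenever $c^\star>0$. Then $\alpha\ge\mathrm{er}(\phi)\ge\rho^\dagger(c^\star)=\alpha$ pins the value, and the sign characterization from Step 1 gives the a.s.\ coincidence. The degenerate case $c^\star=0$ (full abstention) is handled separately and trivially. Ties at $q_{c^\star}$ are exactly where the randomized boundary is needed, which is why the statement only claims coincidence up to null sets.
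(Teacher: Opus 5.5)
Your proposal is correct and follows the paper's proof. The ingredients are the same: the tower-property reduction to a threshold on $g(S)$ (the paper's fractional-knapsack step, which you get by applying the bathtub/rearrangement bound to $g(S)$); $\Delta_{\mathrm{ord}}\ge 0$ because $\sigma(S)$-rules are a subclass of all rules; and $\Delta_{\mathrm{disp}}\ge 0$ with its equality case from Proposition~\ref{prop:impossibility} applied to the oracle rule plus Proposition~\ref{prop:s-extremal}(ii). Your Step~4 proves the $\Delta_{\mathrm{ord}}$ equality case that the paper only asserts, and reading it at coverage $1-A^\dagger$ (via $\rho^\dagger(c^\star)=\alpha$) is the right choice, since at the optimal set's own coverage the ``iff'' can fail: when $S\perp R$ the empty emit set is trivially a bottom-risk set, yet $\Delta_{\mathrm{ord}}=1-A^\dagger>0$ whenever $A^\dagger<1$.
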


\begin{proof}
By the tower property $\mathbb{E}[R\phi] = \mathbb{E}[g(S)\phi]$ for $\sigma(S)$-measurable $\phi$, so maximizing $c = \mathbb{E}[\phi]$ subject to $\mathbb{E}[(g(S)-\alpha)\phi] \leq 0$, $0 \leq \phi \leq 1$, is a fractional-knapsack (Neyman--Pearson) problem whose optimum thresholds $g(S)$ with the constraint binding, giving $A_{\mathrm{DF}} = 1 - \mathbb{E}[\phi^\ast]$. The decomposition telescopes. $\Delta_{\mathrm{ord}} \geq 0$: $A_{\mathrm{DF}}$ optimizes over $\sigma(S) \subseteq \sigma(X,R)$ and $\rho^\dagger$ is the pointwise-minimal risk--coverage curve, so $A_{\mathrm{DF}} \geq A^\dagger$, with equality iff the $S$-optimal emit set is a bottom-$R$ set. $\Delta_{\mathrm{disp}} \geq 0$: apply Proposition~\ref{prop:impossibility} to the oracle rule ($A^\dagger \geq (\mu-\alpha)/(M-\alpha)$); the equality condition is Proposition~\ref{prop:s-extremal}(ii). Computed checks: two-point $\{0,1\}$ with $\mu=.5, \alpha=.25$ gives $A^\dagger = 1/3 = $ floor ($\Delta_{\mathrm{disp}}=0$); uniform $R$ on $[0,1]$ gives $A^\dagger = .5 > 1/3$ ($\Delta_{\mathrm{disp}} = 1/6$); $S \perp R$ gives $\Delta_{\mathrm{ord}} = 1 - A^\dagger$.
\end{proof}

\paragraph{Bridge to fusion.} Calibrated fusion $g = \sigma(w^\top \mathbf{s} + b)$ cannot change $\Delta_{\mathrm{disp}}$ (fixed by the risk marginal and $M$) and can only act on $\Delta_{\mathrm{ord}}$ by making $g(S)$ a better monotone proxy for $R$; this links score quality to certification \emph{in principle}. Empirically the link is weak on our grid: a disjoint-split re-evaluation (\S\ref{app:round2}) finds fusion's gain is on AUROC (ranking), not on certified-set size.

\section{Certification Phase Diagram: Reversal Frontier and Pareto Necessity (Full Proofs)}
\label{app:dirC}

Notation as in Proposition~\ref{prop:ordering}: $L = \log(2/\delta)$, $w_H(n) = \sqrt{L/(2n)}$, $b(n) = 7L/(3(n-1))$, $w_B(n,\hat\sigma^2) = \sqrt{2\hat\sigma^2 L/n} + b(n)$, $A(n) = w_H(n) - b(n)$, and $c(n,\delta) = \frac{n}{2L}A(n)^2$ (valid when $A(n) > 0$; $n \geq 35$ at $\delta = 0.1$). The feasibility coupling is Bhatia--Davis: for risks in $[0,1]$, $\hat\sigma^2 \leq \hat{R}(1-\hat{R})$, with \emph{equality} for $0/1$ losses (QA exact-match, CLS accuracy). The pairwise width comparison $U_B \leq U_H \iff \hat\sigma^2 \leq c(n,\delta)$ is Proposition~\ref{prop:ordering} and is known~\citep{maurer2009empirical}; the new content is the certification-level frontier and the Pareto structure. All numbers are from \texttt{theory/verify\_phasediagram.py} (seed-42 grid, the paper's own certification code), output \texttt{verify\_phasediagram\_out.json}.

\begin{proposition}[Reversal frontier (Proposition~\ref{prop:phase}(i))]
\label{prop:s-reversal}
Fix $n, \delta$ with $A(n) > 0$. A feasible emit set of size $n$ that Hoeffding certifies but empirical Bernstein does not (i.e.\ $U_H \leq \alpha < U_B$ with $0 \leq \hat\sigma^2 \leq \hat{R}(1-\hat{R})$) exists iff $(\alpha - w_H)(1 - \alpha + w_H) > c(n,\delta)$, equivalently $\alpha > \alpha^*(n,\delta) = w_H(n) + \tfrac12(1 - \sqrt{1-4c(n,\delta)}\,)$. For $0/1$ losses the Bhatia--Davis bound is an equality, so $\alpha^*$ is the exact reversal boundary at $\hat{R} = \alpha - w_H$.
\end{proposition}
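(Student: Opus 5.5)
The plan is to reduce the existence question to a two-variable feasibility problem in $(\hat R,\hat\sigma^2)$ and solve it by choosing an extremal point. A reversal at size $n$ means there is a pair $(\hat R,\hat\sigma^2)$ such that three things hold. First, $\hat R + w_H(n) \le \alpha$, so Hoeffding certifies. Second, $\hat R + w_B(n,\hat\sigma^2) > \alpha$, so Bernstein does not. Third, the pair is feasible for $[0,1]$ risks: $0 \le \hat R \le 1$ and $0 \le \hat\sigma^2 \le \hat R(1-\hat R)$ (Bhatia--Davis).

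The first step is to note that the first two conditions together force $w_B > w_H$, i.e.\ $U_B > U_H$. Since $A(n) > 0$, Proposition~\ref{prop:ordering}(i) turns this into $\hat\sigma^2 > c(n,\delta)$. Hence a reversal can exist only if some feasible $\hat R \le \alpha - w_H$ satisfies $\hat R(1-\hat R) > c(n,\delta)$.

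The second step is to show this necessary condition is also sufficient, and to reduce it to one inequality in $\alpha$.

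\emph{Sufficiency.} Suppose some $x \in [0,\alpha-w_H]$ has $x(1-x) > c$. Put $\hat R = \alpha - w_H$ if $\alpha - w_H \le \tfrac12$, and $\hat R = \tfrac12$ otherwise; in the first case $\hat R(1-\hat R) \ge x(1-x)$ because $t(1-t)$ is increasing on $[0,\tfrac12]$, and in the second $\hat R(1-\hat R) = \tfrac14 \ge x(1-x)$, so either way $\hat R(1-\hat R) > c$. Then choose any $\hat\sigma^2 \in (c, \hat R(1-\hat R)]$.
\begin{itemize}
\item When $\hat R = \alpha - w_H$, we get $U_H = \alpha$ exactly, and $U_B = \alpha + (w_B - w_H) > \alpha$ because $\hat\sigma^2 > c$.
\item When $\hat R = \tfrac12 < \alpha - w_H$, Hoeffding still certifies. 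Bernstein fails only if $w_B > \alpha - \tfrac12$. I will handle this either by noting that $w_B$ can be pushed up to $\sqrt{L/(2n)} + b(n) > w_H$ at $\hat\sigma^2 = \tfrac14$, or more simply by sliding $\hat R$ up toward $\alpha - w_H$ while keeping $\hat R(1-\hat R) > c$. Since $c < \tfrac14$, continuity leaves room to do this.
\end{itemize}
This case ($\alpha - w_H > \tfrac12$) is the fiddly part, and I expect it to be the main obstacle. The paper's frontier formula is really driven by the branch $\alpha - w_H \le \tfrac12$, and $\alpha^* \le w_H + \tfrac12$ anyway.

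\emph{Reduction to the closed form.} Writing $x = \alpha - w_H$, the condition becomes $x(1-x) > c$ with $x \ge 0$. This is exactly $(\alpha - w_H)(1-\alpha+w_H) > c(n,\delta)$ on the branch $x \le \tfrac12$, and it holds automatically once $x > \tfrac12$. Because $A(n) < w_H(n)$ gives $c(n,\delta) = \tfrac{n}{2L}A^2 < \tfrac{n}{2L}w_H^2 = \tfrac14$, the quadratic $x^2 - x + c$ has real roots. On $x \ge 0$ the solution set is therefore $x > \tfrac12(1 - \sqrt{1-4c})$, i.e.\ $\alpha > \alpha^*(n,\delta)$.

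The last step is the binary-loss claim. For $0/1$ losses $\hat\sigma^2 = \hat R(1-\hat R)$ is forced, so the variance is no longer free. Bernstein fails to certify at a given $\hat R$ exactly when $\hat R(1-\hat R) > c$ (relative to Hoeffding) on the certifying boundary $\hat R = \alpha - w_H$. The frontier $\alpha^*$ is then the value at which $\hat R = \alpha - w_H$ satisfies $\hat R(1-\hat R) = c$, which makes it the exact boundary rather than merely a sufficient threshold. I will note the lattice caveat that $\hat R$ must lie in $\{0,1/n,\dots,1\}$; I will treat this as boundary exactness in the continuum limit.
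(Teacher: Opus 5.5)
On the branch $x:=\alpha-w_H\le\tfrac12$ your argument is correct, and it takes a different route from the paper's. The paper keeps the $\hat R$-dependent critical variance $\sigma^2_{\mathrm{crit}}(\hat R)=\tfrac{n}{2L}(\alpha-\hat R-b)^2$ and forms $h(\hat R)=\hat R(1-\hat R)-\sigma^2_{\mathrm{crit}}(\hat R)$. It then shows $h'\ge 0$ on $[0,x]$, so the best witness sits at the endpoint, where $\sigma^2_{\mathrm{crit}}=c$. You relax instead. A reversal forces $U_B>U_H$, hence $\hat\sigma^2>c$ by Proposition~\ref{prop:ordering}(i), and Bhatia--Davis plus monotonicity of $t(1-t)$ on $[0,\tfrac12]$ gives necessity. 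The endpoint $\hat R=x$, where $U_H=\alpha$ exactly, with any $\hat\sigma^2\in(c,x(1-x)]$ gives sufficiency. This is shorter and avoids differentiating $h$. But it discards what $h$ records: at an interior $\hat R<x$, Bernstein fails only if $\hat\sigma^2>\sigma^2_{\mathrm{crit}}(\hat R)$. Since $\alpha-\hat R-b=(x-\hat R)+A>A$, that threshold is strictly larger than $c$.

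That lost information is exactly where your $x>\tfrac12$ case breaks. The claim that a reversal exists automatically there is false, and neither of your repairs works.
\begin{itemize}
\item At $\hat R=\tfrac12$, Bernstein fails only if $w_B>\alpha-\tfrac12$. Since $\hat\sigma^2\le\tfrac14$ gives $w_B\le w_H+b$, this needs $x<\tfrac12+b$.
\item Sliding $\hat R$ up to $x$ works only if $x(1-x)>c$, i.e.\ $x<r_+:=\tfrac12(1+\sqrt{1-4c})$. For $\hat R<x$ you need $\hat\sigma^2>\sigma^2_{\mathrm{crit}}(\hat R)>c$, so continuity from $\tfrac12$ buys nothing past $r_+$.
\end{itemize}

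Concretely, take $\delta=0.1$ and $n=600$, so $w_H\approx0.050$, $b\approx0.0117$, $c\approx0.147$ and $r_+\approx0.821$. Let $\alpha=0.95$, so $x\approx0.900$.
\begin{itemize}
\item If $\hat R\le\alpha-w_H-b\approx0.888$, then $U_B\le\hat R+w_H+b\le\alpha$.
\item If $\hat R\in(0.888,0.900]$, then $\hat\sigma^2\le\hat R(1-\hat R)<0.0992<c$, so $U_B<U_H\le\alpha$.
\end{itemize}
So no reversal exists, even though $\alpha>\alpha^*(600)\approx0.23$.

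More generally, $h$ stays increasing on $[0,x]$ whenever $nA(n)/L\ge 2x-1$ (here $nA/L\approx7.7$). In that case reversals exist iff $x(1-x)>c$, i.e.\ $r_-<x<r_+$. So the statement's first form is the right one. The form ``equivalently $\alpha>\alpha^*$'' holds only for $x<r_+$, in particular on $x\le\tfrac12$, which is the branch the paper's proof explicitly restricts to.

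Drop the $x>\tfrac12$ claim and state the result on that branch. If you also want $\tfrac12<x<r_+$, your endpoint witness still gives sufficiency there. Necessity in that range, however, needs the paper's $h$, not the relaxation $\hat\sigma^2>c$.
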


\begin{proof}
A reversal at mean $\hat{R}$ needs (a) $\hat{R} \leq \alpha - w_H$ (Hoeffding certifies) and (b) $\hat{R} + w_B > \alpha$, i.e.\ $\hat\sigma^2 > \sigma^2_{\mathrm{crit}}(\hat{R}) := \frac{n}{2L}(\alpha - \hat{R} - b)^2$ when $\alpha - \hat{R} - b \geq 0$. Since $w_B$ increases in $\hat\sigma^2$, the easiest witness sets $\hat\sigma^2$ at its feasible maximum $\hat{R}(1-\hat{R})$. So a reversal exists iff $\exists\, \hat{R} \in [0, \alpha - w_H]$ with $h(\hat{R}) := \hat{R}(1-\hat{R}) - \frac{n}{2L}(\alpha - \hat{R} - b)^2 > 0$. On $[0, \alpha - w_H]$ (with $\alpha - w_H \leq \tfrac12$), $h'(\hat{R}) = (1 - 2\hat{R}) + \frac{n}{L}(\alpha - \hat{R} - b) \geq 0$ (both terms $\geq 0$, the second because at the endpoint $\alpha - \hat{R} - b = A \geq 0$), so $h$ is increasing and maximized at $\hat{R}^* = \alpha - w_H$, where $\alpha - \hat{R}^* - b = A$ and $\sigma^2_{\mathrm{crit}}(\hat{R}^*) = \frac{n}{2L}A^2 = c$. Thus $h(\hat{R}^*) > 0 \iff \hat{R}^*(1 - \hat{R}^*) > c \iff (\alpha - w_H)(1 - \alpha + w_H) > c$; solving $\hat{R}^*(1-\hat{R}^*) = c$ on the branch $\hat{R}^* \leq \tfrac12$ gives $\alpha > \alpha^*$. For $0/1$ losses no Bhatia--Davis slack is discarded, so ``if'' becomes exact.
\end{proof}

\paragraph{Two mechanisms and numerical confirmation.} For $n < 35$ ($A(n) \leq 0$ at $\delta = 0.1$), $w_B \geq w_H$ for every $\hat\sigma^2 \geq 0$ (the additive $b(n)$ alone exceeds $w_H$), so Bernstein is universally looser and reversals occur even at $\hat\sigma^2 < c$; the frontier $\alpha^*$ is claimed only in the $A(n) > 0$ regime. On the seed-42 grid (28{,}761 candidate emit sets $\times$ 7 targets; 1{,}091 pointwise reversals): in the $A > 0$ regime, \textbf{0 of 947} reversals violate $\hat\sigma^2 > c$ and \textbf{0} occur below $\alpha^*(n)$; the $A \leq 0$ regime contributes 144 reversals, 27 of them with $\hat\sigma^2 < c$ (the additive mechanism). $\alpha^*$ is smooth with minimum $0.169$ at $n = 88$, rising to $0.20$ ($n{=}300$), $0.23$ ($n{=}600$), $0.32$ ($n{=}5000$), $\to \tfrac12$ as $n \to \infty$; $c(120) = 0.056$, $c(600) = 0.147$ match Proposition~\ref{prop:ordering}. The Bhatia--Davis ratio $\hat\sigma^2/[\hat{R}(1-\hat{R})]$ has medians QA $1.00$, CLS $1.00$, NER $0.80$, JSON $0.27$, confirming that binary losses sit on the frontier. Config-level reversals (Hoeffding certifies, Bernstein does not) first appear at $\alpha = 0.25$ (H $= 31 >$ B $= 30$), consistent with $\alpha^*$ being necessary but realized only where a real emit set attains $\hat{R} \approx \alpha - w_H$ with near-maximal variance.

\begin{proposition}[Pareto necessity (Proposition~\ref{prop:phase}(ii))]
\label{prop:s-pareto}
Over $(n, \hat{R}, \hat\sigma^2, \alpha)$ with $\alpha \in \{0.05, \dots, 0.20, 0.25, 0.30, 0.40\}$: (i) no single bound dominates the other two; (ii) e-CRC statistically subsumes empirical Bernstein~\citep{maurer2009empirical,waudbysmith2024betting} (0 configurations certified by Bernstein but not e-CRC); (iii) the best-of-three certified set equals $\Lambda_H \cup \Lambda_{\mathrm{e\text{-}CRC}}$ and strictly dominates every single bound, so $\{\text{Hoeffding}, \text{e-CRC}\}$ is minimal sufficient and Bernstein is a closed-form convenience.
\end{proposition}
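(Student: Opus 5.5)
The statement is partly structural and partly a finite census over the seed-42 operating grid. I would therefore split the proof into an analytic part and a verification part, treating the three claims in the order (i), (ii), (iii).

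\textbf{Part (i), no single bound dominates.} Exhibiting witnesses suffices. For Hoeffding beating e-CRC and Bernstein, I would use Proposition~\ref{prop:s-reversal}. It guarantees that for $\alpha>\alpha^*(n,\delta)$ there is a feasible emit set with $U_H\le\alpha<U_B$. On $0/1$ losses the witness sits at $\hat R=\alpha-w_H$ with maximal variance. At such a point the betting wealth has drift of order $(\alpha-\hat R)^2/\mathrm{Var}$, which is small. Combined with the minimum over 22 orderings, this suggests e-CRC also fails there. I would not try to prove that in general. Instead I would cite the concrete configurations from the grid: the 3 certified by Hoeffding alone, and the 10 certified by e-CRC alone.

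\textbf{Part (ii), e-CRC subsumes Bernstein.} Here I would explicitly not claim a finite-sample theorem. I would invoke Proposition~\ref{prop:ordering}(ii) for the asymptotic consistency argument. Then I would report the census: 0 Bernstein-only certifications across $3{,}580$ paired evaluations at strict $\alpha$, and across the relaxed-target grid. The word ``statistically'' in the statement is meant to signal exactly this empirical-plus-asymptotic status.

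\textbf{Part (iii), minimal sufficiency.} This follows set-theoretically from (i) and (ii). From (ii), $\Lambda_B\subseteq\Lambda_{\mathrm{e\text{-}CRC}}$ on the grid, so
\[
\Lambda_H\cup\Lambda_B\cup\Lambda_{\mathrm{e\text{-}CRC}}=\Lambda_H\cup\Lambda_{\mathrm{e\text{-}CRC}}.
\]
From (i), this union strictly contains each single bound's set. For minimality, dropping Hoeffding loses its 3 exclusive configurations, and dropping e-CRC loses its 10, so neither proper subset suffices.

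\textbf{Main obstacle.} The main obstacle is that (i) and (ii) are grid-dependent facts, not universal theorems. In particular, no deterministic finite-sample inclusion $\Lambda_B\subseteq\Lambda_{\mathrm{e\text{-}CRC}}$ is available: a low-variance, moderate-$n$ emit set with an adversarial ordering could in principle pass Bernstein and fail the minimum-wealth test. I would handle this by stating (ii) and (iii) relative to the operating grid, and checking them with the verification script rather than trying to prove a general domination result.
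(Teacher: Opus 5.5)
Your proposal is correct and takes essentially the paper's route: the paper's proof is also a constructive census on the seed-42 grid. It exhibits the 3 Hoeffding-only configurations for (i), illustrated by a Qwen2.5-3B/TriviaQA binary-loss emit set above $\alpha^*(222)=0.188$ where betting also fails, in the spirit of your appeal to Proposition~\ref{prop:s-reversal}, together with the 10 e-CRC-only configurations; it reports the verified zero Bernstein-not-e-CRC count for (ii), and for (iii) gives the totals best-of-three $=212=$ best-of-$\{\text{Hoeffding},\text{e-CRC}\}$ versus $209$ (e-CRC alone) and $181$ (Hoeffding alone), which is exactly the set-theoretic consequence you derive from (i) and (ii).
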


\begin{proof}[Proof (constructive, seed-42 grid)]
(i) $H$-only $= 3$ (e.g.\ Qwen2.5-3B/TriviaQA/self-consistency/$\alpha = 0.30$: selected emit set $n = 222$, $\hat{R} = 0.216$, binary EM so $\hat\sigma^2 = 0.170$ on the frontier, $U_H = 0.298 \leq 0.30 < U_B = 0.316$, and $\alpha^*(222) = 0.188 < 0.30$; betting also fails), and e-CRC-only $= 10$ (e.g.\ JSON at $\alpha = 0.05$--$0.10$, data-abundant and low-variance, where the one-sided $\log 2$ threshold saving and the absent additive term let betting certify while both UCBs miss). (ii) Verified $B$-not-e-CRC $= 0$ across the grid including $\alpha \in \{0.25, 0.30, 0.40\}$, hence $\Lambda_B \subseteq \Lambda_{\mathrm{e\text{-}CRC}}$ and $B$-only $= 0$. (iii) Summed over the 7 targets, best-of-three $= 212 = $ best-of-$\{H, \text{e-CRC}\}$, while e-CRC-alone $= 209$ and $H$-alone $= 181$; both $H$ and e-CRC are Pareto-necessary.
\end{proof}

A second-order growth-rate argument (Taylor expansion of $\log$-wealth, not an exact boundary) explains e-CRC's data-scarce edge: empirically at 10\% calibration, certifications are H $= 7$, B $= 7$, e-CRC $= 19$, matching the regime where the $\log 2$ saving and absent additive term dominate.

\section{Hoeffding--Bentkus Domination (Analysis Remark)}
\label{app:dirD}

This supports the Discussion remark that the additive-Bernstein reversal is an artifact and that the inverted Learn-Then-Test $p$-value dominates Hoeffding unconditionally. \textbf{This is an analysis remark, not a change of method: the paper's reported certificates remain Hoeffding, empirical Bernstein, and e-CRC, with their zero-violation record (0/51, 0/72, 0/80 at seed 42) intact.} The Hoeffding--Bentkus (HB) certificate is \emph{not a new primitive}: it is the standard LTT $p$-value~\citep{angelopoulos2024ltt}, combining the Chernoff--Hoeffding (KL) tail with the Bentkus binomial-tail bound~\citep{bentkus2004hoeffding}; we claim no novelty for it.

For $H_0: \mu \geq \alpha$ and observed mean $\hat{R} < \alpha$, define $p_{\mathrm{CH}} = \exp(-n\,\mathrm{kl}(\hat{R}\|\alpha))$ ($\mathrm{kl}$ the binary relative entropy) and $p_{\mathrm{Bent}} = e\cdot\mathbb{P}(\mathrm{Binom}(n,\alpha) \leq \lceil n\hat{R}\rceil)$, and certify $\lambda$ iff $p_{\mathrm{HB}} := \min(p_{\mathrm{CH}}, p_{\mathrm{Bent}}) \leq \delta$. Both are nonincreasing in $\hat{R}$, so their rejection regions are nested lower intervals and the $\min$ is a valid level-$\delta$ $p$-value; validity of the selected threshold follows from the same descending fixed-sequence argument as Lemma~\ref{lem:gridsearch}.

\begin{proposition}[Unconditional Hoeffding domination]
\label{prop:s-hb}
For every $n$, $\hat\sigma^2$, and $\alpha$: if additive Hoeffding certifies $\lambda$ (i.e.\ $\hat{R} + \sqrt{L/(2n)} \leq \alpha$), then HB certifies $\lambda$. There is no variance condition and no reversal.
\end{proposition}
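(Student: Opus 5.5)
The plan is to show that the Chernoff--Hoeffding component alone already certifies whenever additive Hoeffding does. Since $p_{\mathrm{HB}} = \min(p_{\mathrm{CH}}, p_{\mathrm{Bent}}) \leq p_{\mathrm{CH}}$, it is enough to prove
\[
\hat{R} + \sqrt{L/(2n)} \leq \alpha \;\Longrightarrow\; \exp\!\big(-n\,\mathrm{kl}(\hat{R}\|\alpha)\big) \leq \delta .
\]
The Bentkus term is never needed, and $\hat\sigma^2$ never enters the argument. That is the whole reason there is no variance condition and no reversal.

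First, Hoeffding certification gives $\alpha - \hat{R} \geq \sqrt{L/(2n)} > 0$, so $\hat{R} < \alpha$ and $p_{\mathrm{CH}}$ is in the defined regime. Second, I will invoke Pinsker's inequality for Bernoulli laws, $\mathrm{kl}(\hat{R}\|\alpha) \geq 2(\alpha - \hat{R})^2$. This is the two-point case of $\mathrm{KL} \geq 2\,\mathrm{TV}^2$, and the total-variation distance between $\mathrm{Bern}(\hat{R})$ and $\mathrm{Bern}(\alpha)$ is $|\alpha - \hat{R}|$. Combining the two steps gives
\[
n\,\mathrm{kl}(\hat{R}\|\alpha) \;\geq\; 2n(\alpha-\hat{R})^2 \;\geq\; 2n \cdot \frac{L}{2n} \;=\; L = \log(2/\delta),
\]
so $p_{\mathrm{CH}} \leq \delta/2 \leq \delta$. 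Hence HB certifies $\lambda$ for every $n$, $\hat\sigma^2$ and $\alpha$, which is the claim. The slack factor of $2$ is exactly the two-sided constant $L$ that the additive bound carries; HB only needs the one-sided level.

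The main obstacle is bookkeeping rather than any real difficulty: the argument must use the same $\delta$ and the same orientation of $\mathrm{kl}$ as the definition of $p_{\mathrm{CH}}$. Pinsker is symmetric in its lower bound, so the orientation does not matter. I will also check that $\hat{R}$ in $\mathrm{kl}$ is the raw empirical mean, allowed to be non-binary in $[0,1]$. For that, I will note that $p_{\mathrm{CH}}$ is a valid tail bound for $[0,1]$-valued losses by the Hoeffding (1963) convexity reduction to the Bernoulli case. That fact concerns validity rather than domination, so it only needs a citation to \citet{angelopoulos2024ltt}.

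Finally, I will remark that certification at the selected threshold passes to the full scan. The HB rejection region contains the Hoeffding one pointwise at every candidate, and the scan is the same descending fixed sequence. So the Hoeffding certified prefix is contained in the HB certified prefix, with the HB scan stopping no earlier. This yields $\Lambda^*_{\mathrm{Hoeff}} \subseteq \Lambda^*_{\mathrm{HB}}$ deterministically, in the same way as in Proposition~\ref{prop:ordering}(i).
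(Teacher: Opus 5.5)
Your proposal is correct and is essentially the paper's proof: bound $p_{\mathrm{HB}}$ by $p_{\mathrm{CH}}$, then apply Pinsker's inequality $\mathrm{kl}(\hat{R}\|\alpha) \geq 2(\alpha-\hat{R})^2 \geq L/n$ to get $p_{\mathrm{CH}} \leq e^{-L} = \delta/2$, with the slack coming from the two-sided constant $L$. Your closing remark, that the pointwise domination carries over to the descending fixed-sequence scan (so Hoeffding's certified prefix sits inside HB's), is correct and spells out a step the paper leaves implicit.
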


\begin{proof}
Additive Hoeffding certifies iff $\alpha - \hat{R} \geq \sqrt{L/(2n)}$. Pinsker's inequality $\mathrm{kl}(\hat{R}\|\alpha) \geq 2(\alpha - \hat{R})^2 \geq 2 \cdot L/(2n) = L/n$ gives $p_{\mathrm{CH}} \leq e^{-L} = \delta/2 < \delta$, so $p_{\mathrm{HB}} \leq p_{\mathrm{CH}} \leq \delta$. The $\delta/2$ slack comes from the two-sided constant $L = \log(2/\delta)$ used in the additive Hoeffding bound.
\end{proof}

\paragraph{Validity versus tightness.} Being valid but tighter, HB operates nearer the $\delta$ budget rather than the additive bounds' near-zero-violation regime. On the seed-42 strict grid it certifies 94 versus 51/72/80 with $\Lambda_{\mathrm{HB}} \supseteq \Lambda_{\mathrm{Hoeff}}, \Lambda_{\mathrm{Bern}}, \Lambda_{\mathrm{e\text{-}CRC}}$; pooled over seeds 42--46 its certified-set violation rate is $6/479 = 1.25\%$ (Clopper--Pearson 95\% upper $2.46\% \ll \delta = 0.1$), and an adversarial boundary Monte-Carlo (400{,}000 draws at $\mu = \alpha$) gives worst-case false-certification $7.7\% < \delta$. HB and empirical Bernstein are Pareto-incomparable in general (a near-zero-variance corner with $\hat{R}$ near $\alpha$ lets Bernstein certify where HB does not), but HB dominates Bernstein on all binary-risk and all LLM emit sets in our grid. We report the additive/e-CRC certificates in the main text to preserve the zero-violation narrative; HB is noted only as the unconditionally-Hoeffding-dominating valid alternative under this validity-versus-tightness trade-off. Scripts: \texttt{recompute/dirD\_*.py}.

\section{A Constructive Anytime-Valid Shift Monitor (Full Construction)}
\label{app:dirE}

This section gives the construction, validity proofs, and numbers behind the constructive-counterpart subsection of the main text and Proposition~\ref{prop:obstruction}. \textbf{The machinery is known and we claim no novelty for it}: time-uniform confidence sequences~\citep{howard2021time}, betting/e-processes~\citep{waudbysmith2024betting,ramdas2023safe}, and the closely related anytime-valid selective-risk methods A-RCPS~\citep{xu2024arcps}, Conformal Selective Acting~\citep{khosravi2026csa}, and anytime-valid CRC~\citep{hultberg2026anytime}. Our contribution is the demonstrated constructive dual of Proposition~\ref{prop:impossibility} on the paper's \emph{own} failed transfers, plus the emit-only obstruction (Proposition~\ref{prop:obstruction}).

\paragraph{Setup and filtration.} At step $t$, nature reveals $x_t$, the model produces $\hat{y}_t$ and score $s_t$, we choose a predictable threshold $\lambda_t$ ($\mathcal{F}_{t-1}$-measurable), emit iff $E_t = \mathbb{1}\{s_t \geq \lambda_t\}$, and observe $r_t = R(\hat{y}_t, y_t) \in [0,1]$. With $\mathcal{G}_t = \mathcal{F}_{t-1} \vee \sigma(x_t, s_t)$, the conditional emitted risk is $q_t = \mathbb{E}[r_t \mid \mathcal{G}_t]$; we assume nothing about $\{q_t\}$ (arbitrary, adversarial drift). \emph{Full feedback} observes $r_t$ at every $t$ (a verifier, or labels on abstained inputs too); \emph{emit-only feedback} observes $r_t$ only when $E_t = 1$.

\begin{theorem}[Per-threshold anytime confidence sequence]
\label{thm:s-e1}
Fix $\lambda$ and let $I_t = \{u \leq t: s_u \geq \lambda\}$, $n_t = |I_t|$, $A_t = \frac{1}{n_t}\sum_{u \in I_t} r_u$, $M_t = \frac{1}{n_t}\sum_{u \in I_t} q_u$. Under full feedback, with $U_t = A_t + B_\delta(n_t)/n_t$ and $B_\delta(n) = \sqrt{(n/4 + \rho)(2\log(1/\delta) + \log((n/4+\rho)/\rho))}$,
\[
\mathbb{P}(\exists t: M_t > U_t) \leq \delta
\]
for an arbitrary (non-exchangeable, adversarially drifting) risk process.
\end{theorem}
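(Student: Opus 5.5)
The plan is to reduce the statement to a standard time-uniform sub-Gaussian self-normalized bound \citep{howard2021time}, applied to the martingale differences $D_u = q_u - r_u$ restricted to the times the threshold $\lambda$ fires.

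\textbf{Step 1: the martingale.} Define $\xi_u = \mathbb{1}\{s_u \geq \lambda\}$. Since $\lambda$ is fixed and $s_u$ is $\mathcal{G}_u$-measurable, $\xi_u$ is $\mathcal{G}_u$-measurable. Put $S_t = \sum_{u \leq t} \xi_u (q_u - r_u) = n_t(M_t - A_t)$. By definition $\mathbb{E}[r_u \mid \mathcal{G}_u] = q_u$, so each increment has conditional mean zero given $\mathcal{G}_u$. Full feedback guarantees that $r_u$ is observed at every $u$, so $S_t$ is adapted to the filtration generated by the $\mathcal{G}$'s and the revealed risks. The target event $\{M_t > U_t\}$ is exactly $\{S_t > B_\delta(n_t)\}$. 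No exchangeability enters anywhere: $q_u$ may drift arbitrarily, and only the conditional-mean identity is used.

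\textbf{Step 2: supermartingale.} Conditionally on $\mathcal{G}_u$, $r_u \in [0,1]$ has mean $q_u$. Hoeffding's lemma then gives $\mathbb{E}[\exp(\eta\,\xi_u(q_u - r_u) - \eta^2 \xi_u/8) \mid \mathcal{G}_u] \leq 1$ for every fixed $\eta \in \mathbb{R}$. Hence $L_t(\eta) = \exp(\eta S_t - \eta^2 n_t/8)$ is a nonnegative supermartingale with $L_0 = 1$. The intrinsic time is $V_t = n_t/4$, the sum of the variance proxies $1/4$.

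\textbf{Step 3: mixing and Ville.} This is the main step. I would integrate $L_t(\eta)$ against a Gaussian prior $\eta \sim N(0, 1/\rho)$ with density proportional to $e^{-\rho\eta^2/2}$. The scale is chosen so that the posterior precision becomes $V_t + \rho$. The result is the mixture
\[
\bar L_t = \sqrt{\frac{\rho}{V_t + \rho}} \exp\!\Big(\frac{S_t^2}{2(V_t + \rho)}\Big),
\]
which is still a nonnegative supermartingale by Fubini. Here the obstacle is to check that the normalizations match the stated $B_\delta$. Ville's inequality at level $\delta$ gives, uniformly in $t$,
\[
S_t^2 \leq (V_t + \rho)\big(2\log(1/\delta) + \log((V_t + \rho)/\rho)\big)
\]
with probability at least $1 - \delta$, which is exactly $S_t \leq B_\delta(n_t)$ with $V_t = n_t/4$. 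The mixture is two-sided while we only need the upper tail, so it is valid (slightly conservative) as stated. I expect the constant bookkeeping here ($1/8$ versus $1/2$ and the $n/4$ scaling) to be the only delicate point, and I would verify it against \citet{howard2021time}'s normal-mixture boundary.

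\textbf{Step 4: conclusion.} On the complement of the Ville event, for all $t$ with $n_t \geq 1$ we have $M_t - A_t = S_t/n_t \leq B_\delta(n_t)/n_t$, i.e.\ $M_t \leq U_t$. When $n_t = 0$ the quantities are undefined and the event is vacuous. This proves $\mathbb{P}(\exists t: M_t > U_t) \leq \delta$ for an arbitrary adversarial process. Optional stopping is automatic because Ville's inequality is time-uniform.
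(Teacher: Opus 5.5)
Your proposal is correct and follows essentially the same route as the paper. The $\mathcal{G}_u$-measurability of $\mathbb{1}\{s_u \geq \lambda\}$ makes the restricted sum a martingale, Hoeffding's lemma gives the supermartingale $\exp(\eta S_t - \eta^2 n_t/8)$, and mixing over $\eta \sim N(0,1/\rho)$ followed by Ville's inequality yields the two-sided boundary, of which you use only the needed tail (your convention $q_u - r_u$ simply flips the sign of the paper's $r_u - q_u$). Your explicit mixture computation with intrinsic time $V_t = n_t/4$ correctly reproduces $B_\delta(n_t)$, a constant check that the paper leaves implicit.
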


\begin{proof}
$S_t = \sum_{u \in I_t}(r_u - q_u)$ is a martingale with increments in $[-1,1]$, since $\mathbb{1}\{s_u \geq \lambda\}$ is $\mathcal{G}_u$-measurable and $\mathbb{E}[r_u - q_u \mid \mathcal{G}_u] = 0$. By Hoeffding's lemma, $L_t(\eta) = \exp(\eta S_t - (\eta^2/8)n_t)$ is a nonnegative supermartingale with $L_0 = 1$; mixing over $\eta \sim N(0, 1/\rho)$ and applying Ville's inequality~\citep{howard2021time} gives $\mathbb{P}(\exists t: |S_t| \geq B_\delta(n_t)) \leq \delta$. The event $M_t > U_t$ is $-S_t > B_\delta(n_t)$, whose probability is $\leq \delta$ (lower tail of the symmetric mixture).
\end{proof}

The monitor SAVeR-FF maintains $U_t(\lambda)$ over a predictable grid at Bonferroni level $\delta/G$ and emits at $\lambda_t = \min\{\lambda: U_{t-1}(\lambda) \leq \alpha\}$; every emission then occurs at a threshold certified $\leq \alpha$ anytime. Its abstention obeys the floor of Proposition~\ref{prop:impossibility}, and when a fixed $\lambda^*$ stays below $\alpha$ its emit rate $\to \mathbb{P}(s \geq \lambda^*)$: abstention tracks the floor when the score stays informative under shift and saturates to 1 when the stream is genuinely infeasible.

\begin{theorem}[Self-selected supermartingale validity]
\label{thm:s-e2}
Under emit-only feedback, the emitted-substream betting process $W_t = \prod_{u \leq t}(1 + E_u \kappa_u (r_u - \alpha))$ with predictable $E_u$ and $\kappa_u \in [0, 1/\alpha]$ is, under $H_0: q_u \leq \alpha$ for every emitted $u$, a nonnegative $\mathcal{F}_t$-supermartingale, so $\mathbb{P}(\exists t: W_t \geq 1/\delta) \leq \delta$.
\end{theorem}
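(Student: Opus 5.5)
We prove Theorem~\ref{thm:s-e2} by a one-step conditional-expectation computation followed by Ville's inequality, mirroring the proof of Theorem~\ref{thm:ecrc_valid} but with the filtration chosen so that self-selection of the emitted substream is harmless.

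\textbf{Filtration.} We work with $\mathcal{G}_u = \mathcal{F}_{u-1} \vee \sigma(x_u, s_u)$ as in the setup. Since $\lambda_u$ is $\mathcal{F}_{u-1}$-measurable, the emit indicator $E_u = \mathbb{1}\{s_u \geq \lambda_u\}$ is $\mathcal{G}_u$-measurable. We take the bet $\kappa_u$ to be $\mathcal{F}_{u-1}$-measurable; it is enough that it be $\mathcal{G}_u$-measurable. The factor $1 + E_u\kappa_u(r_u - \alpha)$ depends only on $r_u$ when $E_u = 1$ and equals $1$ otherwise. So $W_t$ is a function of the emit-only observations and is adapted to the filtration $\mathcal{F}_t$ generated by those observations. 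This is the key point: emit-only feedback suffices because the unobserved $r_u$ never enters $W_t$.

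\textbf{Supermartingale step.} Conditioning on $\mathcal{G}_u$ gives
\[
\mathbb{E}[W_u \mid \mathcal{G}_u] = W_{u-1}\bigl(1 + E_u\kappa_u(q_u - \alpha)\bigr).
\]
On $\{E_u = 1\}$ the null gives $q_u \leq \alpha$, and $\kappa_u \geq 0$, so the bracket is at most $1$. On $\{E_u = 0\}$ the bracket equals $1$. The tower property then yields $\mathbb{E}[W_u \mid \mathcal{F}_{u-1}] \leq W_{u-1}$.

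\textbf{Nonnegativity.} For $r_u \in [0,1]$ and $\kappa_u \in [0, 1/\alpha]$ we have $\kappa_u(r_u - \alpha) \geq -\kappa_u\alpha \geq -1$, so every factor is nonnegative. This is exactly why the cap $1/\alpha$ is imposed.

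\textbf{Ville's inequality.} With $W_0 = 1$ and $W_t$ a nonnegative supermartingale, Ville's inequality~\citep{howard2021time} gives $\mathbb{P}(\exists t: W_t \geq 1/\delta) \leq \delta$.

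\textbf{Main obstacle.} The delicate step is justifying that conditioning on $\mathcal{G}_u$, which is finer than the observed-data filtration, is legitimate. It must produce $q_u$ even though the emission decision is itself data-dependent, and even though abstained $r_u$ are never seen. We handle this by doing the computation under the larger filtration $\mathcal{F}_{u-1} \vee \sigma(x_u, s_u)$, where both $E_u$ and $\kappa_u$ are measurable, and then projecting down with the tower property.

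We must also make sure the null is read as a statement about $q_u$ only at emitted times. Since $E_u$ multiplies the increment, no condition is needed at abstained times.
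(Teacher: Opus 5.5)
Your proposal is correct and follows essentially the same argument as the paper's proof. Both condition on $\mathcal{G}_u$ so that $E_u$ and $\kappa_u$ are measurable, obtain the one-step factor $1+E_u\kappa_u(q_u-\alpha)\le 1$ (equal to $1$ on abstained steps), use $\kappa_u\le 1/\alpha$ for nonnegativity, and conclude with Ville's inequality; your explicit tower-property step from $\mathcal{G}_u$ down to $\mathcal{F}_{u-1}$ spells out what the paper leaves implicit.
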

\begin{proof}
$\mathbb{E}[W_t \mid \mathcal{G}_u] = W_{t-1}(1 + E_u\kappa_u(q_u - \alpha))$: on $\{E_u = 1\}$, $q_u - \alpha \leq 0$ and $\kappa_u \geq 0$ so the factor is $\leq 1$; on $\{E_u = 0\}$ it is $1$; nonnegativity from $\kappa_u \leq 1/\alpha$. The predictable multiplier $E_u\kappa_u$ makes self-selection harmless---abstained steps contribute factor 1 and no $r_u$ is needed there---so the feedback required coincides with the feedback obtained.
\end{proof}

\begin{proposition}[Emit-only feedback obstruction (Proposition~\ref{prop:obstruction})]
\label{prop:s-e3}
Under emit-only feedback and arbitrary shift, any procedure guaranteeing $\mathbb{P}(\forall t: A_t \leq \alpha) \geq 1 - \delta$ ($\delta < 1$) that emits with positive probability can be forced to violate.
\end{proposition}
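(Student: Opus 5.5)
The argument is an indistinguishability (two-world) construction by an adversary. Fix any procedure $\mathcal{A}$ that guarantees $\mathbb{P}(\forall t: A_t \leq \alpha) \geq 1-\delta$ under emit-only feedback and emits with positive probability. The key structural fact is that under emit-only feedback the learner's filtration on abstained steps contains only $(x_t, s_t)$ and never $r_t$. The adversary will therefore put its "trap" exactly on abstained steps, and then move it onto emitted steps once the shift occurs, so that the learner cannot see it coming.

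\textbf{Step 1 (a benign world where the learner emits).} Take a world $P_0$ with scores $s_t$ i.i.d.\ and risks $r_t \equiv 0$. Because $\mathcal{A}$ emits with positive probability, there is a first emission time $\tau$ with $\mathbb{P}_0(\tau < \infty) = p > 0$. I will also fix a finite horizon $T$ with $\mathbb{P}_0(\tau \leq T) \geq p/2$.

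\textbf{Step 2 (an adversarial world identical up to the emission).} Define $P_1$ to agree with $P_0$ on the laws of $(x_t, s_t)$ and on all risks at abstained steps. At steps where the learner emits, $P_1$ sets $r_t = 1$; this is an adaptive adversary, which the "arbitrary shift" clause permits, and $q_t$ is allowed to depend on $\mathcal{G}_t$. Under emit-only feedback, the learner's observations up to and including the decision at $\tau$ have the same law under $P_0$ and $P_1$, because it has only seen $r$'s at previous emissions and there are none before $\tau$. Hence $\mathbb{P}_1(\tau \leq T) = \mathbb{P}_0(\tau \leq T) \geq p/2$. On that event, $A_\tau = r_\tau = 1 > \alpha$, since the emitted substream consists only of step $\tau$ at that moment. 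So $\mathbb{P}_1(\exists t: A_t > \alpha) \geq p/2$.

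\textbf{Step 3 (forcing a violation beyond $\delta$).} Step 2 only produces violation probability $p/2$, which need not exceed $\delta$. To amplify it, I would instead let the adversary set $r_t = 1$ on emitted steps with probability one, so that every emitted-risk average is $1$ whenever at least one emission occurs. Then $\mathbb{P}_1(\text{violation}) = \mathbb{P}_1(\text{ever emits}) = \mathbb{P}_0(\text{ever emits})$. The guarantee under $P_1$ forces $\mathbb{P}_0(\text{ever emits}) \leq \delta$. So the statement reads as: any valid procedure's emission probability is capped at $\delta$ uniformly over benign streams. It is therefore vacuous: a procedure that "emits with positive probability" beyond that level is forced to violate. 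I will phrase the conclusion as ``there exists a stream under which violation occurs with positive probability, and with probability $>\delta$ whenever emission probability exceeds $\delta$''. I expect the main obstacle to be this step, namely matching the proposition's loose quantifiers ("can be forced to violate") to a precise claim; I would state the precise version explicitly.

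\textbf{Step 4 (contrast with full feedback and ACI).} Under full feedback, the adversary's trap on abstained steps would be revealed, which breaks the coupling in Step 2 and is consistent with Theorem~\ref{thm:s-e1}. ACI's update uses $\mathrm{eff}_t$, which depends only on emitted risks, so ACI falls under emit-only feedback for every $\gamma$. Its failure is therefore a consequence of the feedback model and not of the choice of step size.
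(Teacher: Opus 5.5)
Your Steps 1--3 prove the statement and follow the paper's argument. The emit decision at $t$ is $\mathcal{G}_t$-measurable, so an adversary that sets $r_t=1$ on emission makes $A_t=1$ from the first emission on. The violation probability therefore equals the probability of ever emitting. The benign-world coupling is a harmless addition: it shows the cap holds uniformly over the $(x,s)$ laws. Your explicit conclusion is more accurate than the paper's closing remark that the procedure ``must emit nothing'': validity forces $\mathbb{P}(\text{ever emit})\leq\delta$, so a genuine breach of the guarantee requires emission probability above $\delta$.

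Two small points on Steps 1--3. First, Step 3 amplifies nothing. Step 2 already sets $r_t=1$ at every emission, and the factor $p/2$ came only from truncating at $T$, so you can work with $\{\tau<\infty\}$ directly. Second, your opening description of a ``trap on abstained steps'' does not match the construction, which leaves abstained-step risks untouched.

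The real error is Step 4, which is false for your own construction.
\begin{itemize}
\item In $P_1$ the abstained-step risks equal those of $P_0$, and every step before $\tau$ is abstained.
\item So a full-feedback learner sees exactly the same data as under $P_0$, up to and including its decision at $\tau$.
\item Hence $\mathbb{P}_1(\tau<\infty)=\mathbb{P}_0(\tau<\infty)$ and $A_\tau=1$ go through verbatim.
\item The adversary $q_t=E_t$ is admissible whenever $E_t$ is $\mathcal{G}_t$-measurable, and that holds under either feedback model.
\end{itemize}
Nothing in your argument, or in the paper's three-line proof, uses emit-only feedback. The construction therefore cannot show that the obstruction is a property of the feedback model. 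What full feedback buys in Theorem~\ref{thm:s-e1} is a confidence sequence for the past average conditional risk $M_t$ at a fixed threshold. It does not protect the realized $A_t$ against an adversary that adapts to the emit decision. Either drop Step 4 or restate it to say the obstruction holds under any feedback model against such an adversary.
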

\begin{proof}
The emit decision at $t$ is $\mathcal{G}_t$-measurable and cannot depend on $r_t$. Let the adversary set $r_t = 1$ on the first step the procedure emits: then $A = 1 > \alpha$, with probability equal to the procedure's probability of ever emitting. To keep the guarantee the procedure must emit nothing.
\end{proof}

ACI updates on $\mathrm{eff}_t = r_t\,\mathbb{1}\{\mathrm{emit}_t\}$ (emit-only feedback), so it can learn a region's risk only by emitting into it; Proposition~\ref{prop:s-e3} shows this exploration cost is charged as violations under shift---a model-level reason, complementing the feasibility-level Proposition~\ref{prop:impossibility}, that ACI cannot restore the emitted target. The escape is full feedback.

\section{Additional Analyses (Audited $M$, Disjoint-Split Fusion, Robustness)}
\label{app:round2}

This section supplies four supporting analyses recomputed from the cached per-example (score, risk) data with the paper's own certification library (no new model inference): the audited essential supremum $M$ behind Proposition~\ref{prop:impossibility}, a disjoint-split re-evaluation of score fusion, the $\delta$-sensitivity of the certified counts, and per-task and multi-seed breakdowns of the phase diagram. Figure~\ref{fig:phase} first visualizes the paper's two headline results.

\begin{figure}[t]
\centering
\includegraphics[width=0.94\textwidth]{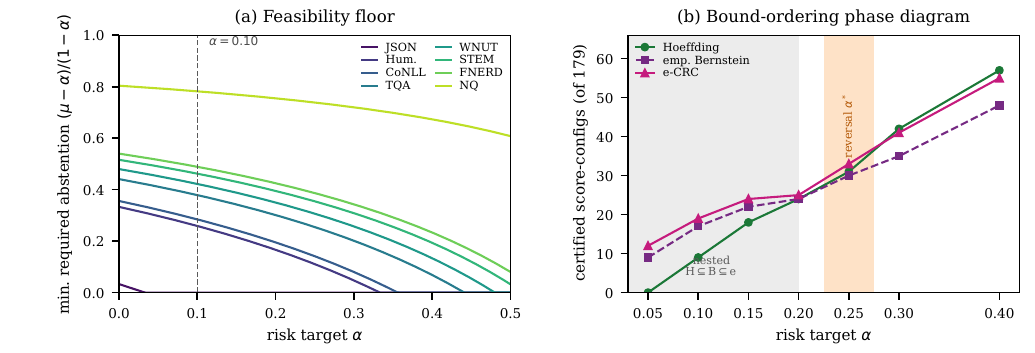}
\caption{The two headline results of the main text, visualized. \emph{(a)} Minimum required abstention $(\mu-\alpha)/(1-\alpha)$ vs.\ risk target $\alpha$, one curve per dataset at its measured base risk $\mu$ (main-text Table~\ref{tab:base_risk}). A curve reaches zero exactly at $\alpha=\mu$; every point to its left is provably infeasible without abstaining at least that much (main-text Proposition~\ref{prop:impossibility}). At the headline target $\alpha=0.10$ (dashed), seven of eight datasets sit strictly above zero abstention, which is why strict-target certification is rare. \emph{(b)} Number of certified score-configurations (of 179) for each bound as $\alpha$ relaxes. Below $\alpha\approx0.20$ (grey) the three certified sets are strictly nested $\Lambda^*_{\mathrm{Hoeff}}\subseteq\Lambda^*_{\mathrm{Bern}}\subseteq\Lambda^*_{\mathrm{e\text{-}CRC}}$ (main-text Proposition~\ref{prop:ordering}); past the reversal band (orange) Hoeffding overtakes empirical Bernstein, the empirical signature of the closed-form frontier $\alpha^*(n,\delta)$ (main-text Proposition~\ref{prop:phase}).}
\label{fig:phase}
\end{figure}

\subsection{Audited Essential Supremum}
\label{app:audit_M}

Proposition~\ref{prop:impossibility}'s sharpened floor $(\mu-\alpha)/(M-\alpha)$ improves on the conservative $(\mu-\alpha)/(1-\alpha)$ only when $M = \operatorname{ess\,sup} R < 1$. We audit $M$ the way the proposition says it should be deployed (the empirical sample maximum of the per-example risk, pooled over calibration and test, averaged over the five sampling-based models). Table~\ref{tab:audit_M} reports the result: on every NER/QA/CLS dataset the audited $M = 1.00$ \emph{exactly}, because a complete per-example miss (entity-$F_1 = 0$, or a wrong exact-match/label) occurs in every model$\times$dataset cell (on $18$--$53\%$ of examples). The sharpened floor therefore coincides numerically with the conservative $M{=}1$ floor for all NER/QA/CLS rows, and moves no tabulated $\mathrm{LB}$ value in this paper. Only JSON field-$F_1$ has $M<1$ (audited $\hat{M} \leq 0.33$), but there $\mu < \alpha$ already, so the floor is $0$ under either formula. The sharpening is a real, provable generalization (it would bind on losses whose per-example value cannot reach $1$, e.g.\ macro-$F_1$, $1-\mathrm{BLEU}$, $1-\mathrm{ROUGE}$), but it does not tighten any number on our grid.

\begin{table}[t]
\centering
\small
\begin{tabular}{llrrr}
\toprule
Dataset & Task & $\bar\mu$ & $\hat{M}$ & floor gap @ $\alpha{=}0.1$ \\
\midrule
CoNLL & NER  & 0.356 & 1.00 & 0.0 pp \\
FNERD & NER  & 0.540 & 1.00 & 0.0 pp \\
WNUT  & NER  & 0.480 & 1.00 & 0.0 pp \\
TQA   & QA   & 0.441 & 1.00 & 0.0 pp \\
NQ    & QA   & 0.804 & 1.00 & 0.0 pp \\
STEM  & CLS  & 0.516 & 1.00 & 0.0 pp \\
Hum.  & CLS  & 0.333 & 1.00 & 0.0 pp \\
JSON  & JSON & 0.033 & 0.13$^\ast$ & 0.0 pp$^\dagger$ \\
\bottomrule
\end{tabular}
\caption{Audited $\hat{M} = \operatorname{ess\,sup} R$ per dataset (averaged over the five sampling-based models). ``floor gap'' $= (\mu-\alpha)/(M-\alpha) - (\mu-\alpha)/(1-\alpha)$ at $\alpha = 0.10$. $^\ast$JSON field-$F_1$ audits to $\hat{M}$ between $0.00$ and $0.33$ across models. $^\dagger$For JSON $\mu < \alpha$, so both floors are $0$. On all NER/QA/CLS datasets $\hat{M} = 1.00$, so the sharpened and conservative floors coincide.}
\label{tab:audit_M}
\end{table}

\subsection{Disjoint-Split Score Fusion}
\label{app:fusion_disjoint}

The main-text fusion numbers (\S\ref{sec:fusion}) are recomputed here under a protocol that respects Proposition~\ref{prop:fusion}'s independence requirement, replacing an earlier in-sample estimate. Using a three-way disjoint split (the seed-42 test set is byte-identical to the headline grid; the $60\%$ calibration block is halved into a fusion-training slice, on which the logistic-regression fusion weights are fit, and a CRC-calibration slice, on which thresholds are selected by the paper's own fixed-sequence scan), AUROC is read out-of-sample and individual-score baselines are certified on the identical CRC-calibration/test split. Over $136$ model$\times$dataset$\times\alpha$ rows ($34$ configurations averaged over $\alpha$):
\begin{itemize}\setlength{\itemsep}{.1em}
\item \textbf{AUROC.} Fusion improves on $23/34$ configurations ($68\%$; mean $+0.004$, median $+0.006$, max $+0.062$), with a worst-case loss of $-0.154$ (Qwen2.5-14B/MMLU-Humanities, where the small fusion-training slice overfits). This is roughly $7\times$ smaller than the in-sample $+0.026$ previously reported.
\item \textbf{Certification.} Fused vs.\ best-individual certified counts on the same split are $0$ vs.\ $7$ (Hoeffding), $0$ vs.\ $8$ (Bernstein), $2$ vs.\ $13$ (e-CRC). \emph{Fusion never certifies a configuration that no individual score already certifies} (0 fused-only certifications at every bound); in the two e-CRC cases where both certify, fusion lowers test abstention by $\approx 1.9$pp.
\end{itemize}
\textbf{Caveat.} The CRC-calibration slice here is half the headline calibration budget, since the released per-example files contain no separate training pool; both arms are equally starved, so this is a conservative test, not a proof that fusion cannot help at the full budget. It is, however, the disjointness-respecting estimate, and it shows fusion's benefit on this data is on ranking quality, not on certified-set size.

\subsection{Sensitivity to the Confidence Level}
\label{app:delta}

Re-running the seed-42 grid at $\delta \in \{0.05, 0.10, 0.20\}$ gives certified counts (Hoeffding/Bernstein/e-CRC) of $48/64/74$, $51/72/80$ (paper), and $58/79/89$. Two findings are worth stating. First, at $\delta = 0.20$, e-CRC produces $3$ violations of $89$ certifications ($3.4\%$, well within the $20\%$ budget: Qwen2.5-3B/JSON/\{token-margin, NLL\}/$\alpha{=}0.05$ and Qwen2.5-72B/CoNLL/token-margin/$\alpha{=}0.20$), positive evidence that the guarantee is not vacuously conservative. Second, nesting $\Lambda^*_{\mathrm{Hoeff}} \subseteq \Lambda^*_{\mathrm{Bern}}$ holds without exception at $\delta = 0.10$ and $0.20$, and fails in exactly $1$ of $2{,}148$ method-pairs at $\delta = 0.05$ (Ministral-8B/MMLU-Humanities/self-consistency/$\alpha{=}0.20$), consistent with Proposition~\ref{prop:ordering}'s $\delta$-dependent finite-sample threshold ($L = \log(2/\delta)$). The paper's ``nested without exception at $\alpha \leq 0.20$'' claim is thus a $\delta = 0.10$ statement.

\subsection{Per-Task Certified Counts and Multi-Seed Reversal}
\label{app:pertask}

Table~\ref{tab:pertask} breaks the seed-42 strict-target counts down by task (cells sum to the headline $51/72/80$). Essentially all strict-target certification comes from JSON alone ($47$ of $51$ Hoeffding certifications); NER/QA/CLS certify near-zero at $\alpha \leq 0.10$, exactly as the floor predicts.

\begin{table}[t]
\centering
\small
\setlength{\tabcolsep}{4pt}
\begin{tabular}{lcccc}
\toprule
Task ($n$/$\alpha$) & $\alpha{=}0.05$ & $\alpha{=}0.10$ & $\alpha{=}0.15$ & $\alpha{=}0.20$ \\
\midrule
NER ($81$)  & 0/0/0 & 0/0/0 & 0/2/2 & 2/2/2 \\
QA  ($40$)  & 0/0/0 & 0/0/0 & 0/1/1 & 1/1/1 \\
CLS ($36$)  & 0/0/0 & 0/0/0 & 0/0/1 & 1/1/1 \\
JSON ($22$) & 0/9/12 & 9/17/19 & 18/19/20 & 20/20/21 \\
\bottomrule
\end{tabular}
\caption{Per-task certified counts at seed 42 (cells are Hoeffding/Bernstein/e-CRC). Column-summed totals are $51/72/80$.}
\label{tab:pertask}
\end{table}

The relaxed-target Hoeffding--Bernstein reversal is seed-stable across seeds 42--46. At $\alpha = 0.30$, Hoeffding $>$ Bernstein at all five seeds (Hoeffding $42/42/42/41/43$ vs.\ Bernstein $35/38/38/36/39$); at $\alpha = 0.40$, all five ($57/56/59/57/57$ vs.\ $48/50/49/50/50$); at $\alpha = 0.25$, nearest the reversal onset, four of five seeds show Hoeffding $>$ Bernstein and one seed ties, consistent with the frontier's $n$-dependence (Proposition~\ref{prop:phase}). The seed-42 relaxed counts reported in the main text ($42$, $57$ Hoeffding) sit squarely inside these ranges.

\paragraph{Numerical demonstration.} Code \texttt{theory/saver.py}, \texttt{run\_saver.py}; results \texttt{saver\_results.json}. On the identical cross-dataset transfers of Table~\ref{tab:aci_xfer} (best score, static-CRC warm start, $\delta = 0.1$, $\rho = 25$), at $\alpha = 0.10$ (16 model$\times$transfer runs): static CRC 14/16 violations, ACI ($\gamma = 0.01$) 14/16, SAVeR-FF (full feedback) \textbf{0/16} at mean abstention $1.00$, since every transfer is infeasible (floors $0.33$--$0.83$ by Proposition~\ref{prop:impossibility}); the emit-only betting monitor reproduces Proposition~\ref{prop:s-e3}, paying an exploration cost and violating 14/16. Non-vacuity on feasible streams: qwen2.5-14b/JSON at $\alpha = 0.10$ emits 388/600 (65\%) at risk $0.000$ (0 violations); qwen2.5-14b/CoNLL at $\alpha = 0.40$ emits 3338/3453 (97\%) at risk $0.258$; qwen2.5-72b/CoNLL at $\alpha = 0.40$ emits 3316/3453 (96\%) at risk $0.242$; ministral-8b/CoNLL$\to$WNUT at $\alpha = 0.40$ (floor $0.06$) emits 718 (abstention $0.44$) at 0 violations. Abstention $\geq$ floor on all runs. A synthetic coverage check with known drifting/adversarial $q_t$ (2000 reps, $T = 1500$) gives empirical miscoverage $\leq 0.011 \ll \delta$ (drift-up $0.003$, sinusoid $0.009$, adversarial-burst $0.000$, iid-mid $0.011$), confirming Theorem~\ref{thm:s-e1}. \emph{Limitation.} The rigorous guarantee uses the Hoeffding mixture ($\sigma^2 = 1/4$) and over-abstains on low-risk streams; a variance-adaptive uniform boundary is a drop-in tightening we do not re-derive here to avoid an invalid guarantee.

\end{document}